\documentclass{article} \PassOptionsToPackage{authoryear,round}{natbib}
\usepackage[preprint]{neurips_2026}   
\usepackage[utf8]{inputenc}
\usepackage[T1]{fontenc}
\usepackage{microtype}
\usepackage{dcolumn}
\usepackage{xcolor}
\usepackage{hyperref}                    \usepackage{url}

\usepackage{amsmath,amsfonts,bm}

\def\eqref#1{equation~\ref{#1}}
\def\Eqref#1{Equation~\ref{#1}}
\def\plaineqref#1{\ref{#1}}

\def\1{\bm{1}}

\def\eps{{\epsilon}}

\def\vmu{{\bm{\mu}}}

\def\va{{\bm{a}}}
\def\vb{{\bm{b}}}
\def\vc{{\bm{c}}}

\def\ve{{\bm{e}}}

\def\vh{{\bm{h}}}

\def\vq{{\bm{q}}}

\def\vs{{\bm{s}}}

\def\vu{{\bm{u}}}
\def\vv{{\bm{v}}}

\def\vx{{\bm{x}}}

\def\vz{{\bm{z}}}

\def\mI{{\bm{I}}}

\DeclareMathAlphabet{\mathsfit}{\encodingdefault}{\sfdefault}{m}{sl}
\SetMathAlphabet{\mathsfit}{bold}{\encodingdefault}{\sfdefault}{bx}{n}

\def\gA{{\mathcal{A}}}
\def\gB{{\mathcal{B}}}

\def\gD{{\mathcal{D}}}

\def\gL{{\mathcal{L}}}

\def\gN{{\mathcal{N}}}

\def\gS{{\mathcal{S}}}

\newcommand{\pdata}{p_{\rm{data}}}

\newcommand{\E}{\mathbb{E}}

\newcommand{\R}{\mathbb{R}}

\usepackage[textsize=tiny]{todonotes}
\usepackage{colortbl}
\usepackage{xspace}
\usepackage{graphicx}
\usepackage{subcaption}
\usepackage{wrapfig}
\usepackage{tabularx}
\usepackage{makecell}
\usepackage{algorithm,algorithmic}
\usepackage{placeins}
\usepackage{paralist}
\usepackage{amsmath,amssymb,bm}   \usepackage{multirow}
\usepackage{ntheorem}
\usepackage{enumitem}
\usepackage{changepage}           \usepackage{textcase}
\usepackage{booktabs}
\usepackage{fancybox}
\usepackage{mathtools}
\usepackage{pifont}
\usepackage{comment}
\usepackage{tikz}
\usetikzlibrary{positioning,calc,arrows.meta,backgrounds,fit,shapes.geometric,decorations.pathreplacing,patterns}

\newtheorem*{proof}{Proof}
\newtheorem{cor}{Corollary}

\graphicspath{{imgs/}{figs/}}

\providecommand{\yes}{\textcolor{red!75!black}{\ding{51}}}
\providecommand{\no}{\textcolor{green!55!black}{\ding{55}}}

\newlength\savewidth
\providecommand\shline{\noalign{\global\savewidth\arrayrulewidth
  \global\arrayrulewidth 1pt}\hline\noalign{\global\arrayrulewidth\savewidth}}
\providecommand\midline{\noalign{\global\savewidth\arrayrulewidth
  \global\arrayrulewidth 0.5pt}\hline\noalign{\global\arrayrulewidth\savewidth}}
\providecommand{\tablestyle}[2]{\setlength{\tabcolsep}{#1}\renewcommand{\arraystretch}{#2}\centering\footnotesize}
\newcolumntype{x}[1]{>{\centering\arraybackslash}p{#1pt}}

\def\x{\vx}                              \def\z{\vz}                              \def\h{\vh}                              \def\e{\ve}                              \def\veps{{\bm{\epsilon}}}               \def\eps{\veps}
\def\vsigma{{\bm{\sigma}}}             
\let\accentcedilla\c    \protected\def\c{\ifmmode\vc\else\expandafter\accentcedilla\fi}
\let\accentcaron\v      \protected\def\v{\ifmmode\vv\else\expandafter\accentcaron\fi}
\let\accentbreve\u      \protected\def\u{\ifmmode\vu\else\expandafter\accentbreve\fi}
\let\accenttabbing\a    \protected\def\a{\ifmmode\va\else\expandafter\accenttabbing\fi}
\let\accentbarunder\b   \protected\def\b{\ifmmode\vb\else\expandafter\accentbarunder\fi}

\def\I{\mI}                              
\def\N{\gN}                              \def\D{\gD}                                                            \def\B{\gB}                              \def\augs{\gA}                           \let\symbolsection\S    \protected\def\S{\ifmmode\gS\else\symbolsection\fi}   \let\symbolLslash\L     \protected\def\L{\ifmmode\gL\else\symbolLslash\fi}    
                      
\def\xt{\widetilde{\vx}}               \def\xh{\widehat{\vx}}                           
\def\estd{\bar{\ve}}                     \def\estdt{\widetilde{\ve}}              

\def\sg{\text{\bf sg}}

\def\cls{\texttt{[CLS]}}

\def\genb{g_\phi \circ \pi_\h \circ f_\theta}

\newcommand{\pnoise}{p_{\mathrm{noise}}}                                                                                                                   \newcommand{\logit}{\operatorname{logit}}                
                                                                                                                                                                                                                                                                                                     
                                                                                       \newcommand{\priornet}{e_{\omega}}                                                                                                                             
                                                                                                   \newcommand{\Lcbc}{\gL_{\mathrm{CBC}}}

\newcommand{\methodname}{SCION\xspace}

\newcommand{\methodlongnamebf}{\textbf{S}elf-\textbf{C}onditioned Generation on Self-Supervised Representat\textbf{ion}\xspace}

\newcommand{\dispersive}{Dispersive\xspace}

\definecolor{repC}{HTML}{1F5FA8}   \definecolor{repF}{HTML}{DCE9F8}
\definecolor{genC}{HTML}{6D3F9E}   \definecolor{genF}{HTML}{EAE1F4}
\definecolor{priC}{HTML}{A9720B}   \definecolor{priF}{HTML}{FAEBCE}
\definecolor{conC}{HTML}{B23A62}   \definecolor{conF}{HTML}{F7E3EA}
\definecolor{shC}{HTML}{37505C}    \definecolor{shF}{HTML}{EDF1F3}
\definecolor{scC}{HTML}{0FA84E}    \definecolor{scF}{HTML}{E3F7EA}
\definecolor{nulC}{HTML}{6E6E6E}   \definecolor{nulF}{HTML}{ECECEC}

\tikzset{
  mod/.style 2 args={
      rounded corners=2pt, draw=#1, fill=#2, line width=0.7pt,
      minimum height=0.64cm, inner xsep=3pt, inner ysep=2pt, align=center, text=black},
  loss/.style={
      rounded corners=3pt, draw=#1, fill=#1!10, line width=0.9pt,
      minimum height=0.80cm, minimum width=1.60cm, inner xsep=3pt, align=center},
  img/.style={inner sep=0pt, outer sep=0pt, draw=black!30, line width=0.45pt,
      rounded corners=0.085cm},
  oplus/.style={circle, draw=black!55, fill=white, line width=0.6pt,
      inner sep=0pt, minimum size=0.30cm, font=\scriptsize},
  tag/.style={font=\scriptsize, inner sep=1pt},
  ttag/.style={font=\scriptsize, inner sep=1.2pt, fill=white, fill opacity=0.92,
      text opacity=1, rounded corners=1pt},
  badge/.style={circle, draw=#1, fill=white, line width=0.7pt, inner sep=0pt,
      minimum size=0.30cm, font=\scriptsize\bfseries, text=#1},
  flow/.style={->, draw=#1, line width=0.9pt},
  nograd/.style={->, draw=#1, line width=0.8pt},
  scline/.style={->, draw=scC, line width=1.15pt, dash pattern=on 2.2pt off 1.6pt},
  panel/.style={font=\footnotesize\bfseries, inner sep=0pt, anchor=west},
}

\newcommand{\lossicon}[6][]{  \node[loss=#3, minimum width=1.10cm, minimum height=0.62cm, inner sep=0pt, #1]
       (#2) at (#4,#5) {};
  \begin{scope}[shift={(#4,#5)}, line width=0.42pt,
                >={Stealth[length=1.0mm,width=0.95mm]}]#6\end{scope}}

\newcommand{\vstrip}[5]{  \node[inner sep=0pt, outer sep=0pt, minimum width=0.72cm, minimum height=0.26cm]
       (#1) at (#2,#3) {};
  \foreach \i in {0,1,2,3}{
    \draw[draw=#4, fill=#5, line width=0.45pt]
      ($(#1.south west)+(\i*0.18,0)$) rectangle ($(#1.south west)+(\i*0.18+0.18,0.26)$);}}
\newcommand{\onestrip}[5]{  \vstrip{#1}{#2}{#3}{#4}{#5}  \foreach \i in {0,1,2,3}{
    \node[font=\tiny, inner sep=0pt, text=#4]
      at ($(#1.south west)+(\i*0.18+0.09,0.13)$) {1};}}
\newcommand{\sstrip}[5]{  \node[inner sep=0pt, outer sep=0pt, minimum width=0.54cm, minimum height=0.26cm]
       (#1) at (#2,#3) {};
  \foreach \i in {0,1,2}{
    \draw[draw=#4, fill=#5, line width=0.45pt]
      ($(#1.south west)+(\i*0.18,0)$) rectangle ($(#1.south west)+(\i*0.18+0.18,0.26)$);}}

\newcommand{\sstack}[5]{  \begin{scope}[shift={(-0.17,0.14)}]\sstrip{#1c}{#2}{#3}{#4}{#5}\end{scope}  \begin{scope}[shift={(-0.085,0.07)}]\sstrip{#1b}{#2}{#3}{#4}{#5}\end{scope}  \sstrip{#1}{#2}{#3}{#4}{#5}}
\colorlet{eb1}{priC!25}\colorlet{eb2}{priC!55}\colorlet{eb3}{priC!12}\colorlet{eb4}{priC!40}
\colorlet{ep1}{black!62}\colorlet{ep2}{black!18}\colorlet{ep3}{black!80}\colorlet{ep4}{black!34}
\colorlet{et1}{eb1!50!ep1}\colorlet{et2}{eb2!50!ep2}
\colorlet{et3}{eb3!50!ep3}\colorlet{et4}{eb4!50!ep4}
\newcommand{\cellstrip}[7]{  \node[inner sep=0pt, outer sep=0pt, minimum width=0.72cm, minimum height=0.26cm]
       (#1) at (#2,#3) {};}
\newcommand{\estrip}[3]{  \cellstrip{#1}{#2}{#3}{}{}{}{}  \foreach \i/\f in {0/eb1,1/eb2,2/eb3,3/eb4}{
    \draw[draw=priC, fill=\f, line width=0.45pt]
      ($(#1.south west)+(\i*0.18,0)$) rectangle ($(#1.south west)+(\i*0.18+0.18,0.26)$);}}
\newcommand{\nstrip}[3]{  \cellstrip{#1}{#2}{#3}{}{}{}{}  \foreach \i/\f in {0/ep1,1/ep2,2/ep3,3/ep4}{
    \draw[draw=nulC, fill=\f, line width=0.45pt]
      ($(#1.south west)+(\i*0.18,0)$) rectangle ($(#1.south west)+(\i*0.18+0.18,0.26)$);}}
\newcommand{\mixstrip}[3]{  \cellstrip{#1}{#2}{#3}{}{}{}{}  \foreach \i/\f in {0/et1,1/et2,2/et3,3/et4}{
    \draw[draw=priC, fill=\f, line width=0.45pt]
      ($(#1.south west)+(\i*0.18,0)$) rectangle ($(#1.south west)+(\i*0.18+0.18,0.26)$);}}
\newcommand{\sgcut}[3]{  \fill[white] ($(#2,#3)+(-0.04,-0.09)$) rectangle ($(#2,#3)+(0.04,0.09)$);
  \draw[#1, line width=0.9pt] ($(#2,#3)+(-0.13,-0.05)$) -- ($(#2,#3)+(0.05,0.13)$);
  \draw[#1, line width=0.9pt] ($(#2,#3)+(-0.05,-0.13)$) -- ($(#2,#3)+(0.13,0.05)$);}

\newcommand{\gstrip}[5]{  \node[inner sep=0pt, outer sep=0pt, minimum width=0.72cm, minimum height=0.54cm]
       (#1) at (#2,#3) {};
  \foreach \i in {0,1,2,3}{\foreach \j in {0,1,2}{
    \draw[draw=#4, fill=#5, line width=0.45pt]
      ($(#1.south west)+(\i*0.18,\j*0.18)$) rectangle
      ($(#1.south west)+(\i*0.18+0.18,\j*0.18+0.18)$);}}}

\newcommand{\legend}[4]{  \draw[draw=#3, fill=#3!25, line width=0.6pt, rounded corners=1pt]
       (#1,#2-0.09) rectangle (#1+0.28,#2+0.09);
  \node[tag, anchor=west] at (#1+0.36,#2) {#4};}

\title{From Pixels, Without Pre-training:\\ Joint Generative and
Self-Supervised \\Representation Learning in One Model}

\author{
Vicente Balmaseda\textsuperscript{1}\thanks{\url{https://vicentebalmaseda.com}} \hspace{2em}
Ching-Long Lin\textsuperscript{2} \hspace{2em}
Tianbao Yang\textsuperscript{1} \\[1em]
\begin{tabular}[t]{@{}c@{}}
\textsuperscript{1}Department of Computer Science and Engineering \\
Texas A\&M University \\
College Station, TX, USA \\
\texttt{\{vibalcam,tianbao-yang\}@tamu.edu}
\end{tabular}
\hspace{1em}
\begin{tabular}[t]{@{}c@{}}
\textsuperscript{2}Mechanical Engineering \\
University of Iowa \\
Iowa City, IA, USA \\
\texttt{ching-long-lin@uiowa.edu}
\end{tabular}
}

\begin{document}

\maketitle

\begin{abstract}

Strong image generation models today are conditioned on class labels, aligned to frozen pretrained encoders, or built on separately trained autoencoders or tokenizers. While effective, the cost is that generation becomes contingent on supervision and pretraining: labels must be annotated, and encoders, autoencoders, and tokenizers must be pretrained for the target domain. This paper studies {\bf joint generative and self-supervised representation learning in one model}, such that generation is self-conditioned without any labels or pretrained models. This is challenging since the two objectives are mismatched: contrastive learning consumes clean augmented views and keeps coarse, invariant semantics, while flow matching consumes noisy images and must preserve the fine detail and spatial layout that contrastive learning discards. We propose a framework Self-conditioned Generation on Self-supervised representation (SCION), whose core is a single pixel-space encoder conditioned on both the diffusion timestep and a conditioning embedding. The encoder serves two roles that differ in what that embedding is. For representation learning, the conditioning embedding is one learned global vector shared by all images, and the encoder's \cls{} token yields the semantic embedding trained by the contrastive loss. For generative training, the conditioning embedding is the \cls{} embedding of the image itself, and the encoder's patch tokens pass through a decoder to predict the image. To sample without a reference image at inference, we also learn a prior over the embedding in the same run. With gradient-norm balancing and stop-gradient tricks, all losses are jointly optimized in a single run. The resulting method is fully self-supervised, with no pretrained model and no labels.
On ImageNet $256\times256$, with the same JiT-B/16 recipe and without representation guidance, \methodname reaches $8.92$ FID,
surpassing the class-unconditional baselines iREPA, which aligns to a pretrained DINOv2 ($46.44$), and RCG, which conditions on it ($14.27$).
Under the JiT-L/16 recipe, \methodname{} achieves the best pixel-space class-unconditional FID of $5.89$ and $3.47$ without and with representation guidance, outperforming RCG with the ADM recipe ($6.24$).

\end{abstract}

\suppressfloats[t]

\section{Introduction}
\label{sec:intro}

Diffusion and flow matching models have achieved remarkable success in image generation \citep{Ho2020,Song2021,lipman_flow_2023,liu2023flow}. Yet many strong systems rely on class labels for semantic conditioning \citep{dhariwal2021diffusion,li2025jit}, frozen pretrained encoders for conditioning or representation alignment \citep{NEURIPS2024_e304d374,yuRepresentationAlignmentGeneration2025,singh_what_2025}, or separately trained autoencoders for latent-space generation \citep{rombachHighResolutionImageSynthesis2022}. These dependencies add annotation or pretraining costs and complicate extension to domains where suitable labels or pretrained models are unavailable.   A central challenge is therefore to enable \emph{semantically self-conditioned generation}, where the semantic representations that condition generation are learned within the diffusion model itself, jointly with the generative process.

We pursue semantic self-conditioned generation under four ``S'' requirements: \emph{self-contained}, without pretrained components, including autoencoders and tokenizers; \emph{self-supervised}, without label supervision anywhere in the pipeline, including that inherited through pretrained components; \emph{single-model}, with representations produced by the generator's own network; and \emph{single-stage}, with all components learned jointly from scratch. Existing methods (cf. Table~\ref{tab:deps}) address different aspects of this goal but satisfy these requirements only partially~\citep{li2025jit,yuRepresentationAlignmentGeneration2025,NEURIPS2024_e304d374,52203mage,preechakul2022diffusion,hudson2024soda,wang2025diffusedisperseimagegeneration}. These comparisons highlight the remaining challenge: achieving semantic self-conditioning under all four ``S'' requirements, with useful representations and unconditional synthesis learned together within one model.

\begin{wrapfigure}{r}{0.44\linewidth}
    \vspace{-2pt}
    \centering
    \includegraphics[width=\linewidth]{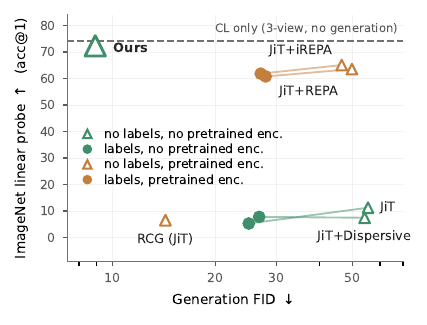}
    \caption{\textbf{Generation and Representation capabilities} of different models trained on ImageNet $256{\times}256$ with $400$
    epochs. FID under the ADM evaluation protocol. A segment joins a method's
    class-conditional and class-unconditional variants, so its length is what the class label buys.     }
    \label{fig:frontier}
    \vspace{-10pt}
\end{wrapfigure}
A natural approach to semantically self-conditioned generation is to learn the conditioning representations through contrastive self-supervised learning, which extracts semantics directly from unlabeled images \citep{chen_simple_2020,he2020momentumcontrastunsupervisedvisual,yuan_provable_2022}. However, integrating self-supervised representation learning with flow matching in one network requires reconciling their conflicting input and output requirements. Contrastive learning operates on clean augmented views and encourages compact semantic embeddings that suppress variations in appearance and spatial layout. Flow matching instead operates on noisy images and requires dense predictions that recover fine visual detail and spatial structure. The shared network must therefore accommodate both clean and noisy inputs while supporting semantic invariance without discarding information needed for synthesis. Joint training further couples the generator to an evolving conditioning representation, and sampling requires obtaining conditioning embeddings without a reference image.

To address these challenges, we propose \methodlongnamebf{} (\methodname{}), a framework for semantically self-conditioned generation through joint generative and self-supervised representation learning. At its core is a single pixel-space transformer encoder conditioned on both a timestep and a conditioning embedding.  Our key idea is to use this encoder in two conditioning modes with different readouts. For representation learning, the encoder processes clean augmented views using a learned global conditioning vector shared across all images, and a contrastive objective trains its \cls{} embedding. For generative learning, the same encoder processes noisy images conditioned on the clean image's own \cls{} embedding, and a decoder maps its patch tokens to a clean-image prediction under a flow matching objective. This design keeps contrastive inputs clean while using spatial features for generation, allowing both objectives to train the shared encoder.

To coordinate joint optimization, we employ gradient-norm balancing and stop-gradient operations. We also train a prior over the learned embeddings in the same run, enabling generation without reference images. Randomly withholding the conditioning embedding during training also supports classifier-free guidance \citep{ho2022classifier}, with guidance supplied by learned representations rather than class labels. All components are trained jointly from scratch, without labels, pretrained teachers, latent autoencoders, or tokenizers. 
We evaluate on ImageNet at $256\times256$ against flow-based baselines matched on the JiT-B/16 recipe. \methodname{} reaches $8.92$ FID, improving by $6.1\times$ over the
strongest self-contained baseline ($54.14$), and outperforming methods that import pretrained semantics, either aligning with them (iREPA, $46.44$) or conditioning on them (RCG at matched training budget, $14.27$). Self-representation guidance further reduces \methodname{} to $4.20$ FID, outperforming representation-guided RCG ($5.01$) without importing a pretrained encoder. At larger scale, \methodname{} (JiT-L/16) reaches $5.89$ unguided FID, improving on the $6.24$ reported by pixel-space RCG with an ADM training recipe. As Figure~\ref{fig:frontier} shows, \methodname{}'s learned representation also remains competitive with contrastive-only training.

\providecommand{\cyes}{\textcolor{green!55!black}{\ding{51}}}  \providecommand{\cno}{\textcolor{red!75!black}{\ding{55}}}     \providecommand{\cna}{\textcolor{black!35}{--}}                

\begin{table}[t]
\caption{\textbf{Comparison with related works under the four S requirements.} Extended discussion on related works is presented in Appendix~\ref{app:related}.
\textbf{Self-contained} means no pretrained component; \textbf{self-supervised} means no label supervision anywhere
in the pipeline; \textbf{single-model} means that
the representation is produced by the generator's own network; and \textbf{single-stage}
means that all components are learned jointly from scratch. Capabilities report a learned and
evaluated \textbf{Repr}esentation and unconditional \textbf{Synth}esis with no source image to encode. All pipelines are as published, except REPA and \dispersive{}, which we combine with the pixel-space JiT recipe.}
\label{tab:deps}
\vspace{1pt}
\centering\footnotesize
\setlength{\tabcolsep}{3.4pt}
\begin{tabular}{l@{\hskip 9pt}cccc@{\hskip 11pt}cc}
\toprule
 & \multicolumn{4}{c}{\emph{four S requirements}} & \multicolumn{2}{c}{\emph{capabilities}} \\
\cmidrule(r{9pt}){2-5}\cmidrule{6-7}
Method & Self-cont. & Self-sup. & Single-mod. & Single-stage & Repr. & Synth. \\
\midrule
\multicolumn{7}{@{}l}{\emph{Pixel-space generation}} \\[-2pt]
JiT \citep{li2025jit} & \cyes & \cno & \cna & \cyes & \cno & \cyes \\
JiT $+$ REPA \citep{yuRepresentationAlignmentGeneration2025} & \cno & \cno & \cno & \cno & \cno & \cyes \\
JiT $+$ \dispersive{} \citep{wang2025diffusedisperseimagegeneration} & \cyes & \cno & \cyes & \cyes & \cno & \cyes \\
\addlinespace[2pt]
\multicolumn{7}{@{}l}{\emph{Semantic conditions}} \\[-2pt]
RCG (ADM) \citep{NEURIPS2024_e304d374} & \cno & \cyes & \cno & \cno & \cno & \cyes \\
\addlinespace[2pt]
\multicolumn{7}{@{}l}{\emph{Joint representation and generation}} \\[-2pt]
MAGE \citep{52203mage} & \cno & \cno & \cyes & \cno & \cyes & \cyes \\
DiffAE \citep{preechakul2022diffusion} & \cyes & \cyes & \cno & \cno & \cyes & \cyes \\
SODA \citep{hudson2024soda} & \cyes & \cyes & \cno & \cyes & \cyes & \cno \\
\midrule
\rowcolor{gray!12}
\methodname{} (ours) & \cyes & \cyes & \cyes & \cyes & \cyes & \cyes \\
\bottomrule
\end{tabular}
\end{table}

\section{Preliminaries}
\label{sec:prelim}

Let \( \D = \{ \x_i \}_{i=1}^N \) be a dataset of images sampled from an unknown data distribution \( \pdata \), which we want to model.
For contrastive learning, we let \( A_a, A_b \sim \augs \) denote  two random augmentations from a predefined augmentation policy \( \augs \) (e.g., random crop, horizontal flip). For flow matching, we sample a timestep \( t \in [0,1] \) from a logit-normal distribution \( \mathrm{logit}(t) \sim \N(\mu_t, \sigma_t^2) \), and a noise vector \( \eps \sim \pnoise \) where  \( \pnoise = \N(0, \I) \) is a standard Gaussian distribution. Let $\|\cdot\|$ denote the Euclidean norm and let \( \mathrm{sim}(\u,\v) = \u^{\!\top}\v/(\|\u\|\,\|\v\|) \) denote the  cosine similarity between two vectors.

\subsection{Flow matching}
\label{sec:prelim:fm}

Flow matching \citep{lipman_flow_2023,liu2023flow,DBLP:conf/iclr/AlbergoV23} is a generative modeling framework that learns a continuous-time vector field whose integral curves transport a noise distribution \( \pnoise \) to the data distribution \( \pdata \).
Its trajectories follow the ordinary differential equation (ODE) \( d\x_t/dt = \v_t(\x_t) \) started at \( \x_0 = \eps \sim \pnoise \), so that \( \x_1 \sim \pdata \): the velocity field is the time derivative of this \emph{marginal} path, and solving the ODE is what generates samples.

The marginal field is defined from a \emph{conditional} path between a noise sample and a data sample, which we denote by \( \xt_t \).
Most works choose the conditional path according to the linear interpolant:
\begin{equation*}
    \xt_t = t\,\x + (1-t)\,\eps, \qquad \x \sim \pdata, \quad \eps \sim \pnoise.
\end{equation*}
Differentiating it gives the conditional velocity \( \v_t(\xt_t \mid \x, \eps) = d\xt_t/dt = \x - \eps \), which depends on the endpoints rather than on the current state alone. Following \citet{li2025jit}, instead of directly parameterizing the velocity field \( \v_t(\xt_t \mid \x, \eps) \), we use the fact that \( \v_t(\xt_t \mid \x, \eps) = (\x - \xt_t)/(1-t) \) and predict the clean image \( \x \) instead. Thus, we parameterize a neural network \( F_\theta(\xt_t, t) \) that predicts the clean image \( \x \) from the noised input \( \xt_t \) and time \( t \). Then, the Conditional Flow Matching (CFM) objective is given by:
\begin{equation}\label{eq:fm_loss}
    \L_{\mathrm{CFM}} = \E_{t, \eps \sim \pnoise, \x \sim \pdata} \left[ \left\| \frac{F_\theta(\xt_t, t) - \xt_t}{1 - t} - \frac{\x - \xt_t}{1 - t} \right\|^2 \right].
\end{equation}
Generation is done by approximately integrating the ODE through a numerical ODE solver and the trained model. For example, Euler's method with \( M \) steps gives
\begin{equation}
    \x_0 = \eps \sim \pnoise, \quad\quad
    \x_1 = \x_0 + \int_0^1 \v_t(\x_t) \, dt
    \approx \x_0 + \frac{1}{M} \sum_{i=0}^{M-1} \frac{F_\theta(\x_{i/M}, \frac{i}{M}) - \x_{i/M}}{1 - \frac{i}{M}}.
\end{equation}

\subsection{Contrastive Self-supervised Representation Learning}
\label{sec:prelim:cl}

For self-supervised representation learning, we consider contrastive objectives, which  learn a representation by pulling together strong augmented views of the same image
and pushing apart views of different images
\citep{chen_simple_2020,he2020momentumcontrastunsupervisedvisual}.
Thus, for a given image \( \x_i \), we take its two views \( A_a(\x_i) \) and \( A_b(\x_i) \) as a positive pair, and define its negatives as the set of augmented views of all other images in the dataset, \( \S^- = \{ A(\x) : \x \in \D \setminus \{\x_i\},\, A \sim \augs \} \).

Let \( f_\theta \) be an encoder that maps an image to a \( d \)-dimensional representation vector, and \( h_\psi \) a projection head that maps the representation to a space where the contrastive loss is applied.
We denote by \( E(\x) = h_\psi(f_\theta(\x)) \) the representation for contrastive learning.

Conventional approaches of contrastive learning either require a large batch size~\citep{chen_simple_2020} or maintain a large memory bank~\citep{he2020momentumcontrastunsupervisedvisual}. To avoid these, we follow the recent advances in optimizing a global contrastive objective~\citep{yuan_provable_2022,wei2024fastclip0}:
\begin{align}\label{eq:cl}
    \L_{\mathrm{GCL}} &\;=\; \E_{\x_i \sim \D,\; A_a, A_b \sim \augs}
    \Big[\; \ell\big(E(A_a(\x_i)),\, E(A_b(\x_i))\big) \;\Big], \\
    \ell(\u, \v) &\;=\; \tau \log Z(\u,\v),
    \qquad
    Z(\u,\v) \;=\; \E_{\x' \sim \S^-}\!\left[ \exp\!\big((\mathrm{sim}(\u, E(\x'))-  \mathrm{sim}(\u, \v))/\tau\big) \right]\nonumber.
\end{align}
To optimize this compositional objective, we use the SogCLR algorithm
\citep{yuan_provable_2022}. The key idea is to maintain a moving-average estimator of $Z(\x_i, \cdot)$ for each anchor image $\x_i$ using only mini-batch data and use it to compute a stochastic gradient estimator. We refer readers to \citet{yuan_provable_2022} for more details of the algorithm.

\section{Self-conditioned Generation on Self-Supervised Representation}
\label{sec:method}

In this section, we present the proposed framework SCION, including  the model structure and the training objective and strategies.

\subsection{The Unified Model}
\label{sec:method:arch}

Since we model images directly in pixel space, we adopt the ``Just Image Transformer'' (JiT) architecture for image generation~\citep{li2025jit}. To integrate generation with representation learning, we partition JiT into two components: an encoder \(f_\theta\) and a decoder \(g_\phi\), parameterized by \(\theta\) and \(\phi\), respectively. Both \(f_\theta\) and \(g_\phi\) consist of multiple Transformer blocks. To process an image, we split it into non-overlapping patches, linearly embed the patches into a sequence of \(T\) tokens, and prepend a learnable \cls{} token.

\noindent{\bf One encoder, two purposes.}
The encoder processes two types of inputs for two distinct purposes. For flow matching, it encodes a noise-diffused image \(\xt_t = t\x + (1-t)\eps\). For contrastive learning, it encodes a strongly augmented, non-diffused image \(A(\x)\). To accommodate both types of inputs, we condition the encoder \(f_\theta\) on the timestep \(t\in[0,1]\), where \(t=1\) corresponds to a non-diffused image. To enable semantically self-conditioned generation, we additionally condition the encoder on an embedding vector \(\c\), whose construction is described below. We therefore write the encoder as \(f_\theta(\z\mid t,\c)\), where \(\z\) denotes either a clean image $\x$, its augmented image \(A(\x)\) or a noise-diffused image \(\xt_t\).

The encoder output also serves two different purposes. For contrastive learning, it should provide a compact semantic representation that is invariant to variations in appearance and spatial layout. For generation, it must preserve fine-grained visual details and spatial structure. To accommodate these different requirements within a single encoder, we use two readouts from its output: the \cls{} token embedding \(\e\in\R^d\) used for representation learning, and the patch-token embeddings \(\h\in\R^{T\times d}\) used for generation. For convenience, we introduce two readout operators, \(\pi_\e\) and \(\pi_\h\), which extract the \cls{} token embedding and the patch-token embeddings from \(f_\theta(\z\mid t,\c)\), respectively.

\noindent{\bf Two conditioning modes for two purposes.}
For generative modeling, we would like the encoder to process the noise-diffused image \(\xt_t\) conditioned on a semantic representation \(\e(\x)\) of the corresponding clean image, namely, \(f_\theta(\xt_t\mid t,\e(\x))\). However, the representation \(\e(\x)\) must itself be produced by the same encoder, creating an apparent circular dependency: the representation used to condition the encoder cannot simultaneously depend on that conditioned forward pass.

To break this dependency, we introduce a learnable null embedding \(\e_\varnothing\). The semantic representation of a clean image is first obtained through a null conditioned encoding step, \(\e(\x)=\pi_\e\circ f_\theta(\x\mid 1,\e_\varnothing)\). Likewise, for a strongly augmented view \(A(\x)\), its representation for contrastive learning is computed as \(\e(A(\x))=\pi_\e\circ f_\theta(A(\x)\mid 1,\e_\varnothing)\). The resulting clean-image representation \(\e(\x)\) is then used as the semantic conditioning signal when processing the corresponding noise-diffused image, yielding \(\h_t=\pi_\h\circ f_\theta(\xt_t\mid t,\e(\x))\). Thus, the same encoder operates in two conditioning modes: null-conditioned encoding for representation learning and semantic-conditioned encoding for generation.

\paragraph{The decoder.}
Given a noise-diffused image \(\xt_t\), the decoder takes the encoder's patch-token embeddings \(\h_t=\pi_\h\circ f_\theta(\xt_t\mid t,\e(\x))\) as input. As with the encoder, we condition the decoder on both the timestep \(t\) and the clean-image representation \(\e(\x)\). The prediction is therefore given by \(g_\phi(\h_t\mid t,\e(\x))\).

In summary, the unified model uses the same encoder in two complementary ways. For representation learning, an augmented image is encoded with null conditioning as \(\e(A(\x))=\pi_\e\circ f_\theta(A(\x)\mid 1,\e_\varnothing)\), and its \cls{} token embedding is used for contrastive learning. For generation, the clean-image representation \(\e(\x)=\pi_\e\circ f_\theta(\x\mid 1,\e_\varnothing)\) is used to condition both the encoder and decoder when processing \(\xt_t\), with \(\h_t=\pi_\h\circ f_\theta(\xt_t\mid t,\e(\x))\) and prediction \(g_\phi(\h_t\mid t,\e(\x))\). In this way, a single encoder simultaneously supports semantic representation learning and semantically conditioned image generation. To implement conditioning in both the encoder and decoder, we adopt adaLN-Zero~\citep{peebles_scalable_2023}. The proposed model structure of SCION is presented in Figure~\ref{fig:method}.

\begin{figure}[t]
    \centering
    \resizebox{0.88\linewidth}{!}{\input{figs/method_diagram.tex}}
    \caption{The network structure and information flow of \textbf{\methodname{}.} One shared encoder \( f_\theta \) is run twice per step.
    \textcircled{\footnotesize 1} The \emph{representation pass} (blue) encodes every view clean at
    \( t{=}1 \) under the learned null \( \e_\varnothing \), and \( h_\psi \) maps the \cls{} readout
    to \( \z \) for \( \L_{\mathrm{CL}} \). \textcircled{\footnotesize 2} The \emph{generative pass} (violet)
    noises the original view and conditions on the detached \( \sg(\e) \) the first pass produced, the
    self-conditioning loop, drawn as the one dashed teal edge. Its patch tokens go to \( g_\phi \) for
    \( \L_{\mathrm{CFM}} \). The prior \( e_\omega \) (amber) regresses the standardized embedding
    \( \estd \). A \( /\!\!/ \) across an edge marks a stop-gradient. The auxiliary
    \( \L_{\mathrm{CBC}} \) term of Equation~\plaineqref{eq:joint_loss} is not drawn.}
    \label{fig:method}
\end{figure}

\subsection{The Joint Training Objective}
\label{sec:method:cond}
Below, we present each objective component and then combine them for joint training.

\noindent{\bf Contrastive loss.}
For a clean image $\x$, we generate two strong augmented copies $A_1(\x)$ and $A_2(\x)$.  All 3 views are encoded by conditioning on \( t = 1, \c=\e_\varnothing \). Following the practice of contrastive learning, the encoder's outputs are projected with a parameterized projection head $h_\psi$ to produce the embeddings for contrastive learning, i.e., $\z^0=h_\psi(f_\theta(\x \mid 1, \e_\varnothing))$, $\z^1=h_\psi(f_\theta(A_1(\x) \mid 1, \e_\varnothing))$,  and $\z^2=h_\psi(f_\theta(A_2(\x) \mid 1, \e_\varnothing))$. Each of the three views will  serve as a contrastive anchor for defining the contrastive loss:
$$\L_{\mathrm{GCL}}(\theta,\psi; \x) = \E_{A_1,A_2}\left[\frac{1}{6} \sum_{\substack{a,b=0, a \neq b}}^{2} \ell\big(\z^{a}, \z^{b}\big)\right].$$

\noindent{\bf FM loss.}
For the clean image $\x$, we sample a noise-diffused image $\xt_t$ and define the CFM loss as:
 $$ \L_{\mathrm{CFM}}(\theta,\phi; \x) =\E_{t, \eps} \Big\| \frac{g_\phi(\pi_\h\circ f_\theta(\xt_t\mid t,\e(\x))\mid t,\e(\x)) - \xt_t}{1-t} - \frac{\x - \xt_t}{1-t} \Big\|^2.$$

\noindent{\bf Learning the Representation Prior.}
During training, the generative model is conditioned on the representation $\e(\x)$ of the input image. When generating new images at inference time, however, this representation is unavailable. We therefore learn a prior over representations using a flow matching model $e_\omega$. For each image $\x$, we construct a standardized embedding $\bar\e(\x)=(\e(\x)-\vmu)/\vsigma$ and its noisy counterpart $\tilde\e_{t'}(\x)=t'\bar\e(\x)+(1-t')\eps'$, where $\vmu$ and $\vsigma$ denote the embedding mean and standard deviation vectors, division is elementwise, and $t'\in[0,1]$ and $\eps'$ are a sampled timestep and noise vector, respectively. Given $\tilde\e_{t'}(\x)$ and $t'$, the model $e_\omega$ predicts the clean standardized embedding $\bar\e(\x)$. Following  CFM, we define the representation prior FM loss as
\[
\L_{\mathrm{prior}}(\omega;\x)
=
\E_{t',\eps'}
\left\|
\frac{e_\omega(\tilde\e_{t'}(\x),t')-\tilde\e_{t'}(\x)}{1-t'}
-
\frac{\bar\e(\x)-\tilde\e_{t'}(\x)}{1-t'}
\right\|^2.
\]

\noindent{\bf Conditioned-Branch Consistency Loss.}
The shared encoder operates in two conditioning modes: null conditioning for representation learning and representation conditioning for generation. Even for a clean image, these modes may produce different outputs, namely $f_\theta(\x\mid 1,\e_\varnothing)$ and $f_\theta(\x\mid 1,\e(\x))$. Moreover, when processing a noisy image $\xt_t$ for generation, the encoder should produce representations that remain semantically consistent with the clean-image representation $\e(\x)=\pi_\e \circ f_\theta(\x\mid 1,\e_\varnothing)$. Such consistency keeps the \cls{} token informative and, as the patch tokens attend to it in the encoder, shapes the patch features fed to the decoder. Since the preceding losses do not explicitly enforce this consistency, we introduce the conditioned-branch consistency (CBC) loss:
\[
\L_{\mathrm{CBC}}(\theta;\x)
=
-\E_{t,\eps}\,
\mathrm{sim}\big(
\pi_\e \circ f_\theta(\xt_t\mid t,\e(\x)),\e(\x)
\big).
\]

Combining these losses yields the final joint objective:
\begin{equation}\label{eq:joint_loss}
\min_{\theta,\phi,\psi,\omega}
\E_{\x\sim\D}\Big[
\lambda\L_{\mathrm{GCL}}(\theta,\psi;\x)
+\L_{\mathrm{CFM}}(\theta,\phi;\x)
+\L_{\mathrm{prior}}(\omega;\x)
+\eta\L_{\mathrm{CBC}}(\theta;\x)
\Big],
\end{equation}
where $\eta$ is a fixed hyper-parameter and $\lambda$ is to balance between the global contrastive objective and the CFM objective.

 \begin{figure}[t]
    \centering
    \begin{minipage}[c]{0.35\linewidth}\centering
        \input{figs/inference_a.tex}
    \end{minipage}\hfill
    \begin{minipage}[c]{0.61\linewidth}\centering
        \input{figs/inference_b.tex}
    \end{minipage}
    \caption{\textbf{How \methodname{} is used.}
    \textbf{(a)} A clean image is encoded at $t{=}1$ with the learned null conditioning
    $\e_\varnothing$, reading out the representation $\e$.
    \textbf{(b)} To generate, the prior $e_\omega$ is integrated from noise to draw a
    standardized embedding $\estd$, which is unstandardized to an $\e$ that conditions the
    image ODE at every step. No label enters either path.}
    \label{fig:teaser}
\end{figure}

\subsection{Training Strategies and Sampling}
\label{sec:method:weighting}
To facilitate the training of the unified model by optimizing the joint objective, we employ the following two strategies.

\noindent{\bf Stop Gradients.}
For CFM, the gradient equivalence between conditional and marginal velocity regression established by \citet{lipman_flow_2023} relies on target probability paths and velocity fields that are independent of the learnable parameters. In our setting, the conditioning representation $\e(\x)$ depends on $\theta$, so differentiating through it introduces an additional gradient path that is not covered by the standard equivalence argument (Appendix~\ref{app:grad_equiv}). We therefore apply a stop-gradient operation to $\e(\x)$ in the CFM loss, treating it as fixed conditioning during each update. We likewise detach $\e(\x)$ in $\L_{\mathrm{prior}}(\omega;\x)$ and $\L_{\mathrm{CBC}}(\theta;\x)$. Consequently, only the contrastive loss backpropagates through the computation of $\e(\x)$, while the CFM and CBC losses update the shared encoder through the $f_\theta$.

\noindent{\bf Gradient Balancing.}
The losses $\L_{\mathrm{CFM}}$ and $\L_{\mathrm{GCL}}$ operate on different scales: the former is a weighted squared error in pixel space, whereas the latter is a contrastive objective governed by the temperature $\tau$. Consequently, the trade-off coefficient $\lambda$ has no natural a priori scale. In our measurements, their gradient norms at the shared encoder differ by more than an order of magnitude,
with $\|\nabla \L_{\mathrm{CL}}\| \approx 40\, \|\nabla \L_{\mathrm{CFM}}\|$.
This ratio also varies with augmentation strength, learning rate, and batch size, making a fixed $\lambda$ difficult to transfer across settings. We therefore adapt $\lambda$ to balance the weighted gradient norms at a subset of shared encoder parameters $\varsigma\subseteq\theta$:
\begin{equation}\label{eq:lambda}
    \lambda = \mathrm{clamp}\!\left(
    \frac{\mathrm{EMA}\big(\|\nabla_\varsigma \L_{\mathrm{CFM}}\|\big) + \xi}
         {\mathrm{EMA}\big(\|\nabla_\varsigma \L_{\mathrm{GCL}}\|\big) + \xi},
    \lambda_{\min},\,\lambda_{\max}
    \right).
\end{equation}
Here, $\mathrm{EMA}$ denotes an exponential moving average, $\xi=10^{-8}$ provides numerical stability, and $[\lambda_{\min},\lambda_{\max}]=[10^{-3},1]$. We recompute the gradient norms and update their EMAs every $10$ training steps, with the clamped ratio tracking changes in the gradient imbalance. We choose $\varsigma$ as the parameters of the encoder's last MLP layer, the deepest layer whose parameters receive gradients from both losses. At this layer, CFM backpropagates through the generative pass's non-\cls{} tokens, while the contrastive loss backpropagates through the clean-view pass's \cls{} tokens. This separation allows their gradient norms to be recovered during the ordinary backward pass with additional computation confined to local gradient products and norms at this layer (Appendix~\ref{app:balancer_probe}).
In Appendix~\ref{app:algorithms}, we present Algorithm~\ref{alg:balancer} for the complete balancing procedure, and Algorithm~\ref{alg:train_step} for the joint training.

\paragraph{Sampling at Inference.}
At inference time, we first generate a standardized embedding by integrating the representation prior's ODE, defined by the model $e_\omega$, starting from embedding-space noise $\eps'$. We then reverse the standardization to obtain the representation $\e$ and use it to condition the image-generation ODE, defined by the model $F_{\theta, \phi}=g_\phi \circ \pi_\h \circ f_\theta$ (see Figure~\ref{fig:teaser}(b)).

\paragraph{Representation Guidance.} To enable classifier-free guidance \citep{ho2022classifier} with representation conditioning at inference, we replace the conditioning representation with a second learned null embedding $\c_\varnothing$ with probability $0.1$ during training. This allows the same network to learn both representation-conditioned and unconditional generation. At inference time, we integrate the image-generation ODE using the guided velocity field. Let the predicted velocity be \( \v_{\theta,\phi}(\x_t,t\mid\c) = (F_{\theta,\phi}(\x_t, t \mid \c) - \x_t)/(1-t) \):
\begin{equation}\label{eq:cfg}
    \v_{\mathrm{cfg}}
    =
    \v_{\theta,\phi}(\x_t, t \mid \c_\varnothing)
    +
    s\left(
    \v_{\theta,\phi}(\x_t, t\mid  \e)
    -
    \v_{\theta,\phi}(\x_t, t \mid  \c_\varnothing)
    \right),
    \qquad s\geq 1,
\end{equation}
where $s$ is the guidance scale. It is notable that the two null embeddings serve distinct purposes: $\c_\varnothing$ enables unconditional generation, whereas $\e_\varnothing$ allows the encoder to extract $\e(\x)$ without conditioning on the representation itself.

\section{Experiments}
\label{sec:experiments}

This section evaluates \methodname{} in three settings. We first conduct a controlled
comparison with baselines that share the same backbone, training budget, and flow-matching
recipe, isolating the contribution of the proposed framework. We then compare with
published unconditional pixel-space systems using their reported scores, providing a
system-level reference. Finally, we ablate \methodname{} to identify the contribution of each component and compare its representation with that of a contrastive-only model.

\textbf{Baselines}.
We compare \methodname{} with vanilla flow matching (FM) and four methods built on it:
\dispersive{} Loss \citep{wang2025diffusedisperseimagegeneration}, REPA
\citep{yuRepresentationAlignmentGeneration2025}, iREPA \citep{singh_what_2025}, and RCG
\citep{NEURIPS2024_e304d374}.
Thus, the baselines span FM alone, self-contained feature
regularization (\dispersive{}), external-feature alignment (REPA/iREPA), and external semantic conditioning (RCG).

\textbf{Setup}.
All controlled methods use the same JiT-B/16 backbone \citep{li2025jit} with attention residuals (Appendix~\ref{app:arch}), flow matching recipe, ImageNet-1K ($256 \times 256$) training data \citep{deng2009imagenet,imagenet15russakovsky}, and a total training budget of $400$ epochs.
The baselines that import semantics use a frozen DINOv2 ViT-B encoder
\citep{oquab2023dinov2}, pretrained on external data.
REPA and iREPA retain their original split of $4$ encoder and $8$ decoder blocks, whereas
\dispersive{} Loss and \methodname{} use $8$ encoder and $4$ decoder blocks.
We split RCG's training budget between its generator and prior, training for $200$ epochs each.
We also evaluate RCG with both its original prior architecture and ours: the original architecture performs better,
so we report that result.

\textbf{Evaluation}.
We evaluate generation with FID and Inception Score (IS) under the ADM protocol
\citep{dhariwal2021diffusion} on $50$K samples. We evaluate representations using ImageNet
top-1 accuracy (\emph{Acc.}) and the average accuracy across eight datasets for transfer learning (\emph{transfer-8}; datasets listed in Appendix~\ref{app:evaluation}). For both probes, we train a logistic-regression classifier on the
frozen representations produced by the model.
For the baselines, the representation is the mean pooling of patch token embeddings at the encoder--decoder split, or after the last block for vanilla JiT and RCG;
for \methodname{}, we use its \cls{} token at its own split.
Training and evaluation details are in Appendix~\ref{app:training}, and architecture details in Appendix~\ref{app:arch}.

\subsection{Controlled matched-recipe comparison}
\label{sec:exp_main}

Table~\ref{tab:main_unguided} compares all methods under the matched recipe.
\methodname{} achieves the best performance on both generation and representation capabilities.
On unconditional generation, it achieves $8.92$ FID, greatly improving on the best self-contained
baseline ($54.14$). It also improves on the strongest alignment baseline, iREPA ($46.44$), and
on RCG ($14.27$), both of which import semantics from a pretrained DINOv2 encoder. Moreover,
\methodname{} surpasses the class-conditional baselines, whose best unguided FID is $24.93$,
without using labels or an external encoder.
Notably, after only $200$ epochs, \methodname{} already leads every matched baseline and surpasses
RCG's $400$-epoch-equivalent checkpoint on all four metrics, despite RCG's additional frozen
prior-training stage (Table~\ref{tab:controlled_e200}, Appendix~\ref{app:controlled_e200}).

External DINOv2 semantics improve the other baselines in different ways:
iREPA transfers part of them into the generator's own features, reaching $65.11\%$ Acc.\ at
$46.44$ FID, whereas RCG improves generation to $14.27$ FID but reaches only $6.54\%$ Acc.\ and
$30.28\%$ transfer-8.
\methodname{} achieves $72.35\%$ Acc.\ and $79.89\%$ transfer-8, exceeding every baseline.
Together with Table~\ref{tab:deps}, these results show that \methodname{} is the only controlled
method satisfying all four ``\emph{S}'' requirements while also achieving the best generation and
representation results.

\begin{table*}[t]
\begin{minipage}[t]{0.54\textwidth}
\centering
\caption{\textbf{Controlled comparison on ImageNet $256\times256$.} All rows use the JiT-B/16
recipe for $400$ epochs.}
\label{tab:main_unguided}
\scriptsize\setlength{\tabcolsep}{2pt}
\resizebox{\linewidth}{!}{\begin{tabular}{lcrrrr}
\toprule
& & \multicolumn{2}{c}{\emph{generation}} & \multicolumn{2}{c}{\emph{representation}} \\
\cmidrule(r){3-4}\cmidrule(l){5-6}
Method & pretrained enc. & FID$\downarrow$ & IS$\uparrow$ & Acc.$\uparrow$ & transfer-8$\uparrow$ \\
\midrule
\multicolumn{6}{l}{\emph{class-conditional}} \\
JiT                        & \no  & 24.93 & 63.6 & 5.30 & 30.10 \\
JiT $+$ \dispersive{}     & \no  & 26.73 & 58.7 & 7.79 & 20.72 \\
JiT $+$ REPA               & \yes & 27.92 & 58.0 & 60.80 & 69.01 \\
JiT $+$ iREPA              & \yes & 27.01 & 62.1 & 61.86 & 74.14 \\
\midrule
\multicolumn{6}{l}{\emph{class-unconditional}} \\
JiT                        & \no  & 55.40 & 19.2 & 11.27 & 26.84 \\
JiT $+$ \dispersive{}     & \no  & 54.14 & 19.7 & 7.48 & 22.33 \\
JiT $+$ REPA               & \yes & 49.73 & 24.6 & 63.57 & 72.62 \\
JiT $+$ iREPA              & \yes & 46.44 & 26.2 & 65.11 & 75.43 \\
RCG (JiT)                  & \yes & 14.27 & 93.4 & 6.54 & 30.28 \\
\midrule
\rowcolor{gray!12}
\methodname{}       & \no & \textbf{8.92} & \textbf{118.4} & \textbf{72.35} & \textbf{79.89} \\
\bottomrule
\end{tabular}}
\end{minipage}\hfill
\begin{minipage}[t]{0.435\textwidth}
\centering
\caption{\textbf{Comparison with published unconditional pixel-space systems on ImageNet $256\times256$.} $\dagger$ retrieves real
images; RG denotes representation guidance; SCION (B/16) uses 800 epochs on JiT-B/16 and SCION (L/16) uses 400 epochs on JiT-L/16.}
\vspace*{-0.05in}
\label{tab:system_pixel}
\scriptsize\setlength{\tabcolsep}{2pt}
\resizebox{\linewidth}{!}{\begin{tabular}{lcrrr}
\toprule
Method & pretrained enc. & params & FID$\downarrow$ & IS$\uparrow$ \\
\midrule
BigGAN       & \no & $\sim$70M & 38.61 & 24.7 \\
ADM          & \no & 554M & 26.21 & 39.7 \\
RCDM$^\dagger$ & \yes\ DINO & --- & 19.0 & 51.9 \\
IC-GAN$^\dagger$ & \yes\ SwAV & $\sim$75M & 15.6 & 59.0 \\
RCG (ADM)    & \yes\ MoCo v3 & $\sim$617M & 6.24 & 136.9 \\
\midrule
\rowcolor{gray!12}
\methodname{} (B/16)                        & \no & 231M & 7.67 & 127.6 \\
\rowcolor{gray!12}
\methodname{} (B/16, RG)                  & \no & 231M & 3.51 & 178.7 \\
\rowcolor{gray!12}
\methodname{} (L/16)                       & \no & 559M & 5.89 & 151.6 \\
\rowcolor{gray!12}
\methodname{} (L/16, RG)                  & \no & 559M & \textbf{3.47} & \textbf{201.7} \\
\bottomrule
\end{tabular}}
\end{minipage}
\end{table*}

\textbf{Representation guidance.}
\methodname{}'s learned representation can also replace labels as a guidance signal
(Equation~\ref{eq:cfg}).
Table~\ref{tab:guided} compares \methodname{}'s generation performance under representation guidance.
\methodname{} reaches $4.20$ FID, improving on RCG's $5.01$ while conditioning on its own learned
representation and sampling it from a jointly learned prior.
The class-guided JiT and REPA rows are provided for reference: they use ground-truth labels at sampling time, which \methodname{} never observes.

\begin{table*}[t]
\begin{minipage}[t]{0.42\textwidth}
\centering
\caption{\textbf{Guided generation without label supervision.} Each row is evaluated at its
independently swept guidance scale. Rows in gray text use ground-truth class labels and are included for
reference.}
\label{tab:guided}
\scriptsize\setlength{\tabcolsep}{2pt}
\resizebox{\linewidth}{!}{\begin{tabular}{lcrr}
\toprule
Method & pretrained enc. & FID$\downarrow$ & IS$\uparrow$ \\
\midrule
\multicolumn{4}{l}{\emph{guided on a representation}} \\
\methodname{} (ours) & \no & \textbf{4.20} & 162.1 \\
RCG (JiT)            & \yes & 5.01 & \textbf{164.1} \\
\midrule
\multicolumn{4}{l}{\emph{\color{black!50}guided on a class label, for reference}} \\
\color{black!50}JiT          & \color{black!50}\no  & \color{black!50}3.94 & \color{black!50}266.7 \\
\color{black!50}JiT $+$ REPA & \color{black!50}\yes & \color{black!50}3.57 & \color{black!50}286.9 \\
\bottomrule
\end{tabular}}
\end{minipage}\hfill
\begin{minipage}[t]{0.55\textwidth}
\centering
\caption{\textbf{Building \methodname{} from unconditional flow matching.} Cumulative JiT-B/16
construction at a matched $200$-epoch budget. Each row adds the stated component. CL-only is a separately trained contrastive-only control and
has no generative objective.}
\label{tab:abl_construction}
\scriptsize\setlength{\tabcolsep}{2pt}
\resizebox{\linewidth}{!}{\begin{tabular}{lrrr}
\toprule
Components & FID$\downarrow$ & Acc.$\uparrow$ & transfer-8$\uparrow$ \\
\midrule
\color{black!55}CL only & \color{black!55}--- & \color{black!55}71.76 & \color{black!55}77.68 \\
\midrule
FM & 61.17 & 10.65 & 26.19 \\
$\quad+$ CL & 57.40 & 65.43 & 76.69 \\
$\quad+$ gradient-norm balancing & 54.79 & 65.49 & 75.95 \\
$\quad+$ self-conditioning $+$ prior & 11.66 & 70.97 & 78.86 \\
\rowcolor{gray!12}
$\quad+$ CBC \quad (\textbf{\methodname{}}) & \textbf{10.10} & \textbf{71.32} & \textbf{79.73} \\
\bottomrule
\end{tabular}}
\end{minipage}
\end{table*}

\subsection{System-level comparison}
\label{sec:exp_system}

Table~\ref{tab:system_pixel} compares \methodname{} at two different model scales with published unconditional pixel-space
systems: BigGAN~\citep{donahue2019large}, ADM~\citep{dhariwal2021diffusion},
RCDM~\citep{bordes2021high}, IC-GAN~\citep{casanova2021instance}, and
RCG~\citep{NEURIPS2024_e304d374}.  SCION (B/16) uses JiT-B/16 as the backbone, while SCION (L/16) uses JiT-L/16.
Unlike the controlled comparison, these reported results span different architectures,
training budgets, and protocols, and should therefore be read as a system-level reference.
At $231$M parameters, with self-representation guidance \methodname{} (B/16) reaches $3.51$ FID and $178.7$ IS after $800$ epochs, which are better than RCG (ADM), despite using less than half of RCG's $\sim$617M inference-time parameters and requiring no pretrained encoder. 
For \methodname{} (L/16) with $559$M parameters, which remains slightly smaller than RCG (ADM), we only train for $400$ epochs. Under self-representation guidance it reaches $3.47$ FID and $201.7$ IS, significantly outperforming every
previously published pixel-space system in the table. Moreover, representation guidance improves SCION over unguided variants.
Table~\ref{tab:system_full} in
Appendix~\ref{app:add_exps} provides the broader comparison including latent-space systems.

\subsection{Ablations}
\label{sec:exp_ablation}

To isolate the contribution of each component, we use the same JiT-B/16 recipe as the controlled
comparison and train every model for $200$ epochs.

\textbf{From flow matching to \methodname{}.}
We decompose \methodname{} by sequentially adding its components to unconditional flow matching.
Table~\ref{tab:abl_construction} shows that contrastive learning first makes the representation
useful, increasing Acc.\ from $10.65\%$ to $65.43\%$ and transfer-8 from $26.19\%$ to
$76.69\%$, while slightly improving FID from $61.17$ to $57.40$. Gradient-norm balancing further reduces FID to $54.79$ while maintaining representation
quality and avoiding
manual selection of the contrastive weight. Adding representation self-conditioning with the
jointly learned prior converts these representation gains into substantially better generation,
reducing FID to $11.66$. Finally, CBC completes \methodname{} and reaches $10.10$ FID, with
$71.32\%$ Acc.\ and $79.73\%$ transfer-8.
Appendix~\ref{app:detailed_ablations} provides a more detailed ladder.

\textbf{Comparison with contrastive-only training.}
Table~\ref{tab:abl_construction} also compares with a contrastive-only control using the same $8$-layer encoder, evaluated at $200$ epochs. Its learning rate, temperature, and weight decay are tuned independently for contrastive learning.
\methodname{} reaches $71.32\%$ Acc., close to the control's $71.76\%$, while
attaining higher transfer-8 accuracy ($79.73\%$ versus $77.68\%$). Thus, joint training retains
representation quality comparable to dedicated contrastive learning while also learning to
generate. Appendix~\ref{app:detailed_ablations} repeats this comparison at $200$, $400$, and $600$
epochs.

\textbf{Additional ablations: Qualitative results, and generator vs. prior.}
Appendix~\ref{app:qualitative} shows generations from real-image representations and interpolations
between them.
Appendix~\ref{app:prior} shows that conditioning on the prior instead of the encoder's representations raises precision and lowers recall.

\section{Conclusion}
We presented SCION, a framework that jointly learns image generation and semantic representations in a single pixel-space model, trained from scratch without labels or pretrained components. By integrating contrastive learning with flow matching, representation self-conditioning, and a jointly learned embedding prior, SCION supports both representation extraction and unconditional image generation. Experiments on ImageNet demonstrate substantial improvements over matched generative baselines while retaining representation quality competitive with dedicated contrastive learning. These results show that strong generation and useful representations can be learned together in a self-contained, self-supervised training process. Future work will explore downstream applications and extensions to domains where labels and pretrained models are scarce.

\subsection*{Ethics statement}

Like other image-generation methods, \methodname{} could be used to
produce misleading or harmful imagery. We consider the potential for
misuse of the models studied here to be limited by their current
image fidelity, $256\times256$ resolution, and class-unconditional
sampling without text-based or explicit identity control. These
limitations do not eliminate risk: generated images and learned
representations may reflect biases or sensitive content in the
training data, and we have not evaluated training-image memorization
or demographic fairness. By removing the need for labels and
pretrained components, the method also lowers the barrier to training
on new domains and could enable more capable models. Such extensions
would require assessment of risks appropriate to their capabilities
and intended use.

Training uses multiple NVIDIA H200 GPUs
(Appendix~\ref{app:training}), with an associated energy cost.
Our single-stage approach avoids separate pretraining of encoders,
autoencoders, or tokenizers, although we do not establish a reduction
in total compute or energy consumption relative to other approaches.

\bibliography{refs}

@Techreport{krizhevskyLearningMultipleLayers2009,
  author = {Krizhevsky, Alex and Hinton, Geoffrey},
  address = {Toronto, Ontario},
  institution = {University of Toronto},
  publisher = {Technical report, University of Toronto},
  title = {Learning multiple layers of features from tiny images},
  year = {2009},
  title_with_no_special_chars = {Learning multiple layers of features from tiny images},
  url = {https://www.cs.toronto.edu/~kriz/learning-features-2009-TR.pdf}
}

@inproceedings{schuhmann2022laion5b,
  author = {Schuhmann, Christoph and Beaumont, Romain and Vencu, Richard and Gordon, Cade and Wightman, Ross and Cherti, Mehdi and Coombes, Theo and Katta, Aarush and Mullis, Clayton and Wortsman, Mitchell and Schramowski, Patrick and Kundurthy, Srivatsa and Crowson, Katherine and Schmidt, Ludwig and Kaczmarczyk, Robert and Jitsev, Jenia},
  title = {{LAION-5B}: an open large-scale dataset for training next generation image-text models},
  year = {2022},
  isbn = {9781713871088},
  publisher = {Curran Associates Inc.},
  address = {Red Hook, NY, USA},
  booktitle = {Proceedings of the 36th International Conference on Neural Information Processing Systems},
  articleno = {1833},
  numpages = {17},
  location = {New Orleans, LA, USA},
  series = {NIPS '22}
}

@article{scikit-learn,
  title={Scikit-learn: Machine Learning in {P}ython},
  author={Pedregosa, F. and Varoquaux, G. and Gramfort, A. and Michel, V.
          and Thirion, B. and Grisel, O. and Blondel, M. and Prettenhofer, P.
          and Weiss, R. and Dubourg, V. and Vanderplas, J. and Passos, A. and
          Cournapeau, D. and Brucher, M. and Perrot, M. and Duchesnay, E.},
  journal={Journal of Machine Learning Research},
  volume={12},
  pages={2825--2830},
  year={2011}
}

@inproceedings{52203mage,
  title = {{{MAGE}}: {{Masked}} Generative Encoder to Unify Representation Learning and Image Synthesis},
  booktitle = {{{CVPR}}},
  author = {Li, Tianhong and Chang, Huiwen and Mishra, Shlok Kumar and Zhang, Han and Katabi, Dina and Krishnan, Dilip},
  year = 2023,
  url = {https://arxiv.org/pdf/2211.09117.pdf}
}

@inproceedings{bao2021beit,
  title={{BEiT}: {BERT} Pre-Training of Image Transformers},
  author={Hangbo Bao and Li Dong and Songhao Piao and Furu Wei},
  booktitle={International Conference on Learning Representations},
  year={2022},
  url={https://openreview.net/forum?id=p-BhZSz59o4}
}

@inproceedings{bao2022conditional,
  title={Why Are Conditional Generative Models Better Than Unconditional Ones?},
  author={Fan Bao and Chongxuan Li and Jiacheng Sun and Jun Zhu},
  booktitle={NeurIPS 2022 Workshop on Score-Based Methods},
  year={2022},
  url={https://openreview.net/forum?id=sbDyvrvvKn7}
}

@article{blattmann2022retrieval,
  title = {Retrieval-Augmented Diffusion Models},
  author = {Blattmann, Andreas and Rombach, Robin and Oktay, Kaan and M{\"u}ller, Jonas and Ommer, Bj{\"o}rn},
  year = 2022,
  journal = {NeurIPS},
  volume = {35},
  pages = {15309--15324}
}

@article{bordes2021high,
  title={High Fidelity Visualization of What Your Self-Supervised Representation Knows About},
  author={Florian Bordes and Randall Balestriero and Pascal Vincent},
  journal={Transactions on Machine Learning Research},
  issn={2835-8856},
  year={2022},
  url={https://openreview.net/forum?id=urfWb7VjmL},
  note={}
}

@inproceedings{bossard14,
  title = {Food-101 -- Mining Discriminative Components with Random Forests},
  booktitle = {European Conference on Computer Vision},
  author = {Bossard, Lukas and Guillaumin, Matthieu and Van Gool, Luc},
  year = 2014
}

@inproceedings{brock2019large,
  title = {Large Scale {{GAN}} Training for High Fidelity Natural Image Synthesis},
  booktitle = {{{ICLR}}},
  author = {Brock, Andrew and Donahue, Jeff and Simonyan, Karen},
  year = 2019
}

@inproceedings{caron2021emerging,
  title = {Emerging Properties in Self-Supervised Vision {{Transformers}}},
  booktitle = {{{ICCV}}},
  author = {Caron, Mathilde and Touvron, Hugo and Misra, Ishan and J{\'e}gou, Herv{\'e} and Mairal, Julien and Bojanowski, Piotr and Joulin, Armand},
  year = 2021,
  pages = {9650--9660}
}

@article{casanova2021instance,
  title = {Instance-Conditioned {GAN}},
  author = {Casanova, Arantxa and Careil, Marlene and Verbeek, Jakob and Drozdzal, Michal and Romero Soriano, Adriana},
  year = 2021,
  journal = {NeurIPS},
  volume = {34},
  pages = {27517--27529}
}

@inproceedings{Chang2022,
  title = {{{MaskGIT}}: {{Masked}} Generative Image {{Transformer}}},
  booktitle = {{{CVPR}}},
  author = {Chang, Huiwen and Zhang, Han and Jiang, Lu and Liu, Ce and Freeman, William T},
  year = 2022
}

@inproceedings{chen_simple_2020,
  title = {A Simple Framework for Contrastive Learning of Visual Representations},
  booktitle = {Proceedings of the 37th International Conference on Machine Learning, {{ICML}} 2020, 13-18 July 2020, Virtual Event},
  author = {Chen, Ting and Kornblith, Simon and Norouzi, Mohammad and Hinton, Geoffrey E.},
  year = 2020,
  series = {Proceedings of Machine Learning Research},
  pages = {1597--1607},
  publisher = {PMLR},
  url = {http://proceedings.mlr.press/v119/chen20j.html}
}

@article{chen2025pixelflow,
  title = {{{PixelFlow}}: {{Pixel-space}} Generative Models with Flow},
  author = {Chen, Shoufa and Ge, Chongjian and Zhang, Shilong and Sun, Peize and Luo, Ping},
  year = 2025,
  journal = {arXiv:2504.07963},
  eprint = {2504.07963},
  archiveprefix = {arXiv}
}

@misc{chenBigSelfSupervisedModels2020,
  title = {Big {{Self-Supervised Models}} Are {{Strong Semi-Supervised Learners}}},
  author = {Chen, Ting and Kornblith, Simon and Swersky, Kevin and Norouzi, Mohammad and Hinton, Geoffrey},
  year = 2020,
  month = oct,
  number = {arXiv:2006.10029},
  eprint = {2006.10029},
  primaryclass = {cs, stat},
  publisher = {arXiv},
  url = {http://arxiv.org/abs/2006.10029},
  urldate = {2023-09-26},
  archiveprefix = {arXiv}
}

@inproceedings{chenDeconstructingDenoisingDiffusion2024,
  title = {Deconstructing Denoising Diffusion Models for Self-Supervised Learning},
  booktitle = {International Conference on Learning Representations},
  author = {Chen, Xinlei and Liu, Zhuang and Xie, Saining and He, Kaiming},
  year = 2025,
  volume = {2025},
  pages = {55458--55472},
  url = {https://proceedings.iclr.cc/paper_files/paper/2025/file/8b52c29d1a77e3fcc6c8f90fa0ac0e8e-Paper-Conference.pdf}
}

@inproceedings{cimpoi14describing,
  author={Cimpoi, Mircea and Maji, Subhransu and Kokkinos, Iasonas and Mohamed, Sammy and Vedaldi, Andrea},
  booktitle={2014 IEEE Conference on Computer Vision and Pattern Recognition},
  title={Describing Textures in the Wild},
  year={2014},
  volume={},
  number={},
  pages={3606-3613},
  doi={10.1109/CVPR.2014.461}}

@misc{dao_flow_2023,
  title = {Flow {{Matching}} in {{Latent Space}}},
  author = {Dao, Quan and Phung, Hao and Nguyen, Binh and Tran, Anh},
  year = 2023,
  month = jul,
  doi = {10.48550/arXiv.2307.08698},
  url = {http://arxiv.org/abs/2307.08698},
  urldate = {2025-10-18}
}

@inproceedings{DBLP:conf/iclr/AlbergoV23,
  title = {Building Normalizing Flows with Stochastic Interpolants},
  booktitle = {The Eleventh International Conference on Learning Representations, {{ICLR}} 2023, Kigali, Rwanda, May 1-5, 2023},
  author = {Albergo, Michael S. and {Vanden-Eijnden}, Eric},
  year = 2023,
  publisher = {OpenReview.net},
  url = {https://openreview.net/forum?id=li7qeBbCR1t},
  bibsource = {dblp computer science bibliography, https://dblp.org}
}

@inproceedings{deng2009imagenet,
  title = {{{ImageNet}}: {{A}} Large-Scale Hierarchical Image Database},
  booktitle = {{{CVPR}}},
  author = {Deng, Jia and Dong, Wei and Socher, Richard and Li, Li-Jia and Li, Kai and {Fei-Fei}, Li},
  year = 2009
}

@inproceedings{dhariwal2021diffusion,
  title = {Diffusion Models Beat {{GANs}} on Image Synthesis},
  booktitle = {{{NeurIPS}}},
  author = {Dhariwal, Prafulla and Nichol, Alexander},
  year = 2021
}

@article{donahue2019large,
  title = {Large Scale Adversarial Representation Learning},
  author = {Donahue, Jeff and Simonyan, Karen},
  year = 2019,
  journal = {NeurIPS},
  volume = {32}
}

@inproceedings{dosovitskiy2021an,
  title = {An Image Is Worth 16x16 Words: {{Transformers}} for Image Recognition at Scale},
  booktitle = {International Conference on Learning Representations},
  author = {Dosovitskiy, Alexey and Beyer, Lucas and Kolesnikov, Alexander and Weissenborn, Dirk and Zhai, Xiaohua and Unterthiner, Thomas and Dehghani, Mostafa and Minderer, Matthias and Heigold, Georg and Gelly, Sylvain and Uszkoreit, Jakob and Houlsby, Neil},
  year = 2021,
  url = {https://openreview.net/forum?id=YicbFdNTTy}
}

@article{elfwing2018sigmoid,
  title = {Sigmoid-Weighted Linear Units for Neural Network Function Approximation in Reinforcement Learning},
  author = {Elfwing, Stefan and Uchibe, Eiji and Doya, Kenji},
  year = 2018,
  journal = {Neural networks}
}

@inproceedings{Esser2021,
  title = {Taming {{Transformers}} for High-Resolution Image Synthesis},
  booktitle = {{{CVPR}}},
  author = {Esser, Patrick and Rombach, Robin and Ommer, Bjorn},
  year = 2021
}

@inproceedings{esser2024scaling,
  title = {Scaling Rectified Flow Transformers for High-Resolution Image Synthesis},
  booktitle = {Forty-First International Conference on Machine Learning},
  author = {Esser, Patrick and Kulal, Sumith and Blattmann, Andreas and Entezari, Rahim and M{\"u}ller, Jonas and Saini, Harry and Levi, Yam and Lorenz, Dominik and Sauer, Axel and Boesel, Frederic and Podell, Dustin and Dockhorn, Tim and English, Zion and Rombach, Robin},
  year = 2024,
  url = {https://openreview.net/forum?id=FPnUhsQJ5B}
}

@article{gidaris2018unsupervised,
  title = {Unsupervised Representation Learning by Predicting Image Rotations},
  author = {Gidaris, Spyros and Singh, Praveer and Komodakis, Nikos},
  year = 2018,
  journal = {arXiv:1803.07728},
  eprint = {1803.07728},
  archiveprefix = {arXiv}
}

@article{grill2020bootstrap,
  title = {Bootstrap Your Own Latent: {{A}} New Approach to Self-Supervised Learning},
  author = {Grill, Jean-Bastien and Strub, Florian and Altch{\'e}, Florent and Tallec, Corentin and Richemond, Pierre H and Buchatskaya, Elena and Doersch, Carl and Pires, Bernardo Avila and Guo, Zhaohan Daniel and Azar, Mohammad Gheshlaghi and others},
  year = 2020,
  journal = {arXiv:2006.07733},
  eprint = {2006.07733},
  archiveprefix = {arXiv}
}

@inproceedings{he2020momentumcontrastunsupervisedvisual,
  title = {Momentum Contrast for Unsupervised Visual Representation Learning},
  booktitle = {Proceedings of the {{IEEE}}/{{CVF}} Conference on Computer Vision and Pattern Recognition ({{CVPR}})},
  author = {He, Kaiming and Fan, Haoqi and Wu, Yuxin and Xie, Saining and Girshick, Ross},
  year = 2020,
  pages = {9726--9735},
  doi = {10.1109/CVPR42600.2020.00975},
  url = {https://openaccess.thecvf.com/content_CVPR_2020/html/He_Momentum_Contrast_for_Unsupervised_Visual_Representation_Learning_CVPR_2020_paper.html}
}

@inproceedings{he2022masked,
  title = {Masked Autoencoders Are Scalable Vision Learners},
  booktitle = {{{CVPR}}},
  author = {He, Kaiming and Chen, Xinlei and Xie, Saining and Li, Yanghao and Doll{\'a}r, Piotr and Girshick, Ross},
  year = 2022
}

@inproceedings{henry2020query,
  title = {Query-Key Normalization for {{Transformers}}},
  booktitle = {Findings of {{EMNLP}}},
  author = {Henry, Alex and Dachapally, Prudhvi Raj and Pawar, Shubham Shantaram and Chen, Yuxuan},
  year = 2020
}

@article{heun1900neue,
  title = {Neue {{Methoden}} Zur Approximativen {{Integration}} Der {{Differentialgleichungen}} Einer Unabh\"angigen {{Ver\"anderlichen}}},
  author = {Heun, Karl},
  year = 1900,
  journal = {Z. Math. Phys},
  volume = {45},
  pages = {23--38}
}

@article{heusel2017gans,
  title = {{{GANs}} Trained by a Two Time-Scale Update Rule Converge to a Local {{Nash}} Equilibrium},
  author = {Heusel, Martin and Ramsauer, Hubert and Unterthiner, Thomas and Nessler, Bernhard and Hochreiter, Sepp},
  year = 2017,
  journal = {NeurIPS}
}

@inproceedings{Ho2020,
  title = {Denoising Diffusion Probabilistic Models},
  booktitle = {{{NeurIPS}}},
  author = {Ho, Jonathan and Jain, Ajay and Abbeel, Pieter},
  year = 2020
}

@inproceedings{ho2022classifier,
  title = {Classifier-Free Diffusion Guidance},
  booktitle = {{{NeurIPS}} Workshops},
  author = {Ho, Jonathan and Salimans, Tim},
  year = 2021
}

@article{hoogeboom2023simple,
  title = {Simple Diffusion: {{End-to-end}} Diffusion for High Resolution Images},
  author = {Hoogeboom, Emiel and Heek, Jonathan and Salimans, Tim},
  year = 2023,
  journal = {ICML}
}

@inproceedings{hoogeboom2024simpler,
  title = {Simpler {{Diffusion}} ({{SiD2}}): 1.5 {{FID}} on {{ImageNet512}} with Pixel-Space Diffusion},
  booktitle = {{{CVPR}}},
  author = {Hoogeboom, Emiel and Mensink, Thomas and Heek, Jonathan and Lamerigts, Kay and Gao, Ruiqi and Salimans, Tim},
  year = 2025
}

@inproceedings{Hu_2023_CVPR,
  title = {Self-Guided Diffusion Models},
  booktitle = {Proceedings of the {{IEEE}}/{{CVF}} Conference on Computer Vision and Pattern Recognition ({{CVPR}})},
  author = {Hu, Vincent Tao and Zhang, David W. and Asano, Yuki M. and Burghouts, Gertjan J. and Snoek, Cees G. M.},
  year = 2023,
  month = jun,
  pages = {18413--18422}
}

@article{imagenet15russakovsky,
  title = {{{ImageNet Large Scale Visual Recognition Challenge}}},
  author = {Russakovsky, Olga and Deng, Jia and Su, Hao and Krause, Jonathan and Satheesh, Sanjeev and Ma, Sean and Huang, Zhiheng and Karpathy, Andrej and Khosla, Aditya and Bernstein, Michael and Berg, Alexander C. and {Fei-Fei}, Li},
  year = 2015,
  journal = {International Journal of Computer Vision (IJCV)},
  volume = {115},
  number = {3},
  pages = {211--252},
  doi = {10.1007/s11263-015-0816-y}
}

@inproceedings{kouzelis2026boostinggenerativeimagemodeling,
  title={Boosting Generative Image Modeling via Joint Image-Feature Synthesis},
  author={Theodoros Kouzelis and Efstathios Karypidis and Ioannis Kakogeorgiou and Spyros Gidaris and Nikos Komodakis},
  booktitle={The Thirty-ninth Annual Conference on Neural Information Processing Systems},
  year={2025},
  url={https://openreview.net/forum?id=i4qAfV04rZ}
}

@misc{Krasin2017,
  author = {Krasin, Ivan and
            Duerig, Tom and
            Alldrin, Neil and
            Ferrari, Vittorio
            and Abu-El-Haija, Sami and
            Kuznetsova, Alina and
            Rom, Hassan and
            Uijlings, Jasper and
            Popov, Stefan and
            Kamali, Shahab and
            Malloci, Matteo and
            Pont-Tuset, Jordi and
            Veit, Andreas and
            Belongie, Serge and
            Gomes, Victor and
            Gupta, Abhinav and
            Sun, Chen and
            Chechik, Gal and
            Cai, David and
            Feng, Zheyun and
            Narayanan, Dhyanesh and
            Murphy, Kevin},
  title = {OpenImages: A public dataset for large-scale multi-label and
           multi-class image classification.},
  journal = {Dataset available from
             https://storage.googleapis.com/openimages/web/index.html},
  year={2017}
}

@inproceedings{Krause2013CollectingAL,
  title = {Collecting a Large-Scale Dataset of Fine-Grained Cars},
  booktitle = {Second {{Workshop}} on {{Fine-Grained Visual Categorization}}},
  author = {Krause, Jonathan and Deng, Jia and Stark, Michael and {Fei-Fei}, Li},
  year = 2013
}

@article{Kynkaanniemi2019,
  title = {Improved Precision and Recall Metric for Assessing Generative Models},
  author = {Kynk{\"a}{\"a}nniemi, Tuomas and Karras, Tero and Laine, Samuli and Lehtinen, Jaakko and Aila, Timo},
  year = 2019,
  journal = {NeurIPS}
}

@inproceedings{kynkaanniemi2024applying,
  title = {Applying Guidance in a Limited Interval Improves Sample and Distribution Quality in Diffusion Models},
  booktitle = {{{NeurIPS}}},
  author = {Kynk{\"a}{\"a}nniemi, Tuomas and Aittala, Miika and Karras, Tero and Laine, Samuli and Aila, Timo and Lehtinen, Jaakko},
  year = 2024
}

@inproceedings{lavoie2025compositionaldiscretelatentcode,
  title={Compositional Discrete Latent Code for High Fidelity, Productive Diffusion Models},
  author={Samuel Lavoie and Michael Noukhovitch and Aaron Courville},
  booktitle={The Thirty-ninth Annual Conference on Neural Information Processing Systems},
  year={2025},
  url={https://openreview.net/forum?id=1iSnpztjbD}
}

@misc{li_andreeto_ranzato_perona_2022,
  title = {Caltech 101},
  author = {Li, Fei-Fei and Andreeto, Marco and Ranzato, Marc'Aurelio and Perona, Pietro},
  year = 2022,
  month = apr,
  publisher = {CaltechDATA},
  doi = {10.22002/D1.20086}
}

@inproceedings{li2024autoregressive,
  title = {Autoregressive Image Generation without Vector Quantization},
  booktitle = {{{NeurIPS}}},
  author = {Li, Tianhong and Tian, Yonglong and Li, He and Deng, Mingyang and He, Kaiming},
  year = 2024
}

@inproceedings{li2025jit,
  title = {Back to Basics: {{Let}} Denoising Generative Models Denoise},
  author = {Li, Tianhong and He, Kaiming},
  year = 2026,
  booktitle = {CVPR},
}

@misc{lin2025latentzoningnetworkunified,
  title = {Latent Zoning Network: A Unified Principle for Generative Modeling, Representation Learning, and Classification},
  author = {Lin, Zinan and Liu, Enshu and Ning, Xuefei and Zhu, Junyi and Wang, Wenyu and Yekhanin, Sergey},
  year = 2025,
  eprint = {2509.15591},
  primaryclass = {cs.LG},
  url = {https://arxiv.org/abs/2509.15591},
  archiveprefix = {arXiv}
}

@inproceedings{lipman_flow_2023,
  title = {Flow Matching for Generative Modeling},
  booktitle = {The Eleventh International Conference on Learning Representations, {{ICLR}} 2023, Kigali, Rwanda, May 1-5, 2023},
  author = {Lipman, Yaron and Chen, Ricky T. Q. and {Ben-Hamu}, Heli and Nickel, Maximilian and Le, Matthew},
  year = 2023,
  publisher = {OpenReview.net},
  url = {https://openreview.net/forum?id=PqvMRDCJT9t}
}

@inproceedings{liu2020diverse,
  title = {Diverse Image Generation via Self-Conditioned {{GANs}}},
  booktitle = {Proceedings of the {{IEEE}}/{{CVF}} Conference on Computer Vision and Pattern Recognition},
  author = {Liu, Steven and Wang, Tongzhou and Bau, David and Zhu, Jun-Yan and Torralba, Antonio},
  year = 2020,
  pages = {14286--14295}
}

@inproceedings{liu2023flow,
  title = {Flow Straight and Fast: {{Learning}} to Generate and Transfer Data with Rectified Flow},
  booktitle = {The Eleventh International Conference on Learning Representations},
  author = {Liu, Xingchao and Gong, Chengyue and Liu, Qiang},
  year = 2023,
  url = {https://openreview.net/forum?id=XVjTT1nw5z}
}

@misc{liu2025efficientgenerativemodeltraining,
  title = {Efficient Generative Model Training via Embedded Representation Warmup},
  author = {Liu, Deyuan and Sun, Peng and Li, Xufeng and Lin, Tao},
  year = 2025,
  eprint = {2504.10188},
  primaryclass = {cs.LG},
  url = {https://arxiv.org/abs/2504.10188},
  archiveprefix = {arXiv}
}

@inproceedings{Loshchilov2019,
  title={Decoupled Weight Decay Regularization},
  author={Ilya Loshchilov and Frank Hutter},
  booktitle={International Conference on Learning Representations},
  year={2019},
  url={https://openreview.net/forum?id=Bkg6RiCqY7},
}

@inproceedings{luvcic2019high,
  title = {High-Fidelity Image Generation with Fewer Labels},
  booktitle = {International Conference on Machine Learning},
  author = {Lucic, Mario and Tschannen, Michael and Ritter, Marvin and Zhai, Xiaohua and Bachem, Olivier and Gelly, Sylvain},
  year = 2019,
  pages = {4183--4192},
  publisher = {PMLR}
}

@inproceedings{ma2024sit,
  title = {{{SiT}}: {{Exploring}} Flow and Diffusion-Based Generative Models with Scalable Interpolant {{Transformers}}},
  booktitle = {{{ECCV}}},
  author = {Ma, Nanye and Goldstein, Mark and Albergo, Michael S and Boffi, Nicholas M and {Vanden-Eijnden}, Eric and Xie, Saining},
  year = 2024
}

@article{mirza2014conditional,
  title = {Conditional Generative Adversarial Nets},
  author = {Mirza, Mehdi and Osindero, Simon},
  year = 2014,
  journal = {arXiv:1411.1784},
  eprint = {1411.1784},
  archiveprefix = {arXiv}
}

@article{nash2021generating,
  title = {Generating Images with Sparse Representations},
  author = {Nash, Charlie and Menick, Jacob and Dieleman, Sander and Battaglia, Peter W},
  year = 2021,
  journal = {arXiv:2103.03841},
  eprint = {2103.03841},
  archiveprefix = {arXiv}
}

@inproceedings{NEURIPS2024_e304d374,
  title = {Return of Unconditional Generation: A Self-Supervised Representation Generation Method},
  booktitle = {Advances in Neural Information Processing Systems},
  author = {Li, Tianhong and Katabi, Dina and He, Kaiming},
  editor = {Globerson, A. and Mackey, L. and Belgrave, D. and Fan, A. and Paquet, U. and Tomczak, J. and Zhang, C.},
  year = 2024,
  volume = {37},
  pages = {125441--125468},
  publisher = {Curran Associates, Inc.},
  doi = {10.52202/079017-3985},
  url = {https://proceedings.neurips.cc/paper_files/paper/2024/file/e304d374c85e385eb217ed4a025b6b63-Paper-Conference.pdf}
}

@inproceedings{Nichol2021,
  title = {Improved Denoising Diffusion Probabilistic Models},
  booktitle = {{{ICML}}},
  author = {Nichol, Alexander Quinn and Dhariwal, Prafulla},
  year = 2021
}

@inproceedings{Nilsback08,
  title = {Automated Flower Classification over a Large Number of Classes},
  booktitle = {Indian Conference on Computer Vision, Graphics and Image Processing},
  author = {Nilsback, Maria-Elena and Zisserman, Andrew},
  year = 2008,
  month = dec
}

@inproceedings{NIPS2017_3f5ee243,
  title = {Attention Is All You Need},
  booktitle = {Advances in Neural Information Processing Systems},
  author = {Vaswani, Ashish and Shazeer, Noam and Parmar, Niki and Uszkoreit, Jakob and Jones, Llion and Gomez, Aidan N and Kaiser, Lukasz and Polosukhin, Illia},
  editor = {Guyon, I. and Luxburg, U. Von and Bengio, S. and Wallach, H. and Fergus, R. and Vishwanathan, S. and Garnett, R.},
  year = 2017,
  volume = {30},
  publisher = {Curran Associates, Inc.}
}

@inproceedings{noroozi2016unsupervised,
  title = {Unsupervised Learning of Visual Representations by Solving Jigsaw Puzzles},
  booktitle = {European Conference on Computer Vision},
  author = {Noroozi, Mehdi and Favaro, Paolo},
  year = 2016,
  pages = {69--84},
  publisher = {Springer}
}

@article{oord2018representation,
  title = {Representation Learning with Contrastive Predictive Coding},
  author = {van den Oord, Aaron and Li, Yazhe and Vinyals, Oriol},
  year = 2018,
  journal = {arXiv:1807.03748},
  eprint = {1807.03748},
  archiveprefix = {arXiv}
}

@article{oquab2023dinov2,
  title={{DINO}v2: Learning Robust Visual Features without Supervision},
  author={Maxime Oquab and Timoth{\'e}e Darcet and Th{\'e}o Moutakanni and Huy V. Vo and Marc Szafraniec and Vasil Khalidov and Pierre Fernandez and Daniel HAZIZA and Francisco Massa and Alaaeldin El-Nouby and Mido Assran and Nicolas Ballas and Wojciech Galuba and Russell Howes and Po-Yao Huang and Shang-Wen Li and Ishan Misra and Michael Rabbat and Vasu Sharma and Gabriel Synnaeve and Hu Xu and Herve Jegou and Julien Mairal and Patrick Labatut and Armand Joulin and Piotr Bojanowski},
  journal={Transactions on Machine Learning Research},
  issn={2835-8856},
  year={2024},
  url={https://openreview.net/forum?id=a68SUt6zFt},
  note={Featured Certification}
}

@inproceedings{parkhi12a,
  title = {Cats and Dogs},
  booktitle = {{{IEEE}} Conference on Computer Vision and Pattern Recognition},
  author = {Parkhi, Omkar M. and Vedaldi, Andrea and Zisserman, Andrew and Jawahar, C. V.},
  year = 2012
}

@inproceedings{peebles_scalable_2023,
  title = {Scalable Diffusion Models with Transformers},
  booktitle = {Proceedings of the {{IEEE}}/{{CVF}} International Conference on Computer Vision ({{ICCV}})},
  author = {Peebles, William and Xie, Saining},
  year = 2023,
  pages = {4195--4205},
  doi = {10.1109/ICCV51070.2023.00387},
  url = {https://openaccess.thecvf.com/content/ICCV2023/html/Peebles_Scalable_Diffusion_Models_with_Transformers_ICCV_2023_paper.html}
}

@inproceedings{preechakul2022diffusion,
  title = {Diffusion Autoencoders: {{Toward}} a Meaningful and Decodable Representation},
  booktitle = {Proceedings of the {{IEEE}}/{{CVF}} Conference on Computer Vision and Pattern Recognition},
  author = {Preechakul, Konpat and Chatthee, Nattanat and Wizadwongsa, Suttisak and Suwajanakorn, Supasorn},
  year = 2022,
  pages = {10619--10629}
}

@InProceedings{radfordLearningTransferableVisual2021,
  title = 	 {Learning Transferable Visual Models From Natural Language Supervision},
  author =       {Radford, Alec and Kim, Jong Wook and Hallacy, Chris and Ramesh, Aditya and Goh, Gabriel and Agarwal, Sandhini and Sastry, Girish and Askell, Amanda and Mishkin, Pamela and Clark, Jack and Krueger, Gretchen and Sutskever, Ilya},
  booktitle = 	 {Proceedings of the 38th International Conference on Machine Learning},
  pages = 	 {8748--8763},
  year = 	 {2021},
  editor = 	 {Meila, Marina and Zhang, Tong},
  volume = 	 {139},
  series = 	 {Proceedings of Machine Learning Research},
  month = 	 {18--24 Jul},
  publisher =    {PMLR},
  url = 	 {https://proceedings.mlr.press/v139/radford21a.html},
}

@inproceedings{rombachHighResolutionImageSynthesis2022,
  title = {High-Resolution Image Synthesis with Latent Diffusion Models},
  booktitle = {Proceedings of the {{IEEE}}/{{CVF}} Conference on Computer Vision and Pattern Recognition ({{CVPR}})},
  author = {Rombach, Robin and Blattmann, Andreas and Lorenz, Dominik and Esser, Patrick and Ommer, Bj{\"o}rn},
  year = 2022,
  pages = {10684--10695},
  doi = {10.1109/CVPR52688.2022.01042},
  url = {https://openaccess.thecvf.com/content/CVPR2022/html/Rombach_High-Resolution_Image_Synthesis_With_Latent_Diffusion_Models_CVPR_2022_paper.html}
}

@article{salimans2016improved,
  title = {Improved Techniques for Training {{GANs}}},
  author = {Salimans, Tim and Goodfellow, Ian and Zaremba, Wojciech and Cheung, Vicki and Radford, Alec and Chen, Xi},
  year = 2016,
  journal = {NeurIPS},
  volume = {29}
}

@article{shazeer2020glu,
  title = {{{GLU}} Variants Improve {{Transformer}}},
  author = {Shazeer, Noam},
  year = 2020,
  journal = {arXiv:2002.05202},
  eprint = {2002.05202},
  archiveprefix = {arXiv}
}

@article{simonyan2014very,
  title = {Very Deep Convolutional Networks for Large-Scale Image Recognition},
  author = {Simonyan, Karen and Zisserman, Andrew},
  year = 2014,
  journal = {arXiv:1409.1556},
  eprint = {1409.1556},
  archiveprefix = {arXiv}
}

@inproceedings{singh_what_2025,
  title = {What Matters for Representation Alignment: {{Global}} Information or Spatial Structure?},
  booktitle = {The Fourteenth International Conference on Learning Representations},
  author = {Singh, Jaskirat and Leng, Xingjian and Wu, Zongze and Zheng, Liang and Zhang, Richard and Shechtman, Eli and Xie, Saining},
  year = 2026,
  url = {https://openreview.net/forum?id=y0UxFtXqXf}
}

@inproceedings{Song2021,
  title = {Score-Based Generative Modeling through Stochastic Differential Equations},
  booktitle = {{{ICLR}}},
  author = {Song, Yang and {Sohl-Dickstein}, Jascha and Kingma, Diederik P and Kumar, Abhishek and Ermon, Stefano and Poole, Ben},
  year = 2021
}

@article{su2024roformer,
  title = {{{RoFormer}}: {{Enhanced Transformer}} with Rotary Position Embedding},
  author = {Su, Jianlin and Ahmed, Murtadha and Lu, Yu and Pan, Shengfeng and Bo, Wen and Liu, Yunfeng},
  year = 2024,
  journal = {Neurocomputing},
  volume = {568},
  pages = {127063},
  publisher = {Elsevier}
}

@inproceedings{tian2023addp,
  title={{ADDP}: Learning General Representations for Image Recognition and Generation with Alternating Denoising Diffusion Process},
  author={Changyao Tian and Chenxin Tao and Jifeng Dai and Hao Li and Ziheng Li and Lewei Lu and Xiaogang Wang and Hongsheng Li and Gao Huang and Xizhou Zhu},
  booktitle={ICLR},
  year={2024},
  url={https://openreview.net/forum?id=cMPm8YFXZe}
}

@inproceedings{wang_repa_2025,
  title={{REPA} Works Until It Doesn{\textquoteright}t: Early-Stopped, Holistic Alignment Supercharges Diffusion Training},
  author={Ziqiao Wang and Wangbo Zhao and Yuhao Zhou and Zekai Li and Zhiyuan Liang and Mingjia Shi and Xuanlei Zhao and Pengfei Zhou and Kaipeng Zhang and Zhangyang Wang and Kai Wang and Yang You},
  booktitle={The Thirty-ninth Annual Conference on Neural Information Processing Systems},
  year={2025},
  url={https://openreview.net/forum?id=HK96GI5s7G}
}

@misc{wang2025diffusedisperseimagegeneration,
  title = {Diffuse and Disperse: {{Image}} Generation with Representation Regularization},
  author = {Wang, Runqian and He, Kaiming},
  year = 2025,
  eprint = {2506.09027},
  primaryclass = {cs.CV},
  url = {https://arxiv.org/abs/2506.09027},
  archiveprefix = {arXiv}
}

@article{wang2025pixnerd,
  title = {{{PixNerd}}: {{Pixel}} Neural Field Diffusion},
  author = {Wang, Shuai and Gao, Ziteng and Zhu, Chenhui and Huang, Weilin and Wang, Limin},
  year = 2025,
  journal = {arXiv:2507.23268},
  eprint = {2507.23268},
  archiveprefix = {arXiv}
}

@article{wei2024fastclip0,
  title = {{{FastCLIP}}: A Suite of Optimization Techniques to Accelerate {{CLIP}} Training with Limited Resources},
  author = {Wei, Xiyuan and Ye, Fanjiang and Yonay, Ori and Chen, Xingyu and Sun, Baixi and Tao, Dingwen and Yang, Tianbao},
  year = 2024,
  journal = {arXiv preprint arXiv: 2407.01445},
  eprint = {2407.01445},
  archiveprefix = {arXiv}
}

@inproceedings{wu2025representationentanglementgenerationtraining,
  title={Representation Entanglement for Generation: Training Diffusion Transformers Is Much Easier Than You Think},
  author={Ge Wu and Shen Zhang and Ruijing Shi and Shanghua Gao and Zhenyuan Chen and Lei Wang and Zhaowei Chen and Hongcheng Gao and Yao Tang and Jian Yang and Ming-Ming Cheng and Xiang Li},
  booktitle={The Thirty-ninth Annual Conference on Neural Information Processing Systems},
  year={2025},
  url={https://openreview.net/forum?id=koEALFNBj1}
}

@inproceedings{yao2025reconstruction,
  title = {Reconstruction vs. Generation: {{Taming}} Optimization Dilemma in Latent Diffusion Models},
  booktitle = {{{CVPR}}},
  author = {Yao, Jingfeng and Yang, Bin and Wang, Xinggang},
  year = 2025
}

@inproceedings{yuan_provable_2022,
  title = {Provable Stochastic Optimization for Global Contrastive Learning: {{Small}} Batch Does Not Harm Performance},
  booktitle = {Proceedings of the 39th International Conference on Machine Learning},
  author = {Yuan, Zhuoning and Wu, Yuexin and Qiu, Zi-Hao and Du, Xianzhi and Zhang, Lijun and Zhou, Denny and Yang, Tianbao},
  year = 2022,
  series = {Proceedings of Machine Learning Research},
  volume = {162},
  pages = {25760--25782},
  publisher = {PMLR},
  url = {https://proceedings.mlr.press/v162/yuan22b.html}
}

@inproceedings{yuRepresentationAlignmentGeneration2025,
  title = {Representation Alignment for Generation: {{Training}} Diffusion Transformers Is Easier than You Think},
  booktitle = {International Conference on Learning Representations},
  author = {Yu, Sihyun and Kwak, Sangkyung and Jang, Huiwon and Jeong, Jongheon and Huang, Jonathan and Shin, Jinwoo and Xie, Saining},
  year = 2025,
  volume = {2025},
  pages = {87400--87442},
  url = {https://proceedings.iclr.cc/paper_files/paper/2025/file/d9e42b4d7163931f3689d6d6fbaa11d0-Paper-Conference.pdf}
}

@inproceedings{zhang2018unreasonable,
  title = {The Unreasonable Effectiveness of Deep Features as a Perceptual Metric},
  booktitle = {{{CVPR}}},
  author = {Zhang, Richard and Isola, Phillip and Efros, Alexei A and Shechtman, Eli and Wang, Oliver},
  year = 2018
}

@inproceedings{zhang2019root,
  title = {Root Mean Square Layer Normalization},
  booktitle = {{{NeurIPS}}},
  author = {Zhang, Biao and Sennrich, Rico},
  year = 2019
}

@misc{zhang2025repulsoracceleratinggenerativemodeling,
  title = {Repulsor: {{Accelerating}} Generative Modeling with a Contrastive Memory Bank},
  author = {Zhang, Shaofeng and Chen, Xuanqi and Liao, Ning and Zhao, Haoxiang and Wang, Xiaoxing and Tan, Haoru and Wu, Sitong and Jia, Xiaosong and Fan, Qi and Yan, Junchi},
  year = 2025,
  eprint = {2512.08648},
  primaryclass = {cs.CV},
  url = {https://arxiv.org/abs/2512.08648},
  archiveprefix = {arXiv}
}

@misc{zheng2025diffusiontransformersrepresentationautoencoders,
  title = {Diffusion Transformers with Representation Autoencoders},
  author = {Zheng, Boyang and Ma, Nanye and Tong, Shengbang and Xie, Saining},
  year = 2025,
  eprint = {2510.11690},
  primaryclass = {cs.CV},
  url = {https://arxiv.org/abs/2510.11690},
  archiveprefix = {arXiv}
}

@article{zhou2021ibot,
  title = {{iBOT}: {{Image}} {BERT} Pre-Training with Online Tokenizer},
  author = {Zhou, Jinghao and Wei, Chen and Wang, Huiyu and Shen, Wei and Xie, Cihang and Yuille, Alan and Kong, Tao},
  year = 2021,
  journal = {arXiv:2111.07832},
  eprint = {2111.07832},
  archiveprefix = {arXiv}
}

@inproceedings{wangImageUnderstandingTokenizer2024,
  author = {Wang, Luting and Zhao, Yang and Zhang, Zijian and Feng, Jiashi and Liu, Si and Kang, Bingyi},
  booktitle = {Advances in Neural Information Processing Systems},
  doi = {10.52202/079017-0976},
  editor = {A. Globerson and L. Mackey and D. Belgrave and A. Fan and U. Paquet and J. Tomczak and C. Zhang},
  pages = {31015--31035},
  publisher = {Curran Associates, Inc.},
  title = {Image Understanding Makes for A Good Tokenizer for Image Generation},
  url = {https://proceedings.neurips.cc/paper_files/paper/2024/file/37611b0fc2b65cdbb60865af5f6cf453-Paper-Conference.pdf},
  volume = {37},
  year = {2024}
}

@article{qiuRobustLatentMatters2025,
  title   = {Robust Latent Matters: Boosting Image Generation with Sampling Error Synthesis},
  author  = {Kai Qiu and Xiang Li and Jason Kuen and Hao Chen and Xiaohao Xu and Jiuxiang Gu and Yinyi Luo and Bhiksha Raj and Zhe Lin and Marios Savvides},
  year    = {2025},
  journal = {arXiv preprint arXiv: 2503.08354}
}

@inproceedings{hudson2024soda,
  title = {{{SODA}}: {{Bottleneck}} Diffusion Models for Representation Learning},
  booktitle = {{{CVPR}}},
  author={Hudson, Drew A and Zoran, Daniel and Malinowski, Mateusz and Lampinen, Andrew K and Jaegle, Andrew and McClelland, James L and Matthey, Loic and Hill, Felix and Lerchner, Alexander},
  year = 2024,
  eprint = {2311.17901},
  archiveprefix = {arXiv}
}

@inproceedings{zhang2022pdae,
  author = {Zhang, Zijian and Zhao, Zhou and Lin, Zhijie},
  title = {Unsupervised representation learning from pre-trained diffusion probabilistic models},
  year = {2022},
  isbn = {9781713871088},
  publisher = {Curran Associates Inc.},
  address = {Red Hook, NY, USA},
  booktitle = {Proceedings of the 36th International Conference on Neural Information Processing Systems},
  articleno = {1607},
  numpages = {14},
  location = {New Orleans, LA, USA},
  series = {NIPS '22}
}

@inproceedings{wang2023infodiffusion,
  author = {Wang, Yingheng and Schiff, Yair and Gokaslan, Aaron and Pan, Weishen and Wang, Fei and De Sa, Christopher and Kuleshov, Volodymyr},
  title = {{InfoDiffusion}: representation learning using information maximizing diffusion models},
  year = {2023},
  publisher = {JMLR.org},
  booktitle = {Proceedings of the 40th International Conference on Machine Learning},
  articleno = {1511},
  numpages = {19},
  location = {Honolulu, Hawaii, USA},
  series = {ICML'23}
}

@misc{kimiteam2026attentionresiduals,
  title={Attention Residuals},
  author={{Kimi Team} and Guangyu Chen and Yu Zhang and Jianlin Su and Weixin Xu and Siyuan Pan and Yaoyu Wang and Yucheng Wang and Guanduo Chen and Bohong Yin and Yutian Chen and Junjie Yan and Ming Wei and Y. Zhang and Fanqing Meng and Chao Hong and Xiaotong Xie and Shaowei Liu and Enzhe Lu and Yunpeng Tai and Yanru Chen and Xin Men and Haiqing Guo and Y. Charles and Haoyu Lu and Lin Sui and Jinguo Zhu and Zaida Zhou and Weiran He and Weixiao Huang and Xinran Xu and Yuzhi Wang and Guokun Lai and Yulun Du and Yuxin Wu and Zhilin Yang and Xinyu Zhou},
  year={2026},
  eprint={2603.15031},
  archivePrefix={arXiv},
  primaryClass={cs.CL},
  url={https://arxiv.org/abs/2603.15031},
}

@misc{jordan2024muon,
  author       = {Keller Jordan and Yuchen Jin and Vlado Boza and Jiacheng You and
                  Franz Cesista and Laker Newhouse and Jeremy Bernstein},
  title        = {Muon: An optimizer for hidden layers in neural networks},
  year         = {2024},
  url          = {https://kellerjordan.github.io/posts/muon/}
}

@inproceedings{ioffe2015batch,
  title = 	 {Batch Normalization: Accelerating Deep Network Training by Reducing Internal Covariate Shift},
  author = 	 {Ioffe, Sergey and Szegedy, Christian},
  booktitle = 	 {Proceedings of the 32nd International Conference on Machine Learning},
  pages = 	 {448--456},
  year = 	 {2015},
  editor = 	 {Bach, Francis and Blei, David},
  volume = 	 {37},
  series = 	 {Proceedings of Machine Learning Research},
  address = 	 {Lille, France},
  month = 	 {07--09 Jul},
  publisher =    {PMLR},
  url = 	 {https://proceedings.mlr.press/v37/ioffe15.html},
}
\bibliographystyle{iclr2027_conference}

\appendix
\clearpage

\suppressfloats[t]
\section{Extended related work}
\label{app:related}

\providecommand{\cyes}{\textcolor{green!55!black}{\ding{51}}}  \providecommand{\cno}{\textcolor{red!75!black}{\ding{55}}}     \providecommand{\cna}{\textcolor{black!35}{--}}                

\begin{table}[t]
\caption{\textbf{Comparison under the four S requirements}, extending
Table~\ref{tab:deps} to the comparable systems tabulated in this appendix.
\textbf{Self-contained} means no pretrained component, including a latent autoencoder or
tokenizer; \textbf{self-supervised} means no label supervision anywhere in the pipeline,
including through pretrained components; \textbf{single-model} means that the representation
is produced by the generator's own network; and \textbf{single-stage} means that all
components are learned jointly from scratch. Capabilities report a learned and evaluated
\textbf{Repr}esentation and unconditional \textbf{Synth}esis with no source image to encode. All pipelines are as published, except REPA and \dispersive{}, which we combine with the JiT recipe removing the latent model dependency.}
\label{tab:deps-full}
\vspace{2pt}
\centering\footnotesize
\setlength{\tabcolsep}{3.5pt}
\resizebox{\textwidth}{!}{\begin{tabular}{l@{\hskip 10pt}cccc@{\hskip 12pt}cc}
\toprule
 & \multicolumn{4}{c}{\emph{four S requirements}} & \multicolumn{2}{c}{\emph{capabilities}} \\
\cmidrule(r{10pt}){2-5}\cmidrule{6-7}
Method & Self-cont. & Self-sup. & Single-mod. & Single-stage & Repr. & Synth. \\
\midrule
\multicolumn{7}{@{}l}{\emph{Generative recipes}} \\[-2pt]
DiT / SiT \citep{peebles_scalable_2023,ma2024sit} & \cno & \cno & \cna & \cno & \cno & \cyes \\
JiT \citep{li2025jit} & \cyes & \cno & \cna & \cyes & \cno & \cyes \\
JiT $+$ REPA \citep{yuRepresentationAlignmentGeneration2025} & \cno & \cno & \cno & \cno & \cno & \cyes \\
JiT $+$ \dispersive{} \citep{wang2025diffusedisperseimagegeneration} & \cyes & \cno & \cyes & \cyes & \cno & \cyes \\
ReDi \citep{kouzelis2026boostinggenerativeimagemodeling} & \cno & \cno & \cno & \cno & \cno & \cyes \\
\addlinespace[2pt]
\multicolumn{7}{@{}l}{\emph{Semantic conditions}} \\[-2pt]
RCG (ADM) \citep{NEURIPS2024_e304d374} & \cno & \cyes & \cno & \cno & \cno & \cyes \\
DLC \citep{lavoie2025compositionaldiscretelatentcode} & \cno & \cno & \cno & \cno & \cno & \cyes \\
\addlinespace[2pt]
\multicolumn{7}{@{}l}{\emph{Joint representation and generation}} \\[-2pt]
MAGE \citep{52203mage} & \cno & \cno & \cyes & \cno & \cyes & \cyes \\
ADDP \citep{tian2023addp} & \cno & \cno & \cyes & \cno & \cyes & \cyes \\
PDAE \citep{zhang2022pdae} & \cno & \cyes & \cno & \cno & \cyes & \cyes \\
DiffAE \citep{preechakul2022diffusion} & \cyes & \cyes & \cno & \cno & \cyes & \cyes \\
SODA \citep{hudson2024soda} & \cyes & \cyes & \cno & \cyes & \cyes & \cno \\
\midrule
\rowcolor{gray!12}
\methodname{} (ours) & \cyes & \cyes & \cyes & \cyes & \cyes & \cyes \\
\bottomrule
\end{tabular}
}
\end{table}

\noindent{\bf Semantic conditioning for generation.}
Conditional GANs established that class information can substantially improve image synthesis
\citep{mirza2014conditional,brock2019large}. Thus, although DDPM initially demonstrated high-quality
unconditional synthesis \citep{Ho2020}, it was soon followed by Improved DDPM incorporating
class labels into diffusion \citep{Nichol2021}. Classifier guidance further strengthened the effect
of this semantic signal \citep{dhariwal2021diffusion}, while classifier-free guidance subsequently
removed the need for a separate classifier, retaining label conditioning \citep{ho2022classifier}.
Together, these developments made semantics a central lever for generation, but supplied them
directly through annotations.

Alongside this use of labels, efforts to reduce computation brought another source of semantics
into the pipeline. Latent diffusion and flow models generate through pretrained autoencoders
\citep{rombachHighResolutionImageSynthesis2022,dao_flow_2023}, a setting later continued by transformer
generators such as DiT and SiT \citep{peebles_scalable_2023,ma2024sit}.
Although introduced for compression, these autoencoders can inherit semantic information from their
training objectives. VQGAN \citep{Esser2021}, for example, uses a perceptual loss
\citep{zhang2018unreasonable} based on supervised features \citep{simonyan2014very}.
The semantic information carried by these tokenizers can therefore shape the representations learned
on top of them \citep{chenDeconstructingDenoisingDiffusion2024} and also benefit generation beyond
reconstruction fidelity
\citep{wangImageUnderstandingTokenizer2024}.

\noindent{\bf Importing representations for generation.}
Beyond the semantics inherited from autoencoders, generators can also draw on representations
supplied explicitly by pretrained encoders.
Some align autoencoder latents with frozen features
\citep{yao2025reconstruction,qiuRobustLatentMatters2025}, while representation autoencoders generate
directly in a pretrained encoder's latent space
\citep{zheng2025diffusiontransformersrepresentationautoencoders}.
Others bring the same semantic signal into the generator itself through feature alignment, as in
REPA and iREPA \citep{yuRepresentationAlignmentGeneration2025,singh_what_2025}.
Since fixed alignment can conflict with generative detail, subsequent work refines how pretrained
features interact with generative training
\citep{wang_repa_2025,wu2025representationentanglementgenerationtraining,liu2025efficientgenerativemodeltraining}.
ReDi instead models images and pretrained features jointly
\citep{kouzelis2026boostinggenerativeimagemodeling}.

Beyond defining the latent space or guiding internal features, representations can serve directly
as conditions. RCG and DLC take this route, conditioning on pretrained embeddings or discrete codes derived from them,
and training separate models to sample them
\citep{NEURIPS2024_e304d374,lavoie2025compositionaldiscretelatentcode}.
Related retrieval and instance-conditioned methods obtain conditions from real images
\citep{blattmann2022retrieval,bordes2021high,casanova2021instance}, while earlier approaches replace
labels with clusters or proxy labels \citep{luvcic2019high,liu2020diverse,bao2022conditional,Hu_2023_CVPR}.
These approaches show that semantic conditioning need not rely on class labels, but importing
representations introduces dependencies on pretrained components or separate training stages.
Learning these representations within the generator itself would therefore reduce the pipeline's
external dependencies.

\noindent{\bf Self-contained generation.}
The pretrained components used by many of these approaches motivate a complementary direction
that removes such dependencies. Pixel-space diffusion and flow models avoid the upstream autoencoder
\citep{Ho2020,Song2021,lipman_flow_2023}, and recent recipes have made this setting increasingly
competitive at high resolution
\citep{hoogeboom2023simple,hoogeboom2024simpler,chen2025pixelflow,wang2025pixnerd,li2025jit}.
Operating on pixels therefore removes one dependency, although it does not itself replace the
semantic signal supplied by class labels.
Dispersive Loss and Repulsor extend this effort to representation regularization, pushing apart
intermediate features without a pretrained representation encoder
\citep{wang2025diffusedisperseimagegeneration,zhang2025repulsoracceleratinggenerativemodeling}.
In particular, Dispersive Loss finds positive-pair regularization sensitive to noisy-view construction
and therefore retains only contrastive repulsion, arguing that denoising supplies attraction.
The features nevertheless remain auxiliary regularizers, without downstream representation evaluation
or explicit semantic conditioning, and the high-quality ImageNet settings retain class labels.

\noindent{\bf Joint generation and representation learning.}
Self-supervised learning offers a source of such semantics through pretext tasks
\citep{noroozi2016unsupervised,gidaris2018unsupervised}, masked modeling
\citep{bao2021beit,he2022masked,zhou2021ibot}, self-distillation
\citep{grill2020bootstrap,caron2021emerging}, and contrastive objectives
\citep{oord2018representation,chen_simple_2020,he2020momentumcontrastunsupervisedvisual}.
The latter learn invariances by matching augmented views of the same image while separating different
images. Although SimCLR benefits from large batches \citep{chenBigSelfSupervisedModels2020},
SogCLR reduces this requirement through a moving-average approximation to a global contrastive
objective \citep{yuan_provable_2022}. These objectives provide useful representations, but on their
own do not define an image generator. Joint methods therefore seek to combine the two capabilities.
MAGE learns and evaluates both in one
transformer \citep{52203mage}, but operates on pretrained VQGAN tokens \citep{Esser2021}, thus retaining the inherited perceptual supervision discussed above. ADDP and LZN follow a similar joint
learning goal through tokenized or modular latent pipelines
\citep{tian2023addp,lin2025latentzoningnetworkunified}.

A different route makes the learned representation a condition for a diffusion decoder.
DiffAE learns this code through reconstruction, then fits a latent prior
for unconditional sampling \citep{preechakul2022diffusion}. InfoDiffusion likewise conditions diffusion
on learned codes, sampled from a prior or a separately trained prior model
\citep{wang2023infodiffusion}. SODA moves from reconstruction to cross-view generation, using a
source view's code to generate a related view \citep{hudson2024soda}.
All three methods produce their representations with an encoder separate from the diffusion decoder,
and SODA additionally requires a source image for synthesis.
\methodname{} follows the joint learning goal while bringing the
representation into the pixel-space generator itself, where it is learned, evaluated, and used to
condition synthesis. A prior trained in the same run supplies conditions without a source image,
combining the four requirements summarized in Table~\ref{tab:deps-full} and defined in
Section~\ref{sec:intro}.

\FloatBarrier

\section{Algorithms}
\label{app:algorithms}

We detail the training and gradient-balancing procedures of Section~\ref{sec:method}.
Appendix~\ref{app:joint_training} describes the joint training step,
Appendix~\ref{app:gradient_balancing} explains the gradient-norm balancer, and
Appendix~\ref{app:balancer_probe} shows how to implement it without additional backward passes. Algorithms~\ref{alg:train_step} and~\ref{alg:balancer} provide the
pseudocode after these explanations.

\subsection{Joint training step}
\label{app:joint_training}

Algorithm~\ref{alg:train_step} describes a single training step. From every training image it
forms \( k_O=1 \) original and \( k_A=2 \) augmented images, runs one encoder-only pass over all of them at \( t=1 \) under the
null conditioning \( \e_\varnothing \) to obtain the embeddings \( \e^{(k)} \). It then reuses
\( \sg(\e^{(1)}) \) both as the conditioning of the generative path and as the prior's target,
adds the conditioned-branch consistency term on the noised input, and combines the four losses
with the balanced weight \( \lambda \) before one backward pass and one optimizer step.

\subsection{Gradient-norm balancing}
\label{app:gradient_balancing}

Algorithm~\ref{alg:balancer} sets the contrastive weight \( \lambda \). Every \( U \) steps it reads
the gradient norms of \( \L_{\mathrm{CFM}} \) and \( \L_{\mathrm{CL}} \) at the shared subset
\( \varsigma \), the feed-forward submodule of the last encoder block, aggregates them across
GPUs, and updates exponential moving averages whose clamped ratio becomes \( \lambda \) for this step
and the next \( U-1 \) steps.
The pseudocode describes direct per-loss gradient evaluation before the joint backward pass. These evaluations traverse the paths from the losses to \( \varsigma \). Appendix~\ref{app:balancer_probe} describes how to reuse the joint backward pass, thus without the need for additional backward passes and applying the updated weights to the following step.

\subsection{Gradient measurement during the joint backward pass}
\label{app:balancer_probe}

We can recover the per-loss gradient norms from the joint backward pass because the losses
reach different tokens at the last encoder MLP. In the generative pass, patch and in-context
tokens feed the decoder, while the \cls{} token is removed before the decoder and feeds
only CBC. In the clean-view pass, only \cls{} tokens feed the contrastive head, and the
stop-gradient operations block contributions from the other losses. The MLP acts on each
token independently, with no further token mixing before the encoder readouts. Its gradients
can therefore be separated by pass and token type.

For a linear map \( y_t = W x_t + b \) within this MLP, let
\( \delta_t = \partial\L_{\mathrm{total}}/\partial y_t \) be its incoming gradient.
A token group \( S \), spanning all examples on one GPU, contributes
\begin{equation}
    G_{W,S} = \sum_{t\in S}\delta_t x_t^\top,
    \qquad G_{b,S} = \sum_{t\in S}\delta_t.
\end{equation}
The incoming gradient already accounts for downstream operations, including the SwiGLU
activation for the first linear map. Since this activation acts within each token, it
preserves the separation above. Concatenating the contributions from both
linear maps into \( G_S \), gives
\begin{equation}
    \|\nabla_\varsigma\L_{\mathrm{CFM}}\|^2
      = \|G_{\mathrm{gen,non\text{-}CLS}}\|^2,
    \qquad
    \|\nabla_\varsigma\L_{\mathrm{CL}}\|^2
      = \frac{\|G_{\mathrm{repr,CLS}}\|^2}{\lambda_{\mathrm{used}}^2}.
\end{equation}
Here, \( \lambda_{\mathrm{used}}>0 \) is the contrastive weight used in the backward pass,
so dividing by its square recovers the unweighted squared norm. The CBC contribution is
separate and does not enter the ratio. Summing the local squared norms across GPUs and
taking square roots gives the same gradient statistics as Algorithm~\ref{alg:balancer}.

The layer inputs and incoming gradients are already available during backward, so this
calculation adds only local products and norms at one MLP and a reduction of two scalars
across GPUs. It requires no additional backward traversal and leaves the training gradients
unchanged for the given step.
We then use these norms to update the EMA and \( \lambda \), with the new weight taking effect on the following step.

\clearpage
\begin{algorithm}[p]
    \caption{One training step of \methodname{} with gradient-norm balancing.
    }
    \label{alg:train_step}
    \begin{algorithmic}[1]
        \REQUIRE Shared encoder \( f_\theta \), generative decoder \( g_\phi \), contrastive head
        \( h_\psi \), prior \( e_\omega \); original-image and augmented-image families
        \( \augs_{\mathrm{o}}, \augs_{\mathrm{a}} \) with \( k_O \) and \( k_A \) images,
        \( K = k_O + k_A \)
        \REQUIRE Representation null \( \e_\varnothing \), generation null \( \c_\varnothing \)
        with drop probability \( p \); CBC weight \( \eta \); optimizer \( \mathcal{O} \);
        subroutine \textsc{GradBalance} (Algorithm~\ref{alg:balancer}); stop gradient operator \( \sg(\cdot) \)
        \medskip
        \STATE \textbf{Minibatch and views.} Draw \( \B = \{\x_i\}_{i=1}^{B} \sim \D \). For each
        \( i \), draw \( A_1 \sim \augs_{\mathrm{o}} \) and \( A_2,\dots,A_K \sim
        \augs_{\mathrm{a}} \), and set \( \x_i^{(k)} = A_k(\x_i) \).
        \medskip
        \STATE \textbf{Representation pass} (clean, \( t=1 \), null conditioning; encoder only,
        all \( K \) views in one forward).
        \STATE \quad \( \e_i^{(k)} \;=\; \pi_\e\big(f_\theta(\x_i^{(k)} \mid 1,\, \e_\varnothing)\big) \)
        \STATE \quad \( \z_i^{(k)} \;=\; h_\psi(\e_i^{(k)})/\|h_\psi(\e_i^{(k)})\| \)
        \STATE \quad Write \( \e_i = \e_i^{(1)} \) for the original image's embedding.
        \medskip
        \STATE \textbf{Generative pass} (original image only, full network).
        \STATE \quad \( \eps_i \sim \pnoise \), \quad \( \logit(t_i) \sim \N(\mu,\sigma^2) \), \quad
        \( \xt_{i,t} = t_i\,\x_i^{(1)} + (1-t_i)\,\eps_i \)
        \STATE \quad \( \c_i \gets \c_\varnothing \) with probability \( p \), else \( \c_i \gets \sg(\e_i) \)
        \STATE \quad \( \e_{\mathrm{gen}}(\xt_{i,t},\, t_i) = \pi_\e\big(f_\theta(\xt_{i,t} \mid t_i,\, \c_i)\big) \) and
        \( \h_i = \pi_\h\big(f_\theta(\xt_{i,t} \mid t_i,\, \c_i)\big) \), one forward
        \STATE \quad \( F_{\theta,\phi}(\xt_{i,t},\, t_i \mid \c_i) \;=\; g_\phi\big(\h_i \mid t_i,\, \c_i\big) \) \hfill (\( \x \)-prediction)
        \medskip
        \STATE \textbf{Prior pass} (original image only, no image forward).
        \STATE \quad Update the EMA frame \( (\vmu, \vsigma) \) from the original image; set \( \estd_i = (\e_i - \vmu)/\vsigma \).
        \STATE \quad \( \eps'_i \sim \pnoise \), \quad \( \logit(t'_i) \sim \N(\mu',\sigma'^2) \), \quad
        \( \estdt_{i,t'} = t'_i\,\estd_i + (1-t'_i)\,\eps'_i \)
        \STATE \quad Evaluate the prior at \( (\estdt_{i,t'},\, t'_i) \).
        \medskip
        \STATE \textbf{Losses}, as in Equation~\plaineqref{eq:joint_loss}.
        \STATE \quad \( \L_{\mathrm{CL}} \) from \( \{\z_i^{(k)}\} \) with negatives gathered across
        GPUs, the multi-view form of Equation~\plaineqref{eq:cl}
        \STATE \quad \( \L_{\mathrm{CFM}} \) on \( F_{\theta,\phi}(\xt_{i,t},\, t_i \mid \c_i) \) against \( \x_i^{(1)} \), Equation~\plaineqref{eq:fm_loss}
        \STATE \quad \( \L_{\mathrm{prior}} \) on \( e_\omega(\estdt_{i,t'},\, t'_i) \) against \( \sg(\estd_i) \)
        \STATE \quad \( \L_{\mathrm{CBC}} = -\mathrm{sim}\big(\e_{\mathrm{gen}}(\xt_{i,t},\, t_i),\, \sg(\e_i)\big) \)
        \medskip
        \STATE \textbf{Balance, backward and update.}
        \STATE \quad \( \lambda \gets \textsc{GradBalance}(\L_{\mathrm{CFM}},\, \L_{\mathrm{CL}}) \)
        \STATE \quad One backward pass on \( \L_{\mathrm{CFM}} + \lambda\,\L_{\mathrm{CL}} + \L_{\mathrm{prior}} + \eta\,\L_{\mathrm{CBC}} \), then \( \mathcal{O}.\mathrm{step}() \)
    \end{algorithmic}
\end{algorithm}

\begin{algorithm}[p]
    \caption{Gradient-norm balancer of Section~\ref{sec:method:weighting}.
    It keeps
    \( \L_{\mathrm{CFM}} \) and \( \L_{\mathrm{CL}} \) at comparable influence on the shared
    encoder by setting \( \lambda \) to the clamped ratio of their moving-average gradient norms
    at \( \varsigma \), which alleviates the need to tune \( \lambda \) per setting. It is called
    with the current losses and step number and returns the weight to use.
    }
    \label{alg:balancer}
    \begin{algorithmic}[1]
        \REQUIRE Shared parameters subset \( \varsigma \subseteq \theta \), the feed-forward of the
        last encoder block
        \REQUIRE Scale \( \lambda_0 \), clamp \( [\lambda_{\min}, \lambda_{\max}] \),
        update period \( U \), decay range \( [\beta_{\min}, \beta_{\max}] \), warmup \( W \),
        small number \( \xi \) (e.g. \( 10^{-8} \)).
        \STATE \textbf{State:} \( \mathrm{EMA}_{\mathrm{CFM}} \gets 0 \),
        \( \mathrm{EMA}_{\mathrm{CL}} \gets 0 \), \( \lambda \gets \lambda_0 \), update count \( k \gets 0 \).
        \STATE \textbf{Inputs at this call:} the two losses, already computed, and the step index \( s \).
        \IF{\( s \bmod U \neq 0 \)}
            \STATE \textbf{return} \( \lambda \) \hfill (cached; the step does no extra work)
        \ENDIF
        \STATE \( k \gets k+1 \)
        \STATE Backpropagate \( \L_{\mathrm{CFM}} \) through the decoder and \( \L_{\mathrm{CL}} \)
        through the head, each down to the output of \( \varsigma \), retaining the graph for
        the subsequent joint backward pass.
        \STATE Continue each adjoint through \( \varsigma \) alone to obtain
        \( \nabla_\varsigma \L_{\mathrm{CFM}} \) and \( \nabla_\varsigma \L_{\mathrm{CL}} \).
        \STATE Local squared norms \( q_\bullet \gets \|\nabla_\varsigma \L_\bullet\|^2 \) for
        \( \bullet \in \{\mathrm{CFM}, \mathrm{CL}\} \).
        \STATE Reduce across GPUs, \( Q_\bullet \gets \textsc{all\_reduce}(q_\bullet) \), and take
        global norms \( n_\bullet \gets \sqrt{Q_\bullet} \).
        \STATE Decay warmup: \( \beta \gets \beta_{\min} + \min(1,\, s/W)\,(\beta_{\max} - \beta_{\min}) \).
        \STATE \( \mathrm{EMA}_\bullet \gets \beta\,\mathrm{EMA}_\bullet + (1-\beta)\,n_\bullet \).
        \STATE \( \lambda \gets \lambda_0 \cdot \mathrm{clamp}\!\left(
        \dfrac{{\mathrm{EMA}}_{\mathrm{CFM}} + \xi}{{\mathrm{EMA}}_{\mathrm{CL}} + \xi},\;
        \lambda_{\min},\, \lambda_{\max}\right) \)
        \STATE \textbf{return} \( \lambda \)
    \end{algorithmic}
\end{algorithm}

\FloatBarrier
\section{Additional Experimental Results}
\label{app:add_exps}
\subsection{Detailed ablations}
\label{app:detailed_ablations}

This section expands the construction ablation in Section~\ref{sec:exp_ablation}.
Table~\ref{tab:abl_ladder} decomposes our joint-training recipe. It first isolates
adding the contrastive loss with different view splits to the flow matching model.
It then cumulatively adds gradient-norm balancing, representation self-conditioning with the learned prior, and, lastly, the CBC loss.
All results are reported using the JiT-B/16 recipe, trained for $200$ epochs in the unconditional setting.
Table~\ref{tab:cl_comparison} then tracks \methodname{} and capacity-matched contrastive-only
controls over training. All rows train the same $8$-layer encoder from scratch, while the controls
use learning rate, temperature, and weight decay tuned independently for contrastive learning.
Each control is a single $600$-epoch run with a cosine learning-rate schedule, evaluated at its
$200$-, $400$-, and $600$-epoch checkpoints.

\begin{table}[t]
\caption{\textbf{Detailed ablation ladder.} All rows use the JiT-B/16 architecture under the same recipe for
$200$ epochs and unconditional setting. Rungs~0 to~3 isolate the joint-training recipe
before representation self-conditioning and the learned prior. Rung~4 adds both components, and
rung~5 adds the conditioned-branch consistency loss $\Lcbc$ to complete \methodname{}. Rung~0 is
our unconditional flow-matching reproduction. \emph{views} gives the number of original and
augmented inputs, respectively; \emph{bal.} denotes gradient-norm balancing, which rescales the
contrastive weight online.}
\label{tab:abl_ladder}
\vspace{1pt}
\centering\small
\setlength{\tabcolsep}{4.5pt}
\begin{tabular}{ll@{\hskip 8pt}c@{\hskip 11pt}rrrrr@{\hskip 11pt}rr}
\toprule
 & & & \multicolumn{5}{c}{\emph{generation}} & \multicolumn{2}{c}{\emph{representation}} \\
\cmidrule(r{11pt}){4-8}\cmidrule{9-10}
Rung & views & $\lambda_{CL}$ & FID$\downarrow$ & sFID$\downarrow$ & IS$\uparrow$ & Prec.$\uparrow$ & Rec.$\uparrow$ & Acc.$\uparrow$ & transfer-8$\uparrow$ \\
\midrule
\multicolumn{10}{l}{\emph{0. flow matching alone (JiT-B/16 unconditional)}} \\
 & $1+0$ & --- & 61.17 & 9.05 & 17.53 & .384 & .559 & 10.65 & 26.19 \\
\midrule
\multicolumn{10}{l}{\emph{1. add contrastive learning: which views?}} \\
1i & $3+0$ & 0.025 & 59.08 & 9.25 & 17.96 & .385 & .571 & 33.74 & 50.93 \\
1ii & $0+3$ & 0.025 & 70.46 & 16.89 & 15.18 & .237 & \textbf{.625} & 65.30 & 77.36 \\
1iii & $1+2$ & 0.025 & 57.40 & 9.65 & 18.72 & .406 & .599 & 65.43 & 76.69 \\
\midrule
\multicolumn{10}{l}{\emph{2. sweep the contrastive weight}} \\
2a & $1+2$ & 0.01 & 56.00 & 8.83 & 19.11 & .413 & .574 & 62.40 & 74.53 \\
2b & $1+2$ & 0.025 & 57.18 & 8.83 & 19.27 & .411 & .579 & 65.47 & 76.00 \\
2c & $1+2$ & 0.06 & 60.25 & 9.09 & 18.51 & .389 & .578 & 65.37 & 75.07 \\
\midrule
\multicolumn{10}{l}{\emph{3. online gradient balancer instead of manual tuning}} \\
 & $1+2$ & bal. & 54.79 & 8.62 & 19.76 & .425 & .586 & 65.49 & 75.95 \\
\midrule
\multicolumn{10}{l}{\emph{4. add representation self-conditioning and the prior}} \\
 & $1+2$ & bal. & 11.66 & 7.49 & 95.49 & .680 & .520 & 70.97 & 78.86 \\
\midrule
\multicolumn{10}{l}{\emph{5. add CBC (final \methodname{})}} \\
 & $1+2$ & bal. & \textbf{10.10} & \textbf{7.24} & \textbf{101.91} & \textbf{.715} & .497 & \textbf{71.32} & \textbf{79.73} \\
\bottomrule
\end{tabular}
\end{table}

\paragraph{Contrastive learning benefits both tasks before self-conditioning.}
Adding contrastive learning with one original and two augmented inputs substantially improves
representation quality while modestly improving the still weak pre-prior generation baseline.
FID decreases from $61.17$ to $57.40$, while Acc.\ rises from $10.65\%$ to $65.43\%$ and
transfer-8 from $26.19\%$ to $76.69\%$. Using only augmented inputs obtains similar
representation quality but degrades FID to $70.46$. From rung~2 on, the representation is read
from a \cls{} token. With it, gradient-norm balancing reaches $54.79$ FID, better than every fixed
contrastive weight in rung~2 ($56.00$ to $60.25$), without materially changing representation
quality and without manual selection of the contrastive weight.

\paragraph{Self-conditioning is the generative lever.}
Adding representation self-conditioning together with the learned prior converts the improved
representation into substantially better generation, reducing FID to $11.66$ while improving
Acc.\ and transfer-8 to $70.97\%$ and $78.86\%$, respectively. Adding CBC completes
\methodname{} and further improves generation, reducing FID to $10.10$.

\begin{table}[t]
\caption{\textbf{Representation quality throughout training.} We compare \methodname{} with
capacity-matched contrastive-only controls trained from scratch. All rows use the same $8$-layer
encoder; the controls have no generative objective and are independently tuned for contrastive
learning. During inference, the models use approximately the same number of parameters, $57.2$M for
\methodname{} and $57.1$M for the controls.}
\label{tab:cl_comparison}
\centering\small
\setlength{\tabcolsep}{4.5pt}
\begin{tabular}{llrrrrrr}
\toprule
& & \multicolumn{3}{c}{Acc.$\uparrow$} & \multicolumn{3}{c}{transfer-8$\uparrow$} \\
\cmidrule(r){3-5}\cmidrule(l){6-8}
Method & views & $200$ & $400$ & $600$ & $200$ & $400$ & $600$ \\
\midrule
CL only & $2$ augmented & 70.48 & 73.61 & 74.84 & 77.04 & 78.88 & 77.09 \\
CL only & $1$ original $+$ $2$ augmented & \textbf{71.76} & \textbf{74.15} & \textbf{74.98} & 77.68 & 78.66 & 76.61 \\
\rowcolor{gray!12}
\methodname{} (ours) & $1$ original $+$ $2$ augmented & 71.32 & 72.35 & 72.61 & \textbf{79.73} & \textbf{79.89} & \textbf{79.87} \\
\bottomrule
\end{tabular}
\end{table}

\paragraph{Comparison with contrastive-only training.}
The control with one original and two augmented views achieves the highest Acc.\ at
every budget, and the control with two augmented views also exceeds \methodname{} at $400$ and
$600$ epochs, whereas \methodname{} achieves the highest transfer-8 score at every budget. Therefore, across the $200$, $400$, and $600$-epoch budgets,
\methodname{} learns representations broadly comparable to those from dedicated
contrastive training.

\subsection{Controlled comparison at a shorter training budget}
\label{app:controlled_e200}

Table~\ref{tab:controlled_e200} repeats the matched comparison after $200$ epochs. \methodname{}
already leads every single-run baseline on both generation and representation metrics, and exceeds
RCG's $400$-epoch-equivalent checkpoint on every reported metric.
\begin{table}[t]
\centering
\small
\caption{\textbf{Controlled class-unconditional comparison at 200 epochs.} All methods use the
same JiT-B/16 flow-matching recipe on ImageNet ($256\times256$). The RCG row is a
reference: its 200-epoch generator is preceded by a frozen 200-epoch prior stage, for $400$
total epoch-equivalents.}
\label{tab:controlled_e200}
\setlength{\tabcolsep}{4pt}
\begin{tabular}{lrrrr}
\toprule
Method & FID$\downarrow$ & IS$\uparrow$ & Acc.$\uparrow$ & transfer-8$\uparrow$ \\
\midrule
JiT                    & 61.17 & 17.5  & 10.65 & 26.19 \\
JiT $+$ \dispersive{} & 58.68 & 18.0  & 8.88  & 24.06 \\
JiT $+$ REPA          & 53.07 & 23.0  & 63.94 & 73.59 \\
JiT $+$ iREPA         & 48.43 & 24.8  & 65.81 & 76.37 \\
\midrule
RCG (JiT; $400$ total) & 14.27 & 93.4 & 6.54 & 30.28 \\
\midrule
\rowcolor{gray!12}
\methodname{} (ours)  & \textbf{10.10} & \textbf{101.9} & \textbf{71.32} & \textbf{79.73} \\
\bottomrule
\end{tabular}
\end{table}

\subsection{Unconditional generation system-level comparison}
\label{app:unc_sys}

Table~\ref{tab:system_full} presents latent and pixel-space published unconditional ImageNet systems in a common
dependency view. The lowest latent-space FIDs come from multi-stage pipelines built on pretrained autoencoders or tokenizers and, for ReDi and DLC, external DINOv2 features trained beyond ImageNet. Within the
pixel-space block, \methodname{} (L/16) reaches $5.89$ FID without an autoencoder, tokenizer, external model, labels,
or external data, and representation guidance reduces this to $3.47$ using a prior learned in the
same run. Since architectures, protocols, and training budgets differ across systems, the table
should be read as a system-level comparison. Tables~\ref{tab:main_unguided} and~\ref{tab:guided} give the controlled matched-recipe comparisons.
\providecommand{\headspace}{\hspace{.2em}}
\providecommand{\shrink}[1]{{\fontsize{6pt}{7.2pt}\selectfont{#1}}}

\begin{table}[t]
\caption{\textbf{Unconditional ImageNet-1K~\citep{deng2009imagenet} $256\times256$ generation systems and their dependencies.}
FID and IS as reported by the corresponding work. \emph{Lat.} names the pretrained
VQGAN tokenizer~\citep{Esser2021} or latent autoencoder, SD-VAE\textsuperscript{a} or an LDM autoencoder (LDM-AE)~\citep{rombachHighResolutionImageSynthesis2022}, and \emph{Ext.} any other pretrained
model. \emph{Lab.} marks label supervision anywhere in the
pipeline, including the supervised perceptual loss~\citep{zhang2018unreasonable} inherited by
VQGAN and SD-VAE. \emph{External data} marks a dependency trained on data beyond ImageNet-1K.
In particular, SD-VAE uses OpenImages~\citep{Krasin2017} and, per its model card, is fine-tuned on
LAION-Aesthetics~\citep{schuhmann2022laion5b} and an unreleased LAION-Humans subset, DINOv2 uses LVD-142M~\citep{oquab2023dinov2},
and CLIP uses WIT-400M~\citep{radfordLearningTransferableVisual2021}. \emph{Multi-stage} marks more than one separately trained
component. \emph{Params} count all inference-time models, including a latent autoencoder or tokenizer
where applicable. $\dagger$ retrieves real images at sampling. $^*$ denotes DiT values reported
by ReDi and $^\ddagger$ values reported by RCG. \shrink{(CFG)} and \shrink{(RG)} denote
classifier-free and representation guidance, respectively. Training budgets are not matched. \methodname{} (B/16) is reported at $800$ epochs and
\methodname{} (L/16) at $400$ epochs.}
\label{tab:system_full}
\vspace{1pt}
\centering\scriptsize
\setlength{\tabcolsep}{3.25pt}
\resizebox{\textwidth}{!}{\begin{tabular}{l@{\hskip 6pt}llccccrD{.}{.}{2.2}D{.}{.}{3.1}}
\toprule
& \multicolumn{5}{c}{\emph{dependencies}} & \multicolumn{3}{c}{\emph{generation}} \\
\cmidrule(r{5pt}){2-6}\cmidrule{7-9}
Method & Lat. & Ext. & Lab. & External data & Multi-stage & Params & \multicolumn{1}{c}{FID$\downarrow$} & \multicolumn{1}{c}{IS$\uparrow$} \\
\midrule
\multicolumn{9}{l}{\emph{latent-space}} \\
\headspace DiT-B/2$^*$ \citep{peebles_scalable_2023} & SD-VAE & --- & \yes & \yes & \yes & 179M & 69.3\phantom{0} & \multicolumn{1}{c}{---} \\
\headspace DiT-XL/2$^*$ \citep{peebles_scalable_2023} & SD-VAE & --- & \yes & \yes & \yes & 724M & 44.6\phantom{0} & \multicolumn{1}{c}{---} \\
\headspace ReDi (DiT-B/2) \citep{kouzelis2026boostinggenerativeimagemodeling} & SD-VAE & DINOv2 & \yes & \yes & \yes & 179M & 51.7\phantom{0} & \multicolumn{1}{c}{---} \\
\headspace ReDi (DiT-B/2) \shrink{(RG)} \citep{kouzelis2026boostinggenerativeimagemodeling} & SD-VAE & DINOv2 & \yes & \yes & \yes & 179M & 47.3\phantom{0} & \multicolumn{1}{c}{---} \\
\headspace ReDi (DiT-XL/2) \citep{kouzelis2026boostinggenerativeimagemodeling} & SD-VAE & DINOv2 & \yes & \yes & \yes & 724M & 25.1\phantom{0} & \multicolumn{1}{c}{---} \\
\headspace ReDi (DiT-XL/2) \shrink{(RG)} \citep{kouzelis2026boostinggenerativeimagemodeling} & SD-VAE & DINOv2 & \yes & \yes & \yes & 724M & 22.6\phantom{0} & \multicolumn{1}{c}{---} \\
\headspace MaskGIT$^\ddagger$ \citep{Chang2022} & VQGAN & --- & \yes & \no & \yes & 227M & 20.72 & 42.1 \\
\headspace ADDP-B \citep{tian2023addp} & VQGAN & --- & \yes & \no & \yes & 176M & 8.9\phantom{0} & 95.3 \\
\headspace ADDP-L \citep{tian2023addp} & VQGAN & --- & \yes & \no & \yes & 439M & 7.6\phantom{0} & 105.1 \\
\headspace MAGE-B \citep{52203mage} & VQGAN & --- & \yes & \no & \yes & 176M & 8.67 & 94.8 \\
\headspace MAGE-L \citep{52203mage} & VQGAN & --- & \yes & \no & \yes & 439M & 7.04 & 123.5 \\
\headspace RDM-IN$^\dagger$ \shrink{(CFG)} \citep{blattmann2022retrieval} & LDM-AE & CLIP & \yes & \yes & \yes & 400M & 5.91 & 158.8 \\
\headspace RCG (MAGE-B) \citep{NEURIPS2024_e304d374} & VQGAN & MoCo v3 & \yes & \no & \yes & 239M & 3.98 & 177.8 \\
\headspace RCG (MAGE-L) \citep{NEURIPS2024_e304d374} & VQGAN & MoCo v3 & \yes & \no & \yes & 502M & 3.44 & 186.9 \\
\headspace RCG (MAGE-B) \shrink{(CFG)} \citep{NEURIPS2024_e304d374} & VQGAN & MoCo v3 & \yes & \no & \yes & 239M & 3.19 & 212.6 \\
\headspace RCG (MAGE-L) \shrink{(CFG)} \citep{NEURIPS2024_e304d374} & VQGAN & MoCo v3 & \yes & \no & \yes & 502M & 2.15 & 253.4 \\
\headspace DLC512 \citep{lavoie2025compositionaldiscretelatentcode} & SD-VAE & DINOv2 & \yes & \yes & \yes & 1198M & 2.00 & 260.7 \\
\headspace DLC512 \shrink{(CFG)} \citep{lavoie2025compositionaldiscretelatentcode} & SD-VAE & DINOv2 & \yes & \yes & \yes & 1198M & 1.59 & 255.4 \\
\midrule
\multicolumn{9}{l}{\emph{pixel-space}} \\
\headspace BigGAN$^\ddagger$ \citep{donahue2019large} & --- & --- & \no & \no & \no & $\sim$70M & 38.61 & 24.7 \\
\headspace ADM \citep{dhariwal2021diffusion} & --- & --- & \no & \no & \no & 554M & 26.21 & 39.7 \\
\headspace RCDM$^\dagger$ \citep{bordes2021high} & --- & DINO & \no & \no & \yes & --- & 19.0\phantom{0} & 51.9 \\
\headspace IC-GAN$^\dagger$ \citep{casanova2021instance} & --- & SwAV & \no & \no & \yes & $\sim$75M & 15.6\phantom{0} & 59.0 \\
\headspace RCG (ADM) \citep{NEURIPS2024_e304d374} & --- & MoCo v3 & \no & \no & \yes & $\sim$617M & 6.24 & 136.9 \\
\midrule
\rowcolor{gray!12}
\headspace \methodname{} (B/16) & --- & --- & \no & \no & \no & 231M & 7.67 & 127.6 \\
\rowcolor{gray!12}
\headspace \methodname{} (B/16) \shrink{(RG)} & --- & --- & \no & \no & \no & 231M & 3.51 & 178.7 \\
\rowcolor{gray!12}
\headspace \methodname{} (L/16) & --- & --- & \no & \no & \no & 559M & 5.89 & 151.6 \\
\rowcolor{gray!12}
\headspace \methodname{} (L/16) \shrink{(RG)} & --- & --- & \no & \no & \no & 559M & 3.47 & 201.7 \\
\bottomrule
\end{tabular}}
\par\vspace{1pt}\footnotesize\textsuperscript{a}\;SD-VAE denotes the commonly used public
\texttt{sd-vae-ft-ema} checkpoint.
\end{table}

\subsection{Analysis of the representation prior}
\label{app:prior}

At inference, the generator is conditioned on a representation sampled from the prior $\priornet$, so
sample quality depends on both the generator and the prior. To separate the two, we replace the prior
sample with the representation $\e(\x)$ of a real training image, which we call \emph{oracle}
conditioning. It requires real images at generation time, so it serves as a diagnostic rather than as a
way to sample.

\paragraph{Protocol.}
All results use \methodname{} (B/16) at $400$ epochs, without guidance. We draw a conditioning set of
$50$ images per class, $50$K in total, from the ImageNet training set with a fixed seed, and generate
one image from the representation of each. The prior arm consists of $50$K images generated by the same
model from prior samples, namely the samples behind the $8.92$ FID in Table~\ref{tab:main_unguided}. We
compute precision and recall (Appendix~\ref{app:evaluation}) of both arms against two references: the
conditioning set itself, and a disjoint set drawn in the same way from the remaining training images.
For each reference we use a random $10$K subset for precision and recall computation, matching what ADM does.

\paragraph{Real representations raise recall, prior samples raise precision.}
Table~\ref{tab:semiparam} reports the results. The prior arm serves as a control: its samples were
generated from neither reference, and it scores nearly the same against both, with precision $.718$
and $.722$ and recall $.524$ and $.510$. The two references therefore behave alike for samples not
generated from either, so a gap between them in the oracle arm reflects that its samples were generated
from the conditioning images.

The oracle arm shows such a gap. Its recall is $.828$ against the conditioning set and $.664$ against
the disjoint set, so the generated images cover the specific images they were conditioned on better
than other images from the same distribution. The representation thus carries image-specific content
into the generated image.

Against the disjoint set, where the oracle arm has no such advantage, it still reaches higher recall
than the prior arm, $.664$ against $.510$, at lower precision, $.626$ against $.722$. The two arms share
the generator and differ only in the conditioning vectors, so this difference is due to the prior:
conditioned on real representations, the samples cover more of the reference images, while conditioned
on prior samples, a larger fraction of them falls inside the reference manifold.

\begin{table}[t]
\caption{\textbf{Oracle conditioning against the learned prior.} \methodname{} (B/16) at $400$ epochs,
unguided, $50$K samples per arm. The \emph{oracle} arm ($\dagger$, semi-parametric) replaces the prior
sample with the representation of a real training image, one sample per representation. Each arm is
scored against $10$K images from the conditioning set and from a disjoint set of training images.
With the same generator, real representations give higher recall and prior samples higher precision.}
\label{tab:semiparam}
\vspace{1pt}
\centering\small
\setlength{\tabcolsep}{5pt}
\begin{tabular}{ll@{\hskip 13pt}rr}
\toprule
conditioning & reference & Prec.$\uparrow$ & Rec.$\uparrow$ \\
\midrule
\multirow{2}{*}{prior $\priornet$}
     & conditioning set & \textbf{.718} & .524 \\
     & disjoint set     & \textbf{.722} & .510 \\
\midrule
\multirow{2}{*}{oracle repr.$^\dagger$}
     & conditioning set & .645 & \textbf{.828} \\
     & disjoint set     & .626 & \textbf{.664} \\
\bottomrule
\end{tabular}
\end{table}

\FloatBarrier
\section{Flow Matching equivalence under a learned conditioning}
\label{app:grad_equiv}
This appendix explains why the shared embedding is passed through stop-gradient before conditioning the generative and prior branches.

Our representation-conditioned generator uses the Conditional Flow Matching (CFM) objective~\citep{lipman_flow_2023} for training.
Since the conditioning embedding is itself produced by the shared encoder, we need to ensure that optimizing this conditional objective is still equivalent to optimizing the corresponding marginal Flow Matching objective.
We will present the argument for the generation path, but the same reasoning applies to the representation prior.

At a given training step, we consider a sampled pair \( (\x,\c) \), where \( \x \) is a clean data sample and \( \c=\c_\theta(\x) \) is the embedding produced by the shared encoder for that same sample.
Following the standard FM setup, let
\begin{equation}
    \x \sim p_{\mathrm{data}},
    \qquad
    \eps \sim p_{\mathrm{noise}},
    \qquad
    \c_\theta(\x) = \pi_\e\circ f_\theta(\x\mid 1,\e_{\varnothing}),
    \label{eq:conditioning_embedding_gradient_equiv}
\end{equation}
and define the bridge state and conditional velocity by
\begin{equation}
    \x_t = t\x+(1-t)\eps,
    \qquad
    \v_t = \frac{d\x_t}{dt}=\x-\eps,
    \label{eq:image_velocity_target_gradient_equiv}
\end{equation}
where \( \x \) is the clean data endpoint, \( \eps \) is the noise endpoint, and \( \x_t \) is the intermediate point on the bridge between them.
This is the conditional path written \( \xt_t \) in Section~\ref{sec:prelim:fm}; we drop the tilde in this appendix.
For a generic intermediate location \( \u \), the marginal conditional velocity is
\begin{equation}
    \v_t^\star(\u\mid\c)
    =
    \E_{\x,\eps}\!\left[\v_t\mid \x_t=\u,\c,t\right].
    \label{eq:marginal_velocity_gradient_equiv}
\end{equation}
This is the average bridge velocity among all sampled pairs \((\x,\eps)\) whose interpolant passes through the same intermediate point \( \u \) under the same condition \( \c \) at time \( t \).

The network predicts the clean image, \( F_\theta(\x_t, t \mid \c) = \genb(\x_t, t \mid \c) \), where we write \( \theta \) for all generator parameters, including the decoder's \( \phi \), and the velocity model is the velocity it implies,
\begin{equation}
    \v_{\theta}(\x_t, t \mid \c) = \frac{F_\theta(\x_t, t \mid \c) - \x_t}{1-t}.
    \label{eq:velocity_model_gradient_equiv}
\end{equation}

The CFM and FM objectives for the generation path are
\begin{align}
    \L_{\mathrm{CFM}}(\theta;\c_\theta)
    &= \E_{\x,\eps,t}\left[ \|\v_{\theta}(\x_t,t\mid\c)-\v_t\|^2 \right],
    \label{eq:image_cfm_gradient_equiv} \\
    \L_{\mathrm{FM}}(\theta;\c_\theta)
    &= \E_{\x,\eps,t}\left[ \|\v_{\theta}(\x_t,t\mid\c)-\v_t^\star(\x_t\mid\c)\|^2 \right].
    \label{eq:image_fm_gradient_equiv}
\end{align}

The following result follows the same decomposition as Theorem~2 of~\citet{lipman_flow_2023}, but accounts for the fact that our conditioning variable is learned.
\begin{cor}[FM and CFM with learned conditioning]
\label{cor:learned_conditioning_fm_cfm_relation}
Let
\begin{equation*}
    m_t=\v_t^\star(\x_t\mid\c)
    =
    \E_{\x,\eps}[\v_t\mid \x_t,\c,t].
\end{equation*}
If the conditioning embedding \(\c_\theta\) is differentiated, then
\begin{equation}
    \L_{\mathrm{CFM}}(\theta;\c_\theta)
    =
    \L_{\mathrm{FM}}(\theta;\c_\theta)
    +
    R(\theta),
    \qquad
    R(\theta)
    =
    \E_{\x,\eps,t}\!\left[\|\v_t-m_t\|^2\right].
    \label{eq:learned_conditioning_cfm_fm_relation}
\end{equation}
Consequently,
\begin{equation}
    \nabla_\theta\L_{\mathrm{CFM}}(\theta;\c_\theta)
    =
    \nabla_\theta\L_{\mathrm{FM}}(\theta;\c_\theta)
    +
    \nabla_\theta R(\theta).
    \label{eq:learned_conditioning_cfm_fm_gradient_relation}
\end{equation}
\end{cor}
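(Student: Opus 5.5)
The plan is to follow the bias--variance decomposition behind Theorem~2 of \citet{lipman_flow_2023}. Throughout, I write \(\v_\theta=\v_\theta(\x_t,t\mid\c)\) with \(\c=\c_\theta(\x)\). For each fixed \((\x,\eps,t)\) I expand the integrand of \eqref{eq:image_cfm_gradient_equiv} by adding and subtracting \(m_t\):
\[
\|\v_\theta-\v_t\|^2=\|\v_\theta-m_t\|^2+\|\v_t-m_t\|^2-2\,(\v_\theta-m_t)^\top(\v_t-m_t).
\]
Taking \(\E_{\x,\eps,t}\), the first term is exactly \(\L_{\mathrm{FM}}(\theta;\c_\theta)\) from \eqref{eq:image_fm_gradient_equiv}, and the second is \(R(\theta)\). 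The identity \eqref{eq:learned_conditioning_cfm_fm_relation} therefore reduces to showing that the cross term has zero expectation.

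For the cross term I condition on the triple \((\x_t,\c,t)\) and use the tower property. Given \((\x_t,\c,t)\), the factor \(\v_\theta-m_t\) is fixed, because \(\v_\theta\) is a deterministic function of \((\x_t,t,\c)\) and \(m_t\) is by definition a function of the same triple. Hence
\[
\E\big[(\v_\theta-m_t)^\top(\v_t-m_t)\mid \x_t,\c,t\big]=(\v_\theta-m_t)^\top\big(\E[\v_t\mid\x_t,\c,t]-m_t\big)=0 .
\]
The subtlety to flag is that \(\c=\c_\theta(\x)\) is a deterministic function of the data endpoint \(\x\), so conditioning on \(\c\) is conditioning on a learned statistic of \(\x\). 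The argument still goes through, since \(m_t\) is defined through the joint law of \((\x,\eps,\x_t,\c)\) induced by the current \(\theta\). I would state explicitly that this law, and so \(m_t\) and \(R\), depend on \(\theta\); I would also assume square-integrability of \(\v_t\) and \(\v_\theta\) so that all expectations are finite. This establishes \eqref{eq:learned_conditioning_cfm_fm_relation} as an identity of functions of \(\theta\).

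The gradient statement \eqref{eq:learned_conditioning_cfm_fm_gradient_relation} then follows by differentiating the identity in \(\theta\), assuming enough regularity to interchange \(\nabla_\theta\) and expectation (e.g.\ dominated convergence). The real content, and the main obstacle, is interpreting the result rather than proving it. In standard FM the residual \(R\) is independent of the parameters, so \(\nabla R=0\) and the two gradients coincide. Here \(\c_\theta\) changes the conditioning \(\sigma\)-algebra, so \(m_t\) and therefore \(R(\theta)=\E\|\v_t-m_t\|^2\) vary with \(\theta\). Differentiating \(R\) is delicate because \(m_t\) is a conditional expectation whose conditioning variable moves with \(\theta\). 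I would not compute \(\nabla R\); I would only observe that it is generically nonzero, since a more informative \(\c\) shrinks the conditional variance. Thus differentiating through \(\c_\theta\) adds the spurious gradient \(\nabla_\theta R\), which explains why the stop-gradient on \(\e(\x)\) is used. With the stop-gradient, \(\c\) is held fixed within each update, \(R\) is constant in the differentiated parameters, and the two gradients agree as in \citet{lipman_flow_2023}.
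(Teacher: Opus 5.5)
Your proposal is correct and essentially matches the paper's argument. The paper expands both integrands around zero and shows $\E\langle \v_\theta,m_t\rangle=\E\langle \v_\theta,\v_t\rangle$ by the tower property on $(\x_t,t,\c)$. It then reuses that conditioning step, with $m_t$ in place of $\v_\theta$, to turn $\E\|\v_t\|^2-\E\|m_t\|^2$ into $\E\|\v_t-m_t\|^2$; this is your add-and-subtract decomposition split into two steps.

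Your caveat about the gradient identity is a regularity point the paper leaves implicit. The identity needs $R$, and hence $\L_{\mathrm{FM}}$, to be differentiable, which is not automatic because $m_t$ depends on $\theta$ through the conditioning itself.
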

Thus, for non-stopped \(\c_\theta\), CFM and FM are not gradient-equivalent in general.
If \(\bar{\c}_\theta=\sg(\c_\theta)\), then \(R\) is fixed in the FM/CFM computation graph, so the extra term vanishes and the standard FM/CFM gradient equivalence is recovered.
In our architecture, this is precisely why we apply stop-gradient to the shared embedding before using it to condition the generative and prior branches.

It is worth naming what the extra term rewards, and what that would do to the representation.
By the tower property \( R(\theta)=\E\big[\mathrm{Var}(\v_t\mid\x_t,t,\c)\big] \), where
\( \mathrm{Var} \) is the total conditional variance. From \( \x_t=t\x+(1-t)\eps \) we have \( \x-\x_t=(1-t)(\x-\eps) \), so the velocity
target is the affine function \( \v_t=(\x-\x_t)/(1-t) \) of the clean sample, whose multiplier and
offset are both fixed once \( (\x_t,t) \) are given. Its variance therefore scales by
\( (1-t)^{-2} \), and
\begin{equation}
    R(\theta)
    =
    \E_{\x_t,t,\c}\!\left[\,\frac{\mathrm{Var}\!\left(\x \mid \x_t,t,\c_\theta(\x)\right)}{(1-t)^2}\,\right],
    \label{eq:residual_variance_is_about_x}
\end{equation}
is the residual uncertainty about the \emph{clean image} once the noised view and the conditioning are
known. Here \( \theta \) enters only through \( \c_\theta(\x) \), since \( \v_t \), \( \x_t \) and
\( t \) do not involve it. The two extremes bound what that term can do: an uninformative \( \c \)
leaves the full posterior uncertainty \( \mathrm{Var}(\x\mid\x_t,t) \), while a \( \c \) that
determines \( \x \) drives \( R \) to zero. Lowering \( R \) therefore means making
\( \c_\theta(\x) \) more informative about \( \x \) itself.

What matters for the representation is which information that recruits. The cheapest way
to pin down \( \x \) from \( \x_t \) is to carry the detail that noising destroys, and in the limit
\( R \) is minimized by a \( \c \) that is a lossless code of \( \x \), at which point the embedding
is the image again and carries no abstraction. That is exactly the information \( \L_{\mathrm{CL}} \) is
built to discard, since augmentation invariance means being blind to crops, color, grayscale, and other augmentations.
Without the stop-gradient, the generative objective would therefore not merely break an identity: it
would affect the contents of the shared embedding, and the solution it prefers is a
compression code rather than a representation. The stop-gradient does not prevent generation from
benefiting from whatever \( \e \) carries, it prevents the generative loss from choosing what that
is, leaving that to \( \L_{\mathrm{CL}} \).

\begin{proof}
    Following Theorem~2 of~\citet{lipman_flow_2023}, write
    \(V_\theta=\v_{\theta}(\x_t,t\mid\c)\) and
    \(m_t=\v_t^\star(\x_t\mid\c)\).
    Expanding the two squared losses gives
    \begin{align}
        \|V_\theta-\v_t\|^2
        &=
        \|V_\theta\|^2
        -2\left\langle V_\theta, \v_t \right\rangle
        +\|\v_t\|^2,
        \label{eq:cfm_integrand_expansion} \\
        \|V_\theta-m_t\|^2
        &=
        \|V_\theta\|^2
        -2\left\langle V_\theta, m_t \right\rangle
        +\|m_t\|^2.
        \label{eq:fm_integrand_expansion}
    \end{align}
    The first terms are identical.
    For the cross term, by definition
    \begin{equation*}
        m_t=\E_{\x,\eps}[\v_t\mid \x_t,t,\c],
    \end{equation*}
    and \(V_\theta=\v_{\theta}(\x_t,t\mid\c)\) is a deterministic function of the same conditioning
    variables. The first equality below is then the tower property, using that \( (\x_t,t,\c) \) is a
    function of \( (\x,\eps,t) \); the second takes \( V_\theta \) inside the inner conditional
    expectation, which is legitimate because it is measurable with respect to exactly the variables
    being conditioned on; the third is the tower property again. This is the conditioning step of the
    Flow Matching proof with the conditioning set enlarged to include \( \c \),
    \begin{align}
        \E_{\x,\eps,t}\!\left[\left\langle V_\theta, m_t \right\rangle\right]
        &=
        \E_{\x_t,t,\c}\!\left[
        \left\langle
        V_\theta,
        \E_{\x,\eps}[\v_t\mid \x_t,t,\c]
        \right\rangle
        \right] \nonumber \\
        &=
        \E_{\x_t,t,\c}\!\left[
        \E_{\x,\eps}\!\left[
        \left\langle V_\theta, \v_t \right\rangle
        \mid \x_t,t,\c
        \right]
        \right] \nonumber \\
        &=
        \E_{\x,\eps,t}\!\left[\left\langle V_\theta, \v_t \right\rangle\right].
        \label{eq:conditional_cross_term_identity}
    \end{align}
    None of those three steps requires \( \c \) to be independent of \( \theta \), so a learned
    conditioning leaves the cross term intact and its entire effect is confined to the last terms.
    Those are not equal in general: the CFM objective contains
    \(\E\|\v_t\|^2\), whereas the FM objective contains
    \(\E\|m_t\|^2\).
    Instead, the difference is the expected residual variance of the target velocity after conditioning on \( (\x_t,t,\c) \):
    \begin{align}
        \L_{\mathrm{CFM}}(\theta;\c_\theta)
        -
        \L_{\mathrm{FM}}(\theta;\c_\theta)
        &=
        \E_{\x,\eps,t}\|\v_t\|^2
        -
        \E_{\x,\eps,t}\|m_t\|^2 \nonumber \\
        &=
        \E_{\x,\eps,t}\!\left[
        \|\v_t-m_t\|^2
        \right].
                \label{eq:cfm_fm_target_variance_difference}
    \end{align}
    The last equality is the conditioning step once more, with \( m_t \) now playing the role
    \( V_\theta \) played above. Since \( m_t \) is itself a function of \( (\x_t,t,\c) \),
    \begin{equation*}
        \E\langle \v_t, m_t \rangle
        = \E\big[\langle \E[\v_t\mid\x_t,t,\c],\, m_t \rangle\big]
        = \E\|m_t\|^2 ,
    \end{equation*}
    so expanding \( \E\|\v_t-m_t\|^2 = \E\|\v_t\|^2 - 2\E\langle \v_t,m_t\rangle + \E\|m_t\|^2 \)
    collapses the cross term and leaves \( \E\|\v_t\|^2-\E\|m_t\|^2 \).

    Therefore, when the conditioning value is treated as fixed, this difference is independent of the velocity model parameters, and the CFM and FM gradients with respect to those parameters are equal.
    However, when \(\c_\theta\) is not fixed, \(m_t\) and the conditional variance term \(R(\theta)\) both depend on \(\theta\) through the conditioning path.
    Differentiating \Eqref{eq:learned_conditioning_cfm_fm_relation} gives \Eqref{eq:learned_conditioning_cfm_fm_gradient_relation}.
    Since \(\nabla_\theta R(\theta)\) need not be zero, the gradients are not equivalent in general.
    With stop-gradient, \(\partial\bar{\c}_\theta/\partial\theta=0\), so the derivative of \(R\) through the conditioning path is zero and the standard fixed-conditioning FM identity applies.
\end{proof}

\paragraph{Short counterexample without stop-gradient.}
The failure can already occur for a deterministic linear encoder. Take
\( \x,\eps\sim\N(0,\operatorname{diag}(2,1)) \) independently in \( \R^2 \) and fix
\( t=\tfrac12 \), so that \( \x_t=\tfrac12(\x+\eps) \) and \( \v_t=\x-\eps \), and let the
bottleneck encoder be \( \c_\theta(\x)=x_1+\theta x_2 \).

Since \( \operatorname{Cov}(\v_t,\x_t)=\tfrac12\operatorname{diag}(2,1)-\tfrac12\operatorname{diag}(2,1)=0 \)
and the pair is jointly Gaussian, \( \v_t \) is independent of \( \x_t \), so whatever
\( (\x_t,\c_\theta) \) reveals about \( \v_t \) arrives through \( \c_\theta \). Substituting the
definitions gives \( \c_\theta-x_{t,1}-\theta x_{t,2}=\tfrac12(v_{t,1}+\theta v_{t,2}) \), so
conditioning on \( (\x_t,\c_\theta) \) is conditioning on the single statistic
\( U=v_{t,1}+\theta v_{t,2} \), and \( m_t=\E[\v_t\mid U] \).

Now \( \v_t\sim\N(0,\Sigma_v) \) with \( \Sigma_v=\operatorname{diag}(4,2) \), so
\( \operatorname{Cov}(\v_t,U)=\Sigma_v(1,\theta)^{\!\top}=(4,2\theta)^{\!\top} \) and
\( \operatorname{Var}(U)=4+2\theta^2 \). For a jointly Gaussian pair the conditional mean is the
linear projection \( m_t=\operatorname{Cov}(\v_t,U)\,U/\operatorname{Var}(U) \), whose residual has
covariance \( \Sigma_v-\operatorname{Cov}(\v_t,U)\operatorname{Cov}(\v_t,U)^{\!\top}/\operatorname{Var}(U) \).
Taking the trace of that residual covariance,
\begin{equation*}
    R(\theta)
    =
    \operatorname{tr}\Sigma_v-\frac{\|\operatorname{Cov}(\v_t,U)\|^2}{\operatorname{Var}(U)}
    =
    6-\frac{16+4\theta^2}{4+2\theta^2}
    =
    \frac{4+4\theta^2}{2+\theta^2},
    \qquad
    \nabla_\theta R(\theta)=\frac{8\theta}{(2+\theta^2)^2},
\end{equation*}
which is nonzero for \( \theta\neq0 \). The residual variance therefore does depend on the learned
conditioning, and without stop-gradient the FM/CFM gradient equivalence fails.

The example also shows the \emph{direction} of this dependence. \(R\) is minimized at \(\theta=0\), where \(c_\theta=x_1\) and \(U=v_{t,1}\), the highest-variance coordinate of \(v_t\). Thus, in this toy problem, the additional FM term favors conditioning directions that explain more raw target variance. For images, this motivates the concern that, unless the contrastive objective explicitly constrains the representation away from such directions, the FM term can favor low-level variance that the representation objective is intended to discard.

\FloatBarrier

\section{Experiment details}
\label{app:training}

This appendix gives the training and evaluation setup behind Section~\ref{sec:experiments}.
Table~\ref{tab:app_current_run} summarizes the main hyperparameters at both scales. Unless an
experiment states otherwise, the settings below are the defaults for every ImageNet-1K
\citep{deng2009imagenet,imagenet15russakovsky} run.

All experiments are run on nodes of 8 NVIDIA H200 GPUs. To speed up experiments, we train on up to 3 nodes for B/16 and 6 nodes for L/16.

\subsection{Training setup}

We follow JiT \citep{li2025jit} for the generative hyperparameters and for
\( \L_{\mathrm{CFM}} \), including the optimizer, the augmentation policy of the original image and the flow
matching loss. We depart from it in two places, both chosen once and applied to every image generator in the
comparison, baselines included: the base learning rate, which is tuned to \( 1\cdot10^{-4} \); and the attention residuals of Appendix~\ref{app:arch}.
Since every arm trains at a global batch of \( 1024 \), the learning rate is
\( 4\cdot10^{-4} \).
For the contrastive hyperparameters and the augmentation policy of the augmented images we follow SogCLR \citep{yuan_provable_2022}.

The RCG baseline is our own re-implementation on this backbone and conditions on DINOv2 \citep{oquab2023dinov2} embeddings. For its prior, we use the architecture of their official implementation\footnote{\url{https://github.com/LTH14/rcg}}, a residual MLP with \( 12 \) blocks of width \( 1536 \), trained for \( 200 \) epochs with our flow-matching objective and prior timestep schedule. For a fair comparison, we also ran RCG with our prior, which performed worse.

\paragraph{Augmentations.} We use \( k_O = 1 \) original image and \( k_A = 2 \) augmented images. The
original image follows JiT's preprocessing, a center crop to the training resolution and a random horizontal
flip with probability \( 0.5 \). It is the only input the generative path sees and it is also one
contrastive anchor. The two augmented images are used by the representation path alone and follow the
SimCLR policy \citep{chen_simple_2020,grill2020bootstrap}: random resized crop with scale in
\( [0.08, 1.0] \), color jitter of strength \( (0.4, 0.4, 0.2, 0.1) \) applied with probability
\( 0.8 \), random grayscale with probability \( 0.2 \), Gaussian blur with \( \sigma \) uniform in
\( [0.1, 2.0] \), and a random horizontal flip. This is what exposes the shared encoder to both
regimes every step, noised original images for generation
and clean augmented ones for the invariances.

\paragraph{Timestep sampling.} For both flow matching losses we sample \( t \) from the
logit-normal schedule \citep{esser2024scaling}: draw \( s \sim \N(\mu, \sigma^2) \) and set
\( t = \mathrm{sigmoid}(s) \), which we write \( \logit(t) \sim \N(\mu, \sigma^2) \). The image path uses JiT's default \( (\mu, \sigma) = (-0.8, 0.8) \), and the prior
path \( (\mu', \sigma') = (-1.2, 0.6) \), the latter selected on a frozen-prior grid and applied to the RCG baseline
as well.

\paragraph{Losses.} The contrastive objective is the multi-view form of the Global Contrastive
Loss \citep{yuan_provable_2022}, with fixed temperature \( \tau = 0.15 \), moving-average parameter
\( \gamma = 0.9 \), and negatives gathered across GPUs. The generative objective is the conditional
flow matching loss \citep{lipman_flow_2023} with \( \x \)-prediction and the velocity
parameterization of JiT \citep{li2025jit}. We clamp the \( 1-t \) denominator at \( \max(1-t, 0.05) \) for numerical stability near \( t=1 \). The weight \( \lambda \) on
\( \L_{\mathrm{CL}} \) comes from the balancer of Section~\ref{sec:method:weighting} and
Algorithm~\ref{alg:balancer}: gradient norms are read on the feed-forward submodule of the last
encoder block, the estimate is refreshed every \( 10 \) steps, and the two moving averages use a
decay ramped linearly from \( 0.9 \) to \( 0.99 \) over the first \( 5000 \) steps. Their ratio is
clamped to \( [10^{-3}, 1] \) and multiplied by \( \lambda_0 = 1 \). The CBC weight is fixed at
\( \eta = 0.01 \) and is not balanced. \emph{None of these constants are tuned}.

\paragraph{Optimization.} At B/16 we train with AdamW~\citep{Loshchilov2019}, \( (\beta_1, \beta_2) = (0.9, 0.95) \), in
bfloat16 mixed precision, under a constant learning rate with a linear warmup over \( 5 \) epochs
and weight decay \( 0 \). At L/16 the parameters split across two optimizers: Muon~\citep{jordan2024muon} takes the
two-dimensional parameters inside the transformer blocks, at learning rate \( 2\cdot10^{-2} \) and
weight decay \( 5\cdot10^{-4} \), and everything else stays on AdamW at the same learning rate and
weight decay as at B/16. Moreover, for L/16, we use a \textsc{BatchNorm}~\citep{ioffe2015batch} after the \cls{} token at the encoder-decoder split. We keep two exponential moving averages of the weights, with
decays \( 0.9999 \) and \( 0.9996 \).

\begin{table}[t]
\caption{\textbf{Main hyperparameters} for the reported \methodname{} (B/16) and \methodname{} (L/16)
ImageNet-1K \( 256\times256 \) runs. Rows spanning both columns are identical at the two scales.
The baselines share this recipe apart from the objective and, for REPA and iREPA, the
\( 4/8 \) encoder--decoder split, following their published split; see
Appendix~\ref{app:training}. Guidance follows \citet{ho2022classifier} on the interval of \citet{kynkaanniemi2024applying}.}
\label{tab:app_current_run}
\vspace{1pt}
\centering
\tablestyle{4pt}{1.02}
\begin{tabular}{l | x{88} x{142}}
 & \textbf{\methodname{} (B/16)} & \textbf{\methodname{} (L/16)} \\
\shline
\rowcolor[gray]{0.9}\multicolumn{3}{l}{\textbf{Architecture}} \\
Depth: total / enc. / dec. & 12, 8, 4 & 24, 16, 8 \\
Hidden dim & 768 & 1024 \\
Heads & 12 & 16 \\
In-context start block & 4 & 8 \\
\cls{} normalization & identity & BatchNorm \\
Image size & \multicolumn{2}{c}{256} \\
Patch size & \multicolumn{2}{c}{image size / 16} \\
Patch bottleneck & \multicolumn{2}{c}{128} \\
In-context tokens & \multicolumn{2}{c}{32} \\
\shline
\rowcolor[gray]{0.9}\multicolumn{3}{l}{\textbf{Training}} \\
Epochs & 400 (800 system-level) & 400 \\
Optimizer & AdamW & Muon (block matrices) $+$ AdamW \\
Learning rate & \( 4\cdot10^{-4} \) & \( 2\cdot10^{-2} \) Muon, \( 4\cdot10^{-4} \) AdamW \\
Weight decay & 0.0 & \( 5\cdot10^{-4} \) Muon, 0.0 AdamW \\
LR schedule & \multicolumn{2}{c}{constant} \\
Warmup epochs & \multicolumn{2}{c}{5} \\
AdamW \( (\beta_1,\beta_2) \) & \multicolumn{2}{c}{(0.9, 0.95)} \\
Global batch & \multicolumn{2}{c}{1024} \\
Conditioning drop prob. & \multicolumn{2}{c}{10\%} \\
Number augmented views & \multicolumn{2}{c}{2} \\
EMA decays & \multicolumn{2}{c}{0.9999 (EMA1), 0.9996 (EMA2)} \\
\( \logit(t) \) schedule & \multicolumn{2}{c}{\( \N(-0.8, 0.8^2) \)} \\
Prior \( \logit(t) \) schedule & \multicolumn{2}{c}{\( \N(-1.2, 0.6^2) \)} \\
CL dimension & \multicolumn{2}{c}{256} \\
CL temperature / \( \gamma \) & \multicolumn{2}{c}{0.15, 0.9} \\
CBC weight \( \eta \) & \multicolumn{2}{c}{0.01} \\
Gradient-ratio \( \lambda_0 \) & \multicolumn{2}{c}{1} \\
Gradient-ratio update & \multicolumn{2}{c}{every 10 steps} \\
Gradient-ratio EMA warmup & \multicolumn{2}{c}{0.9 to 0.99 over 5000 steps} \\
Gradient-ratio clamp & \multicolumn{2}{c}{\( [10^{-3}, 1] \)} \\
\( t \)-division clamp & \multicolumn{2}{c}{0.05} \\
\midline
\rowcolor[gray]{0.9}\multicolumn{3}{l}{\textbf{Sampling}} \\
ODE solver / steps & \multicolumn{2}{c}{Heun, 50} \\
Sampling \( t \) schedule & \multicolumn{2}{c}{linear in \( [0.0, 1.0] \)} \\
CFG interval & \multicolumn{2}{c}{\( [0.1, 1.0] \)} \\
CFG sweep start / step & \multicolumn{2}{c}{from 1.0, step 0.1, extended while FID improves} \\
CFG sweep images & \multicolumn{2}{c}{8000} \\
\end{tabular}
\end{table}

\subsection{Evaluation}
\label{app:evaluation}

\paragraph{Checkpoint selection.} All metrics are computed on EMA weights.
Section~\ref{sec:exp_main} reports the \( 400 \)-epoch checkpoint, the budget at which every
baseline stops, Section~\ref{sec:exp_ablation} reports \( 200 \) epochs, and
Section~\ref{sec:exp_system} reports \methodname{} (B/16) at \( 800 \) epochs and \methodname{} (L/16) at
\( 400 \). We evaluate the EMA with decay \( 0.9999 \), except for the system-level comparison, where we evaluate both exponential moving
averages and select the \( 0.9996 \) one only for \methodname{} (L/16).

\paragraph{Generation.} All generation numbers use \( 50 \)K samples drawn with a Heun ODE solver
\citep{heun1900neue} with \( 50 \) steps (49 Heun steps and a final Euler step). For the class-conditional baselines the
batch is \( 50 \) samples for each of the \( 1000 \) ImageNet classes. \methodname{} is
class-unconditional and does not use labels for generation either. Instead, we unconditionally sample from the learned prior \( e_\omega \) the conditioning vectors for the generator. Thus, its batch carries no class structure by construction. Where guidance is reported,
it is applied on the interval \( t \in [0.1, 1.0] \) \citep{kynkaanniemi2024applying} at a scale
swept from \( 1.0 \) in steps of \( 0.1 \) on \( 8000 \) samples, continuing while
FID keeps improving.

\paragraph{Generation metrics.} We evaluate generation with the ADM suite
\citep{dhariwal2021diffusion}, using their reference batches and official
implementation.\footnote{\url{https://github.com/openai/guided-diffusion/tree/main/evaluations}}
All metrics are computed from Inception-V3 outputs on the \( 50 \)K generated images; all but the
Inception Score also read the suite's ImageNet \( 256\times256 \) reference batch.
\begin{itemize}
    \item \emph{FID} \citep{heusel2017gans} is the Fr\'echet distance between Gaussians fitted to
    the pooled Inception features of the generated and reference sets.

    \item \emph{sFID} \citep{nash2021generating} computes the same distance on flattened
    intermediate \emph{spatial} features instead of pooled ones, making it more sensitive to errors in spatial structure.

    \item \emph{IS} \citep{salimans2016improved} is the exponential of the average KL divergence
    between the class distribution the same Inception-V3 predicts for each generated image and the
    marginal over all of them. It rises when samples are individually classifiable and collectively
    spread across classes, and it is the one metric here computed from the generated set alone.

    \item \emph{Precision} and \emph{Recall} \citep{Kynkaanniemi2019} estimate the support of each
    distribution with a \( k \)-nearest-neighbor manifold (\( k=3 \)). Precision is the fraction
    of \emph{generated} images inside the reference manifold and measures fidelity. Recall is the
    fraction of \emph{reference} images inside the generated manifold and measures coverage. Since
    guidance typically buys fidelity at the cost of coverage, the two should be read together.
\end{itemize}

Not all metrics see the same reference set. FID and sFID are computed against first-
and second-moment statistics precomputed over the full reference dataset, while precision and
recall are estimated against the \( 10K \) reference images distributed with the batch, not against the full dataset.

\paragraph{Representation.} Both probes freeze the network and read one vector per image. For
\methodname{} that vector is the \cls{} token at the encoder/decoder split, taken before the
contrastive projection head. The baselines carry no \cls{} token, so for them it is the mean of the
patch tokens, excluding the in-context conditioning tokens, at their own split. Vanilla
JiT and the RCG generator have no split and are read after their last block. The contrastive-only
controls carry a \cls{} token and are read like \methodname{}.

The ImageNet probe (\emph{Acc.}) fits an \( \ell_2 \)-regularized multinomial logistic regression
on the raw features, with no augmentation: images are resized to \( 292 \)
with bicubic interpolation and center-cropped to \( 256 \). The regularization strength is chosen
over \( 45 \) logarithmically spaced values between \( 10^{-6} \) and \( 10^{5} \) on a held-out
\( 10\% \) of the training split, the classifier is then refit on the full training split at that
value, and we report top-1 accuracy on the validation split.

The transfer probe (\emph{transfer-8}) evaluates out-of-domain transfer on Food-101
\citep{bossard14}, CIFAR-10 and CIFAR-100 \citep{krizhevskyLearningMultipleLayers2009}, Stanford
Cars \citep{Krause2013CollectingAL}, Describable Textures \citep{cimpoi14describing}, Oxford-IIIT
Pets \citep{parkhi12a}, Caltech-101 \citep{li_andreeto_ranzato_perona_2022} and Oxford 102 Flowers
\citep{Nilsback08}, and reports their unweighted mean. We follow the evaluation protocol of each
dataset's own paper, so mean per-class accuracy for Caltech-101, Oxford-IIIT Pets and Oxford 102
Flowers and top-1 accuracy for the rest. Features are \( \ell_2 \)-normalized and
standardized. We then fit an \( \ell_2 \)-regularized multinomial logistic regression with the
L-BFGS solver of scikit-learn \citep{scikit-learn}. Training images are center-cropped to the
training resolution of \( 256 \) with a random horizontal flip, and the flip is removed at test
time. We report the best test accuracy over
\( 10 \) logarithmically spaced regularization values between \( 10^{-3} \) and \( 1 \).

\FloatBarrier

\section{Architecture details}
\label{app:arch}

\methodname{} is a JiT-style pixel-space transformer \citep{li2025jit} with three changes. First,
we split the transformer depth into a shared encoder \( f_\theta \) and a generative decoder
\( g_\phi \), two contiguous portions of one ViT-style backbone \citep{dosovitskiy2021an} with
total depth \( L = L_{\mathrm{enc}} + L_{\mathrm{dec}} \). Second, we prepend a learnable \cls{}
token and read it out at the encoder/decoder boundary as the global embedding \( \e \). Third, we
attach a contrastive projection head \( h_\psi \) to that embedding while the patch tokens continue
through the decoder for flow matching. Additionally, we condition the generative path on \( \sg(\e) \)
and learn a prior \( e_\omega \) \citep{NEURIPS2024_e304d374} over these embeddings for
class-unconditional sampling. Table~\ref{tab:components} lists each component with its input and
output spaces.

All remaining architectural choices follow JiT, except for the attention residuals below, which every model we train uses. For completeness, we go into each component in the following paragraphs.

\paragraph{Transformer block.} Each block uses pre-\textsc{RMSNorm} \citep{zhang2019root},
multi-head self-attention with QK-norm \citep{henry2020query} and two-dimensional rotary positional
embeddings \citep{su2024roformer}, then a second pre-\textsc{RMSNorm} and a \textsc{SwiGLU}
feed-forward network \citep{shazeer2020glu} whose inner width is \( \tfrac{8}{3}d \), an MLP ratio
of \( 4 \) under \textsc{SwiGLU}'s two-thirds rescaling. As in DiT and JiT, both
residual branches are modulated by adaLN-Zero \citep{peebles_scalable_2023}: a linear projection of
the conditioning vector produces shift, scale and gate parameters for the two sublayers, and the
modulation layers are zero-initialized so each branch begins near the identity.

\paragraph{Attention residuals.} Applied to every model we train in Section~\ref{sec:experiments}, ours and all baselines, attention residuals \citep{kimiteam2026attentionresiduals} replace the plain residual add of each sublayer. The sublayers stay pre-norm, since what changes is the residual stream and not where the normalization sits. Instead of reading the running residual sum, a sublayer
reads a learned softmax-weighted mixture over its depth history and appends its own output to that history: the keys are the normalized history entries, and the query is a learned vector per token role (patch, \cls{} or in-context tokens). Writing \( \vs_1, \dots, \vs_n \in \R^{d} \) for the entries at one token position of role \( \rho \), that mixture is
\begin{equation}\label{eq:attnres}
    \hat{\vs} \;=\; \sum_{i=1}^{n} \alpha_i\, \vs_i,
    \qquad
    \alpha_i \;=\; \frac{\exp \big\langle \vq_\rho,\, \mathrm{RMSNorm}(\vs_i) \big\rangle}
                        {\sum_{l=1}^{n} \exp \big\langle \vq_\rho,\, \mathrm{RMSNorm}(\vs_l) \big\rangle},
\end{equation}
a softmax over depth with no \( 1/\sqrt{d} \) scaling. A history entry is the sum of four
consecutive sublayer outputs, which bounds the number of entries the mixture runs over.
The queries are zero-initialized, so the mixture is uniform at initialization.

\paragraph{Patch and positional embeddings.} Images are tokenized by a low-rank linear patch
embedder \citep{li2025jit}: a \( p \times p \) strided convolution maps non-overlapping RGB patches
into a bottleneck of width \( 128 \), followed by a \( 1\times1 \) convolution to the transformer
width \( d \). Two positional signals are used and they act at different places. Fixed
two-dimensional sine-cosine embeddings \citep{NIPS2017_3f5ee243} are added to the patch tokens
only, before the \cls{} and in-context tokens are concatenated. The rotary embeddings act inside
attention, on the queries and keys after QK-norm, with an identity rotation at the non-patch
positions so that the \cls{} and in-context tokens carry no spatial position.
The final layer applies \textsc{RMSNorm}, adaLN modulation and a zero-initialized linear projection to
\( p^2 c_{\mathrm{out}} \) values per patch, which are unpatchified into an image.

\paragraph{Conditioning.} The timestep is embedded with sinusoidal features followed by a two-layer
\textsc{SiLU} \citep{elfwing2018sigmoid} MLP. The conditioning vector \( \c \) passes through a
linear projection and then enters the network twice. It is added to the timestep embedding that
drives the adaLN-Zero modulation of every block, and the same projected vector is also injected as
\( 32 \) in-context tokens at a fixed block, following the design of JiT and
MAR \citep{li2025jit,li2024autoregressive}. These tokens are added on the fourth block for B/16 and the eighth for L/16. The in-context
tokens are removed before the final image projection. On the representation pass the conditioning
input is the learned null \( \e_\varnothing \), and classifier-free guidance uses a second learned
null \( \c_\varnothing \).

\paragraph{Encoder/decoder split.} At block \( L_{\mathrm{enc}} \) the \cls{} token is read out as
the global embedding. At B/16 the readout is the raw token; at
L/16 it passes through a \textsc{BatchNorm}~\citep{ioffe2015batch}.
The \cls{} token is dropped at the split
and never reaches the decoder, so the decoder is a function of the patch tokens \( \h \) alone.  Our
\methodname{} (B/16) splits \( (L_{\mathrm{enc}}, L_{\mathrm{dec}}) = (8,4) \) of \( 12 \) blocks at
width \( 768 \) with \( 12 \) heads, and \methodname{} (L/16) splits \( (16,8) \) of \( 24 \) at width
\( 1024 \) with \( 16 \) heads.

\paragraph{Contrastive head.} The projection head \( h_\psi \) is a two-hidden-layer MLP,
\( d \to 2048 \to 2048 \to 256 \), followed by \( \ell_2 \)-normalization. It is used by \( \L_{\mathrm{CL}} \) and by nothing else, so it is discarded after training.

\paragraph{Prior.} It is a residual MLP over a single vector with additive
timestep conditioning.
That is, a linear layer maps the noised embedding \( \estdt_{t'} \in \R^{d} \) to width \( C \). The
timestep is encoded with \( 256 \) sinusoidal features and a two-layer \textsc{SiLU} MLP to width
\( C \). The hidden vector then passes through \( N \) residual blocks, each applying
\textsc{RMSNorm}, a \textsc{SwiGLU} projection to the block width, an additive linear projection of
the timestep embedding, and a linear projection back to \( C \). A final \textsc{RMSNorm},
\textsc{SiLU} and linear layer return to \( \R^{d} \). Every block's output projection and the
final projection are zero-initialized, so each block is the identity and the prior predicts zero at
initialization. The prior uses additive residual connections and it is
trained with the same \( \x \)-prediction flow matching parameterization as the image path, in
representation space. Table~\ref{tab:app_prior_arch} gives the configuration.

\begin{table}[h]
\caption{\textbf{Components of the shared model.} \( T \) is the number of patch tokens, \( d \)
the hidden width, \( d_z \) the contrastive projection dimension, and \( \c \) the conditioning
vector of Section~\ref{sec:method:arch}.}
\label{tab:components}
\vspace{1pt}
\centering\small
\setlength{\tabcolsep}{5pt}
\begin{tabular}{llll}
\toprule
Component & Symbol & Input & Output \\
\midrule
Shared encoder & \( f_\theta \) & \( (\x_t \mid t, \c) \in \R^{3\times H\times W}\times[0,1]\times\R^{d} \) & \( (\e,\h) \in \R^{d}\times\R^{T\times d} \) \\
\multicolumn{4}{l}{\small \emph{Noisy image, timestep and conditioning \(\mapsto\) global embedding \( \e \) and patch tokens \( \h \).}} \\
\midrule
Generative decoder & \( g_\phi \) & \( (\h \mid t, \c) \in \R^{T\times d}\times[0,1]\times\R^{d} \) & \( \x \in \R^{3\times H\times W} \) \\
\multicolumn{4}{l}{\small \emph{Patch tokens, timestep and conditioning \(\mapsto\) predicted clean image.}} \\
\midrule
Contrastive projection & \( h_\psi \) & \( \e \in \R^{d} \) & \( \z \in \R^{d_z} \) \\
\multicolumn{4}{l}{\small \emph{Global embedding \(\mapsto\) the space where \( \L_{\mathrm{CL}} \) is optimized.}} \\
\midrule
Embedding prior & \( e_\omega \) & \( (\estdt_{t'}, t') \in \R^{d} \times [0,1] \) & \( \estd \in \R^{d} \) \\
\multicolumn{4}{l}{\small \emph{Noised standardized embedding \(\mapsto\) its clean standardized counterpart.}} \\
\bottomrule
\end{tabular}
\end{table}

\begin{table}[ht]
\caption{\textbf{Prior architecture.} The residual-MLP backbone of \( e_\omega \), shared by both
scales; its input and output width follows the backbone, so the parameter count is the B/16 one.}
\label{tab:app_prior_arch}
\vspace{1pt}
\centering\small
\setlength{\tabcolsep}{6pt}
\begin{tabular}{lc}
\toprule
Config & Value \\
\midrule
Input / output width & representation dimension \( d \) \\
Hidden width \( C \) & 808 \\
Residual blocks \( N \) & 14 \\
Block \textsc{SwiGLU} width & 2155 \\
Timestep features & sinusoidal, width 256 \\
Timestep embedding & 808, added inside each block \\
Activation & \textsc{SiLU} \\
Normalization & \textsc{RMSNorm} \\
Output head & \textsc{RMSNorm}, \textsc{SiLU}, linear \\
Zero-initialized & block and final output projections \\
Dropout & 0.0 \\
Parameters (B/16) & 99.7M \\
\bottomrule
\end{tabular}
\end{table}

\FloatBarrier
\clearpage
\section{Qualitative results using learned representations}
\label{app:qualitative}

\methodname{} extracts its own semantic representations and uses them to condition generation.
We explore this connection through qualitative experiments using representations
produced by the model itself. First, we generate from real-image representations to examine
whether the outputs preserve the references' semantic content. We then interpolate between two
image representations to explore how the generated content changes along the resulting path.

All examples use \methodname{} (L/16) at $400$ epochs. Generated images use representation guidance with a scaling factor $s=2.0$ (see Equation~\ref{eq:cfg}).
Reference images come from the ImageNet training set, except flower buds in Figure~\ref{fig:showcase_flower_buds} which are external photographs.
Real references show the same $256\times256$ crops supplied to the encoder.
This section is meant to illustrate qualitative behavior rather than
quantify reconstruction accuracy or diversity.

\subsection{Generation from real-image representations}
\label{app:qualitative_real}

We encode each reference image and directly supply its representation to the generator,
bypassing the prior. The pairs in Figure~\ref{fig:showcase_real} illustrate preservation
of semantic content without exact reproduction of image details. Generated images retain
recognizable subjects and scene types, while pose, color, object markings, backgrounds, and
architectural details can change. For example, a cup retains its identity as a cup while its
decoration changes, and a building remains recognizable as the same type of structure despite
changes in its facade. Thus, the model's own representations support semantically related
generation without requiring the output to reconstruct the particular reference image.

\subsection{Interpolation between image representations}
\label{app:qualitative_interp}

We next interpolate between the representations of two real images and generate from the
intermediate vectors (Figures~\ref{fig:showcase_interp} and~\ref{fig:showcase_flower_buds}). We use spherical interpolation of
their directions and linear interpolation of their magnitudes in the generator's conditioning
space, bypassing the prior. Image-sampling noise is held fixed within each row, so changes
along a path reflect changes in the conditioning representation. In Figure~\ref{fig:showcase_interp}, the paths first
explore appearance and scene composition within a class, then connect different classes,
from two dog breeds to more distant concepts such as a bottle and a dog.
These cross-class paths reveal how structures from the endpoints can blend, although their
intermediate images need not depict plausible natural objects.

\label{app:qualitative_flower_buds}
Figure~\ref{fig:showcase_flower_buds} applies the same interpolation mechanism to five
selected pairs of bud and open-flower representations. Bud photographs were obtained
externally and were not part of the training set. Open-flower photographs come from the
ImageNet training set and were manually matched to the buds. Thus, the paired photographs may
depict different individual flowers. These are representation interpolations, not observed
growth sequences or predictions of a particular bud's future appearance. The selected paths
show opening-like changes in flower structure, while endpoint color and details can drift.
The six generated images include endpoint reconstructions at $\alpha=0$ and $1$;
these are distinct from the real photographs.

\clearpage

\begin{figure}[p]
\centering
\begingroup
\setlength{\tabcolsep}{1pt}

\begin{tabular}{@{}cc@{\hspace{7pt}}cc@{\hspace{7pt}}cc@{}}
\footnotesize Reference & \footnotesize Generated &
\footnotesize Reference & \footnotesize Generated &
\footnotesize Reference & \footnotesize Generated \\[3pt]
\includegraphics[width=.158\linewidth]{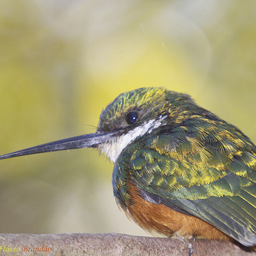} & \includegraphics[width=.158\linewidth]{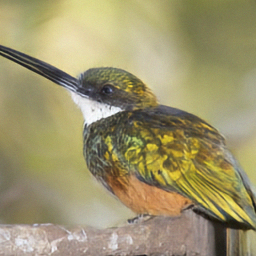} & \includegraphics[width=.158\linewidth]{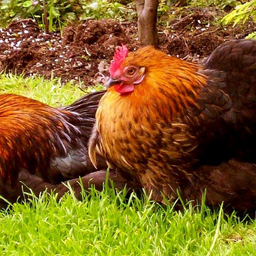} & \includegraphics[width=.158\linewidth]{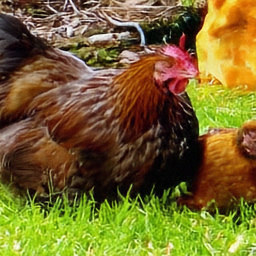} & \includegraphics[width=.158\linewidth]{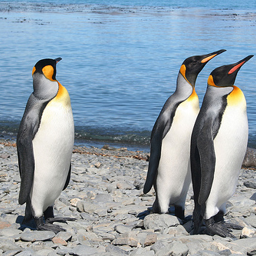} & \includegraphics[width=.158\linewidth]{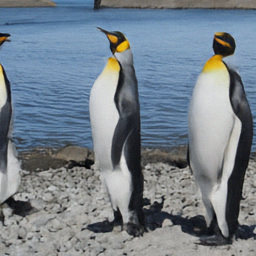} \\[3pt]
\includegraphics[width=.158\linewidth]{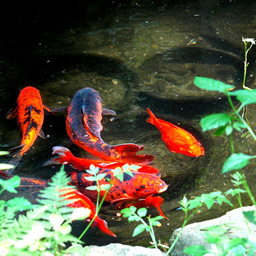} & \includegraphics[width=.158\linewidth]{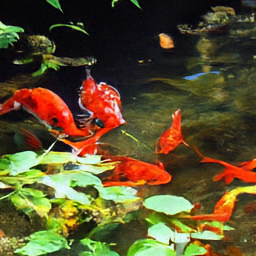} & \includegraphics[width=.158\linewidth]{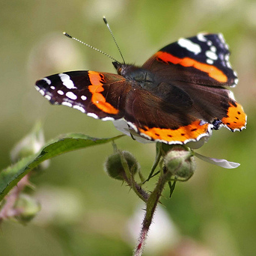} & \includegraphics[width=.158\linewidth]{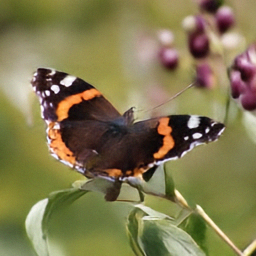} & \includegraphics[width=.158\linewidth]{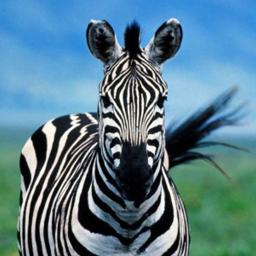} & \includegraphics[width=.158\linewidth]{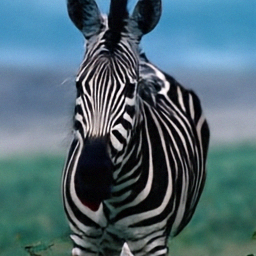} \\[3pt]
\includegraphics[width=.158\linewidth]{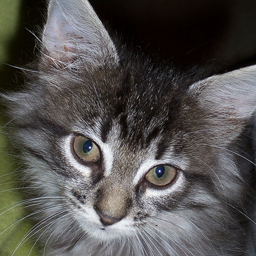} & \includegraphics[width=.158\linewidth]{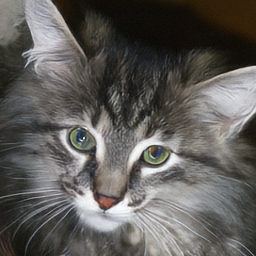} & \includegraphics[width=.158\linewidth]{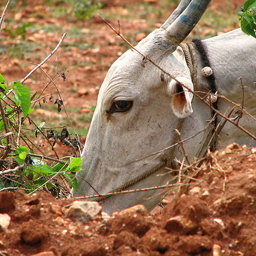} & \includegraphics[width=.158\linewidth]{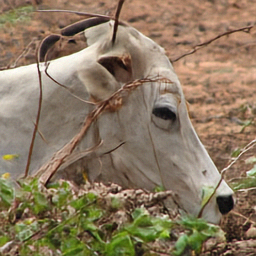} & \includegraphics[width=.158\linewidth]{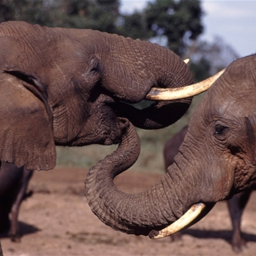} & \includegraphics[width=.158\linewidth]{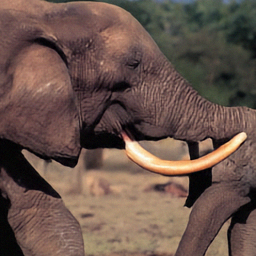} \\[3pt]
\includegraphics[width=.158\linewidth]{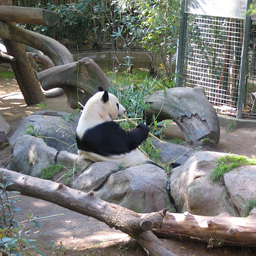} & \includegraphics[width=.158\linewidth]{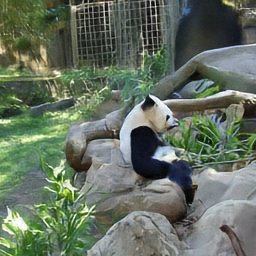} & \includegraphics[width=.158\linewidth]{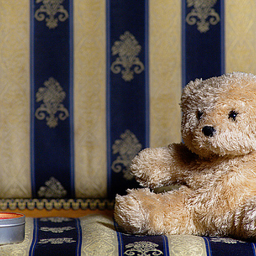} & \includegraphics[width=.158\linewidth]{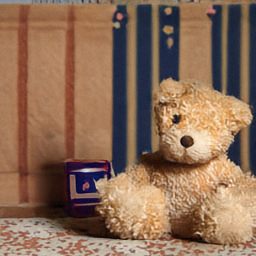} & \includegraphics[width=.158\linewidth]{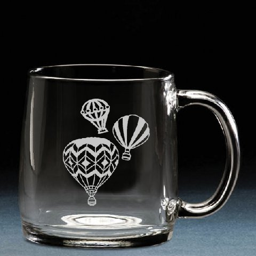} & \includegraphics[width=.158\linewidth]{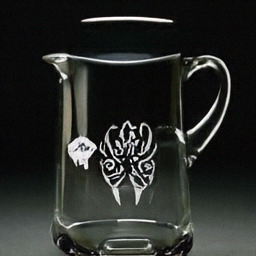} \\[3pt]
\includegraphics[width=.158\linewidth]{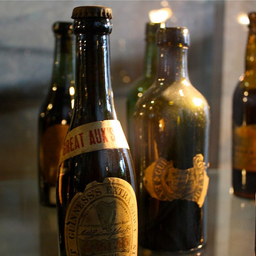} & \includegraphics[width=.158\linewidth]{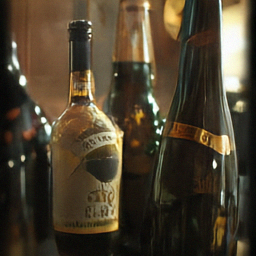} & \includegraphics[width=.158\linewidth]{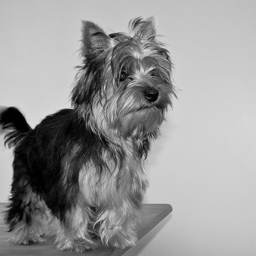} & \includegraphics[width=.158\linewidth]{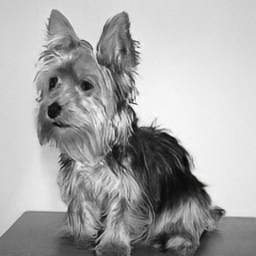} & \includegraphics[width=.158\linewidth]{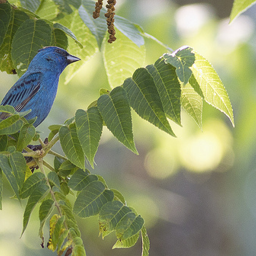} & \includegraphics[width=.158\linewidth]{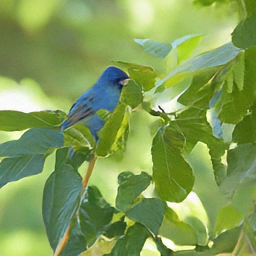} \\[3pt]
\includegraphics[width=.158\linewidth]{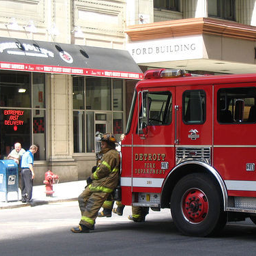} & \includegraphics[width=.158\linewidth]{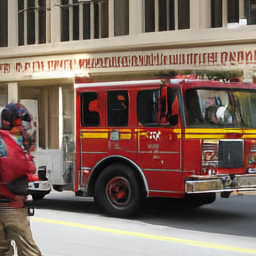} & \includegraphics[width=.158\linewidth]{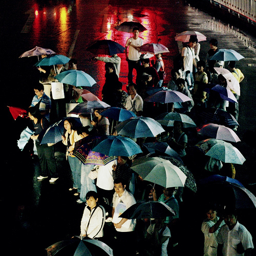} & \includegraphics[width=.158\linewidth]{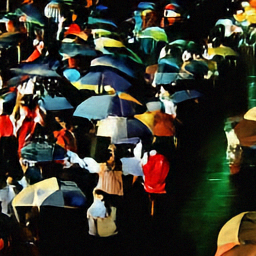} & \includegraphics[width=.158\linewidth]{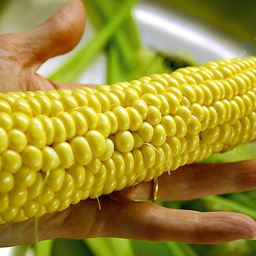} & \includegraphics[width=.158\linewidth]{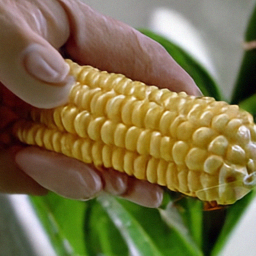} \\[3pt]
\includegraphics[width=.158\linewidth]{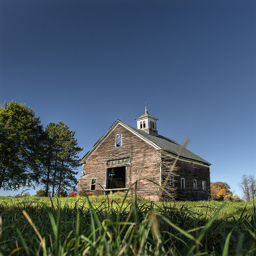} & \includegraphics[width=.158\linewidth]{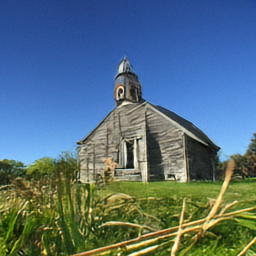} & \includegraphics[width=.158\linewidth]{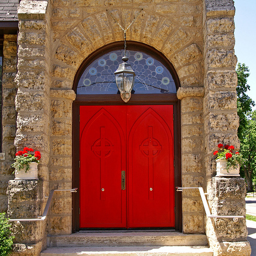} & \includegraphics[width=.158\linewidth]{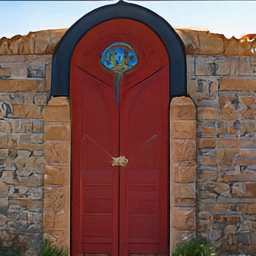} & \includegraphics[width=.158\linewidth]{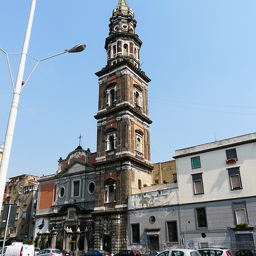} & \includegraphics[width=.158\linewidth]{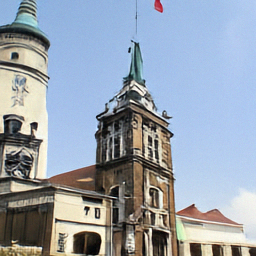} \\[3pt]
\includegraphics[width=.158\linewidth]{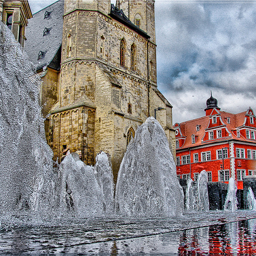} & \includegraphics[width=.158\linewidth]{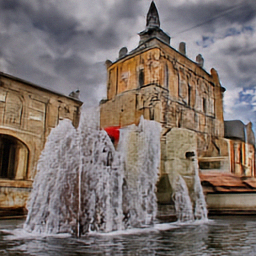} & \includegraphics[width=.158\linewidth]{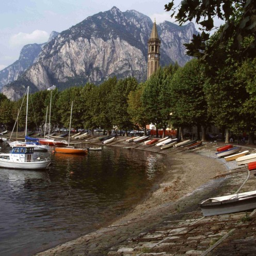} & \includegraphics[width=.158\linewidth]{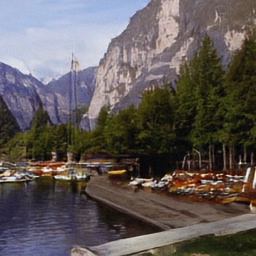} & \includegraphics[width=.158\linewidth]{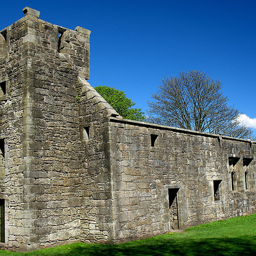} & \includegraphics[width=.158\linewidth]{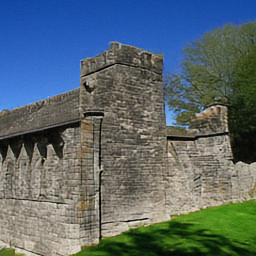} \\
\end{tabular}
\endgroup
\caption{\textbf{Preserving semantics while varying image details.} Each pair shows a real
reference and a generation conditioned on the representation extracted by
\methodname{} itself. Recognizable subjects and scene types persist across changes in pose,
color, markings, background, and structure. The generator starts from image noise and
bypasses the learned prior. All examples use \methodname{} (JiT-L/16) at $400$ epochs with
representation guidance.}
\label{fig:showcase_real}
\end{figure}

\clearpage

\begin{figure}[p]
\centering
\begingroup
\setlength{\tabcolsep}{1pt}

\begin{tabular}{@{}c@{\hspace{5pt}}ccccc@{\hspace{5pt}}c@{}}
\footnotesize Reference A & \multicolumn{5}{c}{\footnotesize Generated along the representation path} &
\footnotesize Reference B \\[2pt]
& \footnotesize $\alpha=0$ & \footnotesize $0.25$ & \footnotesize $0.50$ &
\footnotesize $0.75$ & \footnotesize $1$ & \\[3pt]
\includegraphics[width=.136\linewidth]{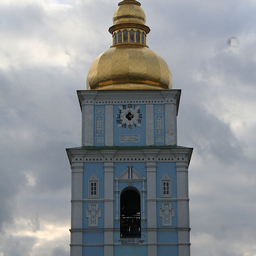} & \includegraphics[width=.136\linewidth]{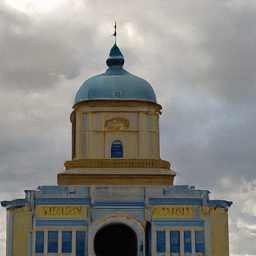} & \includegraphics[width=.136\linewidth]{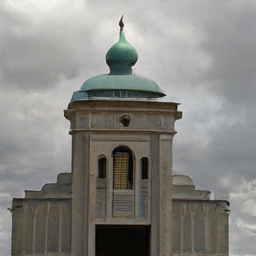} &
\includegraphics[width=.136\linewidth]{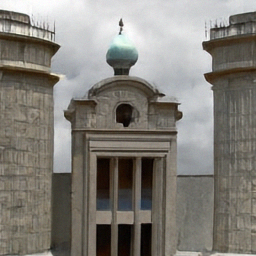} & \includegraphics[width=.136\linewidth]{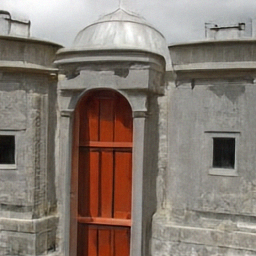} & \includegraphics[width=.136\linewidth]{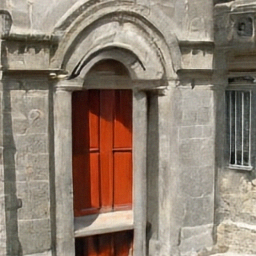} &
\includegraphics[width=.136\linewidth]{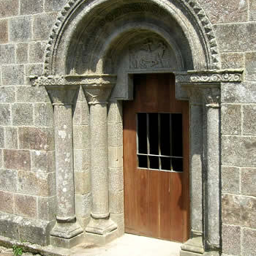} \\[4pt]
\includegraphics[width=.136\linewidth]{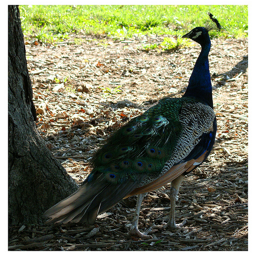} & \includegraphics[width=.136\linewidth]{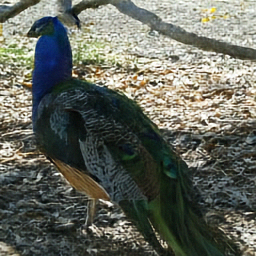} & \includegraphics[width=.136\linewidth]{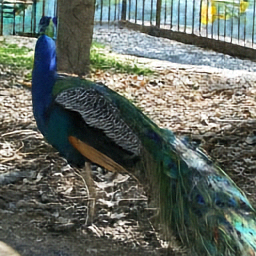} &
\includegraphics[width=.136\linewidth]{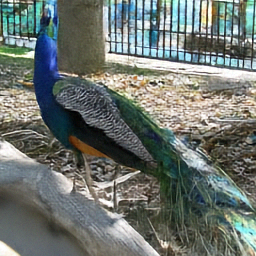} & \includegraphics[width=.136\linewidth]{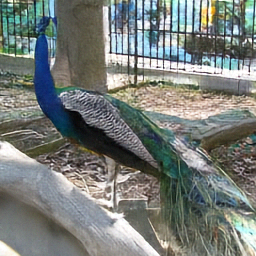} & \includegraphics[width=.136\linewidth]{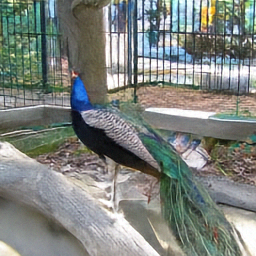} &
\includegraphics[width=.136\linewidth]{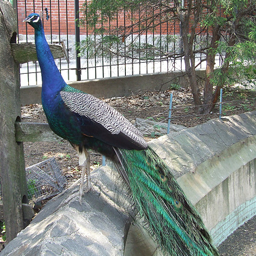} \\[4pt]
\includegraphics[width=.136\linewidth]{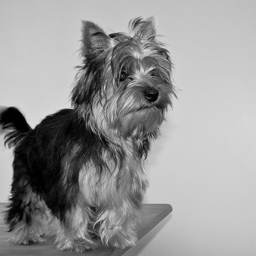} & \includegraphics[width=.136\linewidth]{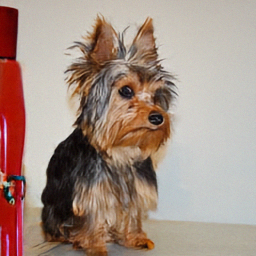} & \includegraphics[width=.136\linewidth]{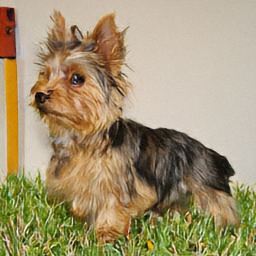} &
\includegraphics[width=.136\linewidth]{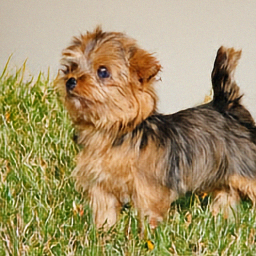} & \includegraphics[width=.136\linewidth]{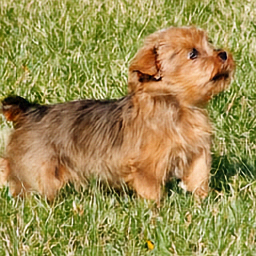} & \includegraphics[width=.136\linewidth]{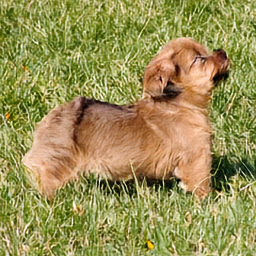} &
\includegraphics[width=.136\linewidth]{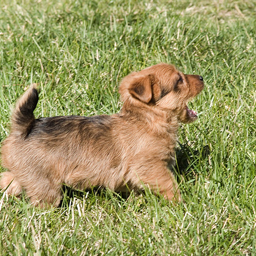} \\[4pt]
\includegraphics[width=.136\linewidth]{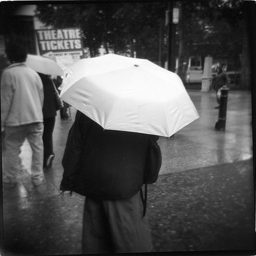} & \includegraphics[width=.136\linewidth]{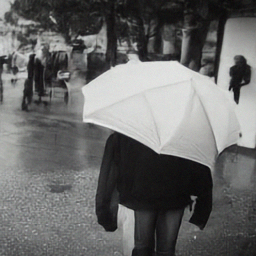} & \includegraphics[width=.136\linewidth]{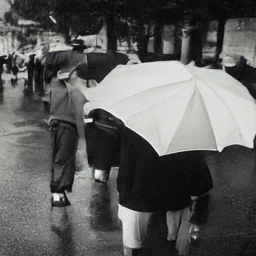} &
\includegraphics[width=.136\linewidth]{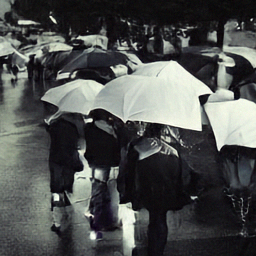} & \includegraphics[width=.136\linewidth]{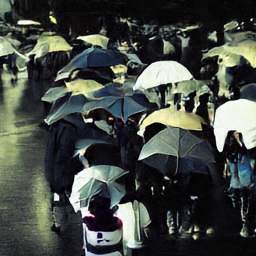} & \includegraphics[width=.136\linewidth]{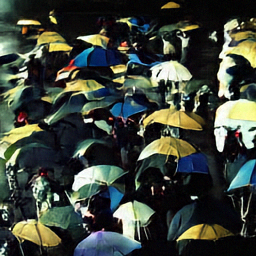} &
\includegraphics[width=.136\linewidth]{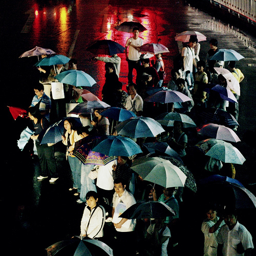} \\[4pt]
\includegraphics[width=.136\linewidth]{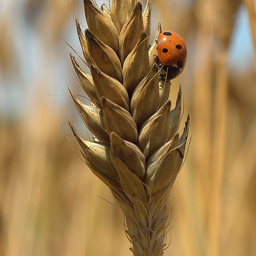} & \includegraphics[width=.136\linewidth]{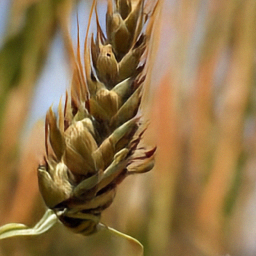} & \includegraphics[width=.136\linewidth]{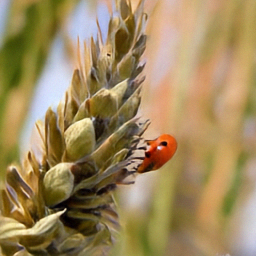} &
\includegraphics[width=.136\linewidth]{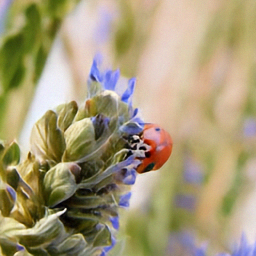} & \includegraphics[width=.136\linewidth]{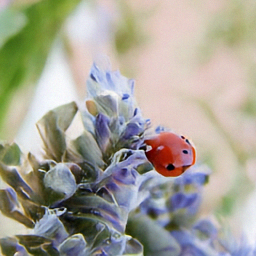} & \includegraphics[width=.136\linewidth]{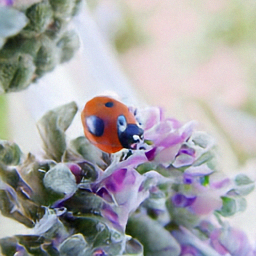} &
\includegraphics[width=.136\linewidth]{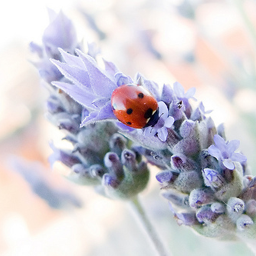} \\[4pt]
\includegraphics[width=.136\linewidth]{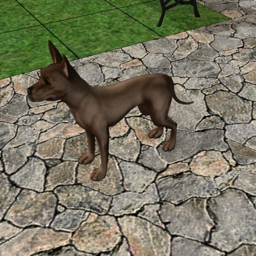} & \includegraphics[width=.136\linewidth]{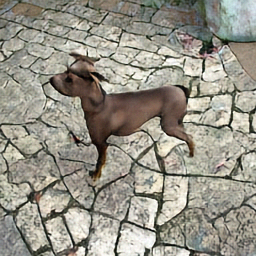} & \includegraphics[width=.136\linewidth]{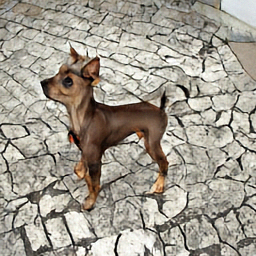} &
\includegraphics[width=.136\linewidth]{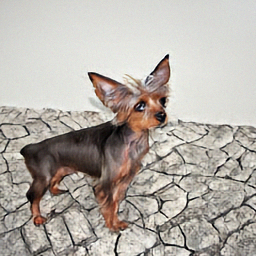} & \includegraphics[width=.136\linewidth]{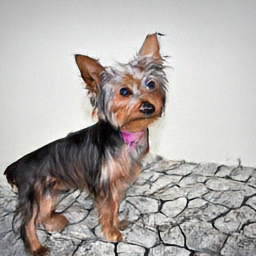} & \includegraphics[width=.136\linewidth]{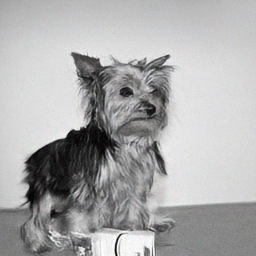} &
\includegraphics[width=.136\linewidth]{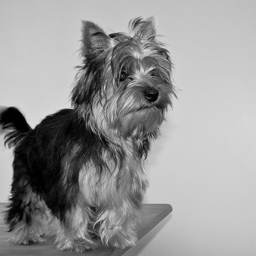} \\[4pt]
\includegraphics[width=.136\linewidth]{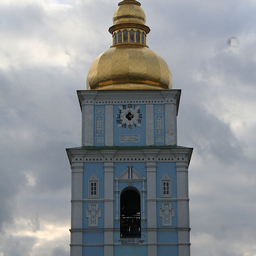} & \includegraphics[width=.136\linewidth]{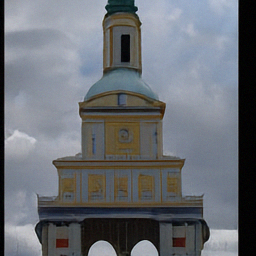} & \includegraphics[width=.136\linewidth]{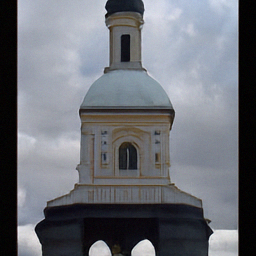} &
\includegraphics[width=.136\linewidth]{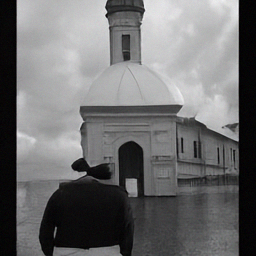} & \includegraphics[width=.136\linewidth]{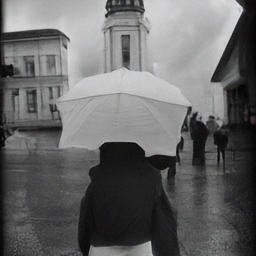} & \includegraphics[width=.136\linewidth]{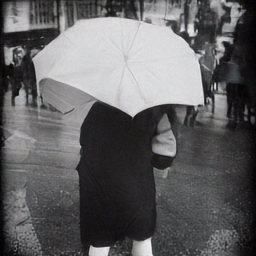} &
\includegraphics[width=.136\linewidth]{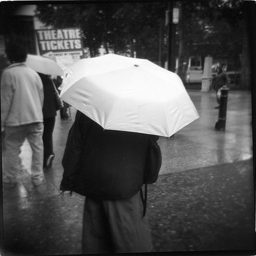} \\[4pt]
\includegraphics[width=.136\linewidth]{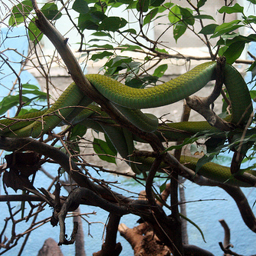} & \includegraphics[width=.136\linewidth]{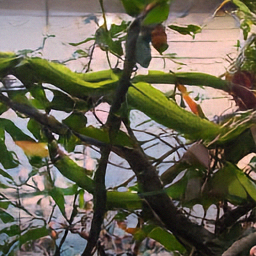} & \includegraphics[width=.136\linewidth]{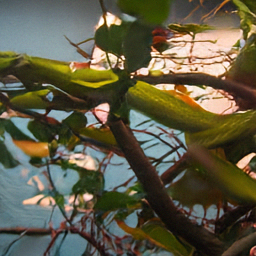} &
\includegraphics[width=.136\linewidth]{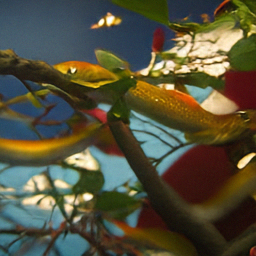} & \includegraphics[width=.136\linewidth]{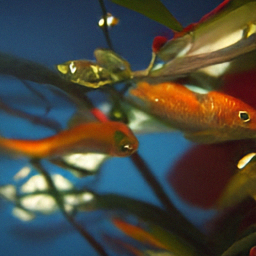} & \includegraphics[width=.136\linewidth]{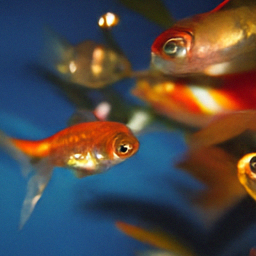} &
\includegraphics[width=.136\linewidth]{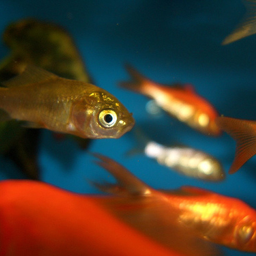} \\[4pt]
\includegraphics[width=.136\linewidth]{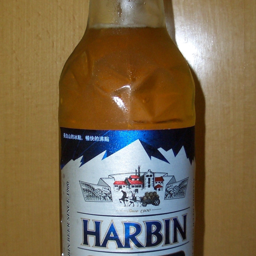} & \includegraphics[width=.136\linewidth]{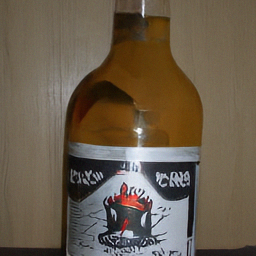} & \includegraphics[width=.136\linewidth]{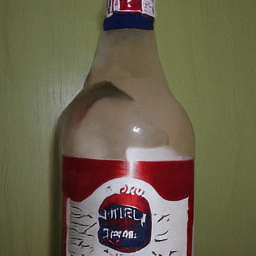} &
\includegraphics[width=.136\linewidth]{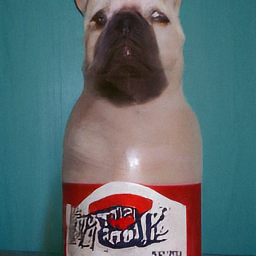} & \includegraphics[width=.136\linewidth]{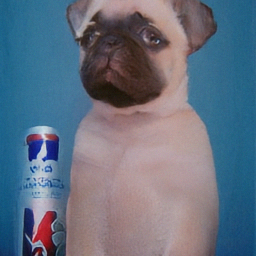} & \includegraphics[width=.136\linewidth]{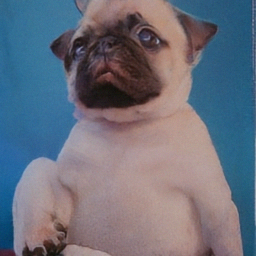} &
\includegraphics[width=.136\linewidth]{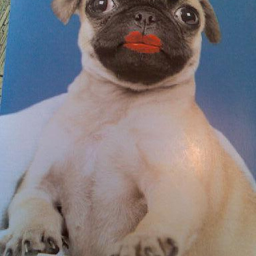} \\
\end{tabular}
\endgroup
\caption{\textbf{Interpolation in the learned conditioning space.} Real references appear at the outer
edges. All five interior images are generated from interpolated representations, with
$\alpha=0$ and $1$ using the endpoint representations directly. The first five rows connect
images from the same ImageNet class, while the last four connect different classes. Image noise is
fixed within each row. All paths use spherical interpolation, and \methodname{} (JiT-L/16) at $400$ epochs with representation guidance.}
\label{fig:showcase_interp}
\end{figure}

\clearpage

\begin{figure}[p]
\centering
\begingroup
\setlength{\tabcolsep}{1pt}

\begin{tabular}{@{}c@{\hspace{5pt}}cccccc@{\hspace{5pt}}c@{}}
\footnotesize Real bud & \multicolumn{6}{c}{\footnotesize Generated along the representation path} &
\footnotesize Real flower \\[2pt]
& \footnotesize $\alpha=0$ & \footnotesize $0.2$ & \footnotesize $0.4$ &
\footnotesize $0.6$ & \footnotesize $0.8$ & \footnotesize $1$ & \\[3pt]
\multicolumn{8}{@{}l}{\scriptsize (a) From a green bud to an open red-orange daisy} \\[1pt]
\includegraphics[width=.116\linewidth]{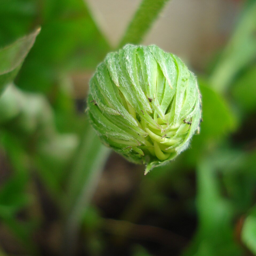} & \includegraphics[width=.116\linewidth]{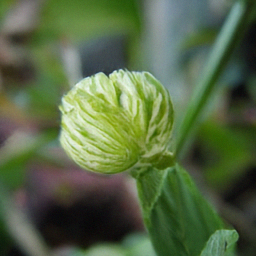} & \includegraphics[width=.116\linewidth]{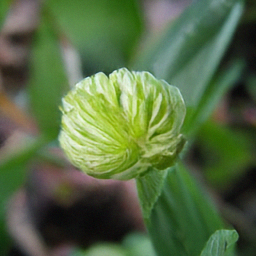} &
\includegraphics[width=.116\linewidth]{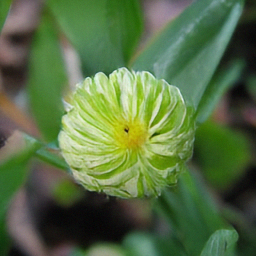} & \includegraphics[width=.116\linewidth]{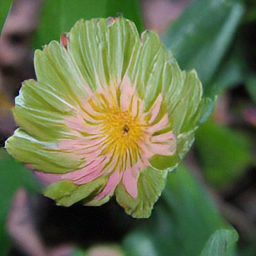} & \includegraphics[width=.116\linewidth]{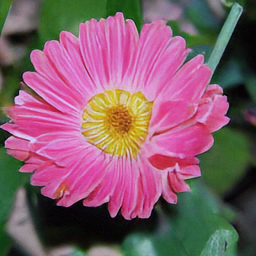} &
\includegraphics[width=.116\linewidth]{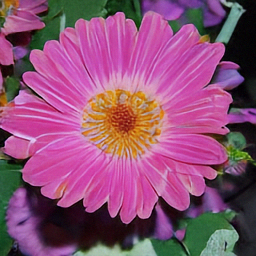} & \includegraphics[width=.116\linewidth]{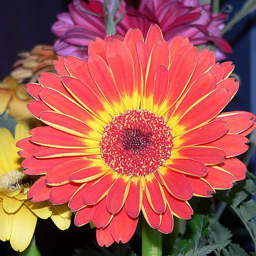} \\[4pt]
\multicolumn{8}{@{}l}{\scriptsize (b) From a purple-tipped bud to an open light-purple daisy} \\[1pt]
\includegraphics[width=.116\linewidth]{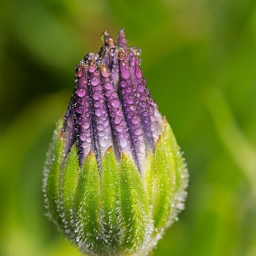} & \includegraphics[width=.116\linewidth]{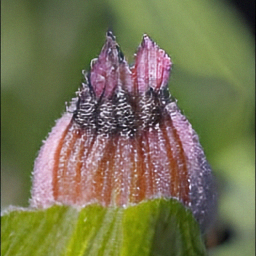} & \includegraphics[width=.116\linewidth]{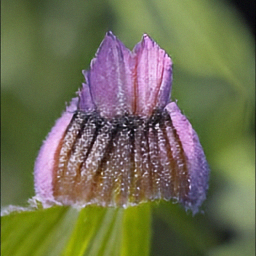} &
\includegraphics[width=.116\linewidth]{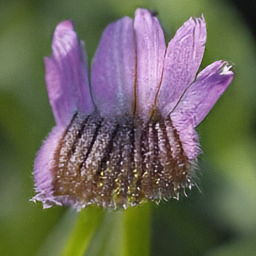} & \includegraphics[width=.116\linewidth]{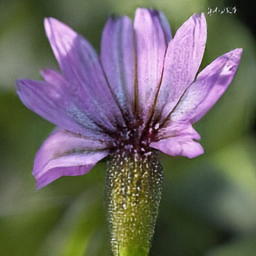} & \includegraphics[width=.116\linewidth]{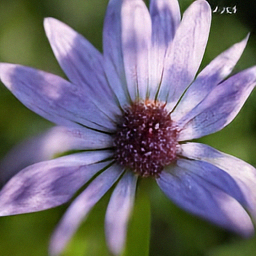} &
\includegraphics[width=.116\linewidth]{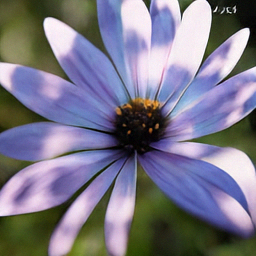} & \includegraphics[width=.116\linewidth]{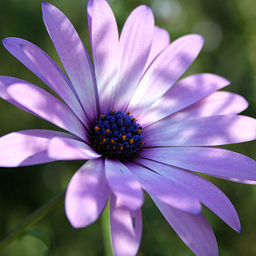} \\[4pt]
\multicolumn{8}{@{}l}{\scriptsize (c) From a partially closed flower to an open yellow lady's slipper} \\[1pt]
\includegraphics[width=.116\linewidth]{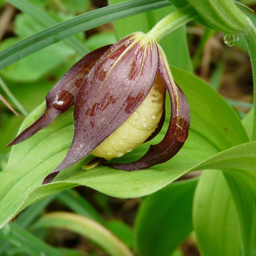} & \includegraphics[width=.116\linewidth]{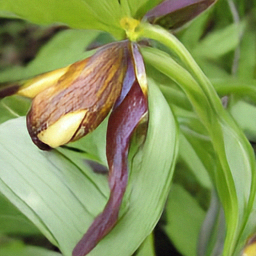} & \includegraphics[width=.116\linewidth]{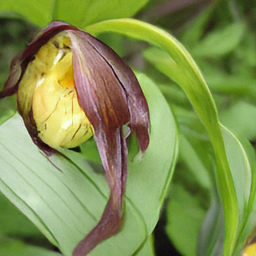} &
\includegraphics[width=.116\linewidth]{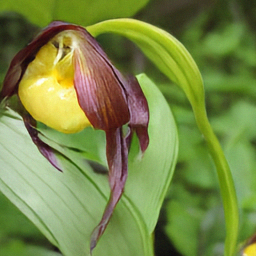} & \includegraphics[width=.116\linewidth]{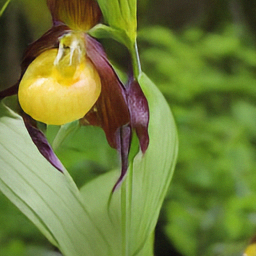} & \includegraphics[width=.116\linewidth]{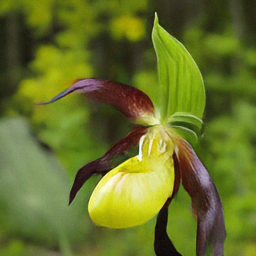} &
\includegraphics[width=.116\linewidth]{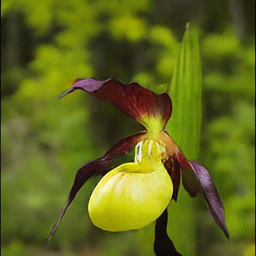} & \includegraphics[width=.116\linewidth]{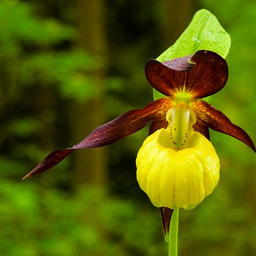} \\[4pt]
\multicolumn{8}{@{}l}{\scriptsize (d) From a pink bud to an open pink daisy} \\[1pt]
\includegraphics[width=.116\linewidth]{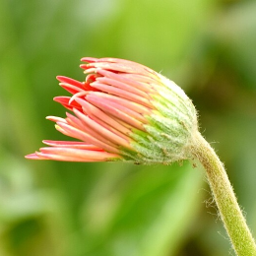} & \includegraphics[width=.116\linewidth]{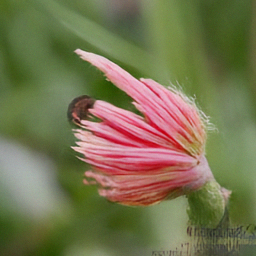} & \includegraphics[width=.116\linewidth]{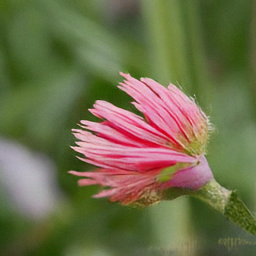} &
\includegraphics[width=.116\linewidth]{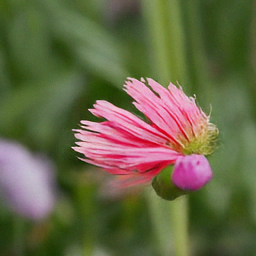} & \includegraphics[width=.116\linewidth]{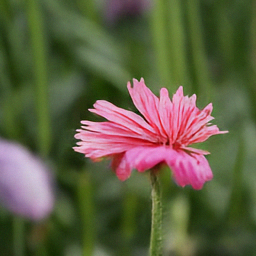} & \includegraphics[width=.116\linewidth]{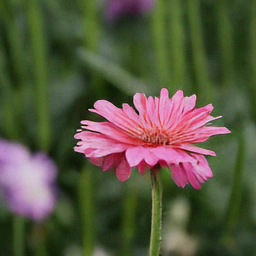} &
\includegraphics[width=.116\linewidth]{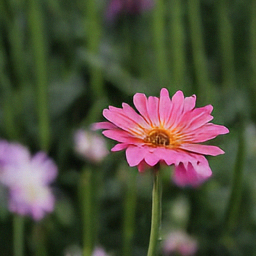} & \includegraphics[width=.116\linewidth]{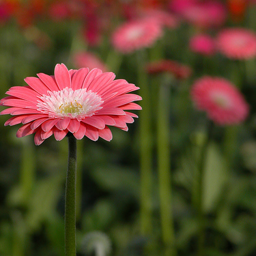} \\[4pt]
\multicolumn{8}{@{}l}{\scriptsize (e) From a purple-tipped bud to an open magenta daisy} \\[1pt]
\includegraphics[width=.116\linewidth]{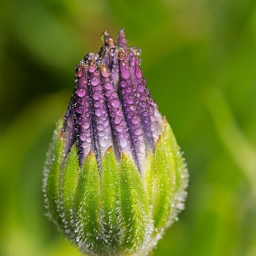} & \includegraphics[width=.116\linewidth]{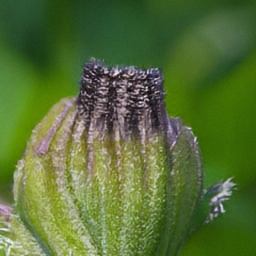} & \includegraphics[width=.116\linewidth]{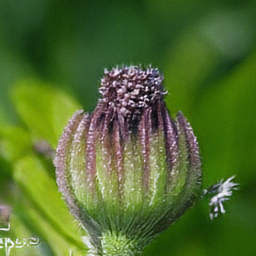} &
\includegraphics[width=.116\linewidth]{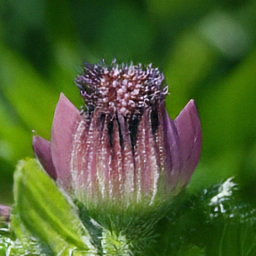} & \includegraphics[width=.116\linewidth]{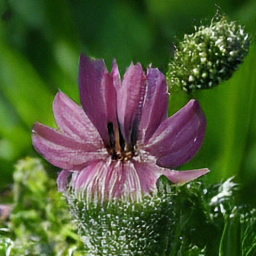} & \includegraphics[width=.116\linewidth]{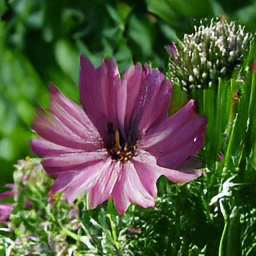} &
\includegraphics[width=.116\linewidth]{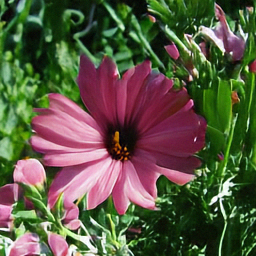} & \includegraphics[width=.116\linewidth]{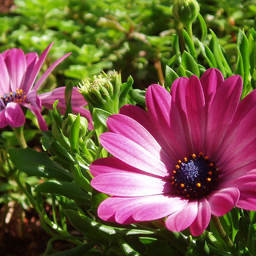} \\
\end{tabular}
\endgroup
\caption{\textbf{Flower-bud representation interpolations.}
These selected paths illustrate
representation changes rather than a recorded or predicted growth sequence.
Flower names for the right-hand references
follow their ImageNet class labels. Real photographs flank six generated images
at $\alpha=0,0.2,0.4,0.6,0.8,1$, including generated endpoint reconstructions.
All paths use spherical interpolation of the representations, fixed noise within each row, and \methodname{} (JiT-L/16) at $400$ epochs with representation guidance.}
\label{fig:showcase_flower_buds}
\end{figure}

\clearpage

\end{document}